\begin{document}

\title{A Deep Bayesian Nonparametric Framework for Robust Mutual Information Estimation}

\author{\name Forough Fazeliasl \email fazelias@ualberta.ca \\
       \addr Department of Mathematical and Statistical Sciences\\
       University of Alberta\\
       Edmonton, Canada
       \AND
       \name Michael Minyi Zhang \email mzhang18@hku.hk \\
       \addr Department of Statistics and Actuarial Science\\
       University of Hong Kong\\
       Hong Kong
       \AND
       \name Bei Jiang \email bei1@ualberta.ca \\
       \addr Department of Mathematical and Statistical Sciences\\
       University of Alberta\\
       Edmonton, Canada
       \AND
       \name Linglong Kong  \email lkong@ualberta.ca \\
       \addr Department of Mathematical and Statistical Sciences\\
       University of Alberta\\
       Edmonton, Canada}


\maketitle

\begin{abstract}
Mutual Information (MI) is a crucial measure for capturing dependencies between variables, yet exact computation is challenging in high dimensions with intractable likelihoods, impacting accuracy and robustness. One idea is to use an auxiliary neural network to train an MI estimator; however, methods based on the empirical distribution function (EDF) can introduce sharp fluctuations in the MI loss due to poor out-of-sample performance, thereby destabilizing convergence.
We present a Bayesian nonparametric (BNP) solution for training an MI estimator by constructing the MI loss with a finite representation of the Dirichlet process posterior to incorporate regularization in the training process. With this regularization, the MI loss integrates both prior knowledge and empirical data to reduce the loss sensitivity to fluctuations and outliers in the sample data, particularly in small sample settings like mini-batches. This approach addresses the challenge of balancing accuracy and low variance by effectively reducing variance, leading to stabilized and robust MI loss gradients during training and enhancing the convergence of the MI approximation while offering stronger theoretical guarantees for convergence. We explore the application of our estimator in maximizing MI between the data space and the latent space of a variational autoencoder. Experimental results demonstrate significant improvements in convergence over EDF-based methods, with applications across synthetic and real datasets, notably in 3D CT image generation—yielding enhanced structure discovery and reduced overfitting in data synthesis. While this paper focuses on generative models in application, the proposed estimator is not restricted to this setting and can be applied more broadly in various BNP learning procedures.

\end{abstract}

\begin{keywords}
  Dirichlet Process, Information theory, Variational bound, Neural estimator, Representation Learning, Bayesian nonparametric learning, Variance reduction, Regularization, Stability. 
\end{keywords}

\section{Introduction}
\label{sec:intro}
Mutual information inference is a statistical measure used
to quantify the dependency between random variables with wide applications in machine learning. It plays a crucial role in tasks such as feature selection, representation learning, clustering, and generative modeling. By capturing both linear and nonlinear dependencies, mutual information serves as a versatile metric in understanding and leveraging relationships in complex datasets.  However, computing MI in these applications faces significant challenges, particularly in the presence of high-dimensional or complex data structures. Mutual information is defined using the joint distribution of two random variables and the product of their marginal distributions, making its estimation especially difficult when these distributions are analytically intractable. 
While neural network-based estimators have shown promise in addressing some of these issues \citep{belghazi2018mutual,hjelm2018learning}, they remain prone to instability and sensitivity to sample variability and outliers, highlighting the need for more flexible and reliable approaches for mutual information estimation in machine learning tasks.

Recently, Bayesian nonparametric (BNP) techniques have proven effective in introducing uncertainty into modeling data distributions with minimal assumptions, leading to enhanced flexibility, robustness, and reduced sensitivity to sample variability in the machine learning community \citep{fazeli2023semi,dellaporta2022robust,fazeli2023bayesian,bariletto2024bayesian}. However, BNP approaches for estimating mutual information have received comparatively little attention. 
In this paper, we introduce a Dirichlet process (DP) prior on the data distribution and propose a deep BNP mutual information neural estimation (DPMINE) between two variables. This estimator serves as the Bayesian counterpart to the Kullback-Leibler (KL)-based and Jensen-Shannon (JS)-based MINE methods introduced in \cite{belghazi2018mutual} and \cite{hjelm2018learning}, respectively. The MINE is computed by training an auxiliary function, parameterized by a neural network, that approximates the intractable likelihoods of the joint distribution of the two variables and the product of their marginal distributions. 

Our DP-based approach in the proposed estimator employs a finite DP posterior representation in the loss function of the auxiliary function, making our estimator more robust to sample variability. It introduces distributional regularization through the incorporation of a prior distribution, thereby improving the efficiency and stability of the optimization process. This approach improves estimation stability by enhancing out-of-sample performance, addressing the curse of dimensionality and sensitivity to sample size, and providing robustness guarantees in high-dimensional settings. This makes it well-suited for integration into deep learning models, where a procedure with faster and robust convergence is needed for training the relevant cost functions regularized by this estimator.
In contrast, frequentist methods, which rely on the empirical distribution, 
may exhibit greater sensitivity to sample variability, potentially leading to less stable gradient computations during training, particularly with small batch sizes
or when dealing with highly heterogeneous data. This sensitivity can be interpreted as introducing variance in the estimation process. We present experiments comparing our DP approach to EDF-based methods (frequentist approaches) in mutual information estimation, demonstrating improved performance. Furthermore, our experiments highlight the robustness of the DPMINE in high-dimensional settings, showing better convergence and reduced estimation fluctuations.
In our results, we demonstrate that the DPMINE is theoretically stronger than MINE counterpart, particularly for the KL-based estimator. 

To further motivate the need for our KL-based Bayesian counterpart, we briefly highlight the limitations of existing MI estimators. JS-based estimators ensure stable convergence but often lack the precision needed to capture complex dependencies, while traditional KL-based estimators provide greater accuracy but suffer from poor convergence, limiting their practical use \citep{song2020understanding,poole2019variational}. Our KL-based Bayesian counterpart addresses these issues by combining the accuracy of KL with improved convergence properties. While its broader impact on tasks such as information bottleneck methods is beyond the scope of this paper, we focus specifically on its effect in improving the performance of a particular family of generative models where representation learning is essential.

We apply our proposed DPMINE to enhance the performance of a BNP VAE-GAN model, a fusion of Generative Adversarial Networks (GANs) and Variational Autoencoders (VAEs) \citep{fazeli2023bayesian}. 
A VAE-GAN model effectively addresses the persistent challenges of mode collapse in GANs and the generation of blurry images in VAEs, striking a balance between diversity and quality in the generated data \citep{larsen2016autoencoding, donahue2016adversarial, dumoulin2016adversarially, rosca2017variational, kwon2019generation}. 
This integration combines the high-quality outputs that GANs offer and the diversity of representations that VAEs offer, resulting in a superior generative model. However, to fully unlock the potential of VAE-GAN models, further exploration of the information encoded in the latent space is needed \citep{belghazi2018mutual,hjelm2018learning}. We aim to maximize the information between original and encoded samples, as well as between generated and encoded samples using our DPMINE.  
This incorporation of DPMINE in BNP generative modeling improves representation in the code space of the generative model and also helps preserve the information in the code space for generating new samples. 
Although this paper focuses on DPMINE's application in generative models, we emphasize that its use is not limited to such models and opens new opportunities for researchers interested in applying this estimator to various optimization problems within the BNP learning (BNPL) framework \citep{lyddon2018nonparametric,lyddon2019general,fong2019scalable}.

The paper is structured as follows: Section \ref{sec:background} reviews the basic concepts in mutual information inference, as well as its applications in generative modeling.
In Section \ref{sec:DPMINE}, we present the fundamental concepts of the DP and introduce the DPMINE, along with theoretical results on the properties of our estimator.  
We then demonstrate the integration of DPMINE into the 3D convolutional architecture of a BNP VAE-GAN model.
Section \ref{sec:experiment} explores the impact of the DPMINE, first demonstrating its improved performance over EDF-based methods in mutual information estimation, with greater robustness in high-dimensional settings. We then apply the DPMINE to train the VAE-GAN model through various experiments, showcasing its ability to mitigate mode collapse. In particular, we evaluate the proposed procedure using a collection of 3-dimensional (3D) chest CT scans of COVID-19-infected patients.
Finally, we conclude the paper in Section \ref{sec:conclusion},
where we discuss potential future research directions. All technical proofs, additional experimental details, limitation broad impacts, and safeguards are provided in the Appendix.

\section{Fundamentals of Mutual Information Inference and Applications in Generative Modeling}\label{sec:background}

In this section, we introduce the background information on MI and its applications in generative modeling.

 
\subsection{Mutual information}
Let $(\mathbf{X},\mathbf{Y})$ be a pair of random variables with a joint cumulative distribution function (CDF) $F_{\mathbf{X},\mathbf{Y}}$ and marginal CDFs $F_{\mathbf{X}}$ and $F_{\mathbf{Y}}$, defined over the product space $\mathcal{X}\otimes\mathcal{Y}$. The mutual information between $\mathbf{X}$ and $\mathbf{Y}$ is given by:
\begin{align}
    \text{MI}(\mathbf{X},\mathbf{Y})
    &=\text{D}_{\text{KL}}(F_{\mathbf{X}\otimes \mathbf{Y}},F_{\mathbf{X}}\otimes F_{\mathbf{Y}})=H(\mathbf{X})-H(\mathbf{X}|\mathbf{Y})\label{mi-eq2},
\end{align}
where $\text{D}_{\text{KL}}(\cdot,\cdot)$ denotes the KL divergence, and $H(\cdot)$ and $H(\cdot|\cdot)$ represent marginal and conditional Shannon entropies, respectively.


Computing Eq.~\eqref{mi-eq2} can be challenging in many practical applications due to the intractable likelihood of the relevant densities. There are approximate techniques to calculate mutual information. 
However, many of them suffer from the curse of dimensionality, like in examples where $k$-nearest neighbors are used to estimate MI. \citep{al2022test,berrett2019nonparametric}.
Moreover, optimizing $k$ to balance the bias-variance trade-off adds to the complexity \citep{sugiyama2012machine}.

An effective approach for overcoming these obstacles involves
the establishment of a variational lower bound (VLB) approximation of the MI through the MINE. For instance, \cite{belghazi2018mutual} uses the Donsker-Varadhan (DV) representation \citep{donsker1983asymptotic} of the KL divergence to form a lower bound of \eqref{mi-eq2} as:
\begin{align}\label{lower-DV}
    \mathcal{L}_{\boldsymbol{\gamma}}^{\text{DV}}(\mathbf{X},\mathbf{Y})=
    \mathbb{E}_{F_{\mathbf{XY}}}[T_{\boldsymbol{\gamma}}(\mathbf{X},\mathbf{Y})]-\ln \mathbb{E}_{F_{\mathbf{X}}\otimes F_{\mathbf{Y}}}[e^{T_{\boldsymbol{\gamma}}(\mathbf{X},\mathbf{Y})}],
\end{align}
where $\{T_{\boldsymbol{\gamma}}\}_{\boldsymbol{\gamma}\in\boldsymbol{\Gamma}}$ be a set of continuous functions parameterized by a neural network on a compact domain $\Gamma$ that maps $\mathcal{X}\otimes\mathcal{Y}$ to $\mathbb{R}$ and $\ln(\cdot)$ denotes the natural logarithm. 
The DV-MINE is then defined as $\mathcal{L}_{\widehat{\boldsymbol{\gamma}}}^{\text{DV}}(\mathbf{X},\mathbf{Y})$, where $\widehat{\boldsymbol{\gamma}}$ is the optimal parameter that maximizes the VLB \eqref{lower-DV}. 
$T_{\boldsymbol{\gamma}}$, which is an auxiliary statistic, serves as the fundamental component in this procedure, which learns to differentiate between a sample from $F_{\mathbf{XY}}$ and $F_{\mathbf{X}}\otimes F_{\mathbf{Y}}$. In practice, $\mathcal{L}_{\boldsymbol{\gamma}}^{\text{DV}}(\mathbf{X},\mathbf{Y})$ in Eq.~\eqref{lower-DV} is approximated using the empirical cumulative distribution functions  (ECDFs), $F_{\mathbf{X}_{1:n}}:=\frac{1}{n}\sum_{i=1}^{n}\delta_{\mathbf{X}_{i}}$ and $F_{\mathbf{Y}_{1:n}}:=\frac{1}{n}\sum_{i=1}^{n}\delta_{\mathbf{Y}_{i}}$:
\begin{align}\label{DV-ECDF}
\mathcal{L}_{\boldsymbol{\gamma}}^{\text{DV}}(\mathbf{X}_{1:n},\mathbf{Y}_{1:n})=\sum_{\ell=1}^{n}\dfrac{1}{n}T_{\boldsymbol{\gamma}}(\mathbf{X}_{\ell},\mathbf{Y}_{\ell})
-\ln\sum_{\ell=1}^{n}\dfrac{1}{n}e^{T_{\boldsymbol{\gamma}}(\mathbf{X}_{\ell},\mathbf{Y}_{\pi(\ell)})},
\end{align}
where $\lbrace \pi(1:n) \rbrace$ denotes a random permutation of $\lbrace 1:n\rbrace$, a standard technique to empirically approximate $\mathbb{E}_{F_{\mathbf{X}}\otimes F_{\mathbf{Y}}}(\cdot)$ in Eq.~\eqref{lower-DV} \citep{belghazi2018mutual, hjelm2018learning}.

 \cite{belghazi2018mutual} also considered a divergence representation given in \cite{nguyen2010estimating} to form another lower bound for Eq.~\eqref{mi-eq2}. However, they discovered that Eq.~\eqref{lower-DV} yields a tighter bound than this alternative lower bound and demonstrated numerically that DV-MINE outperforms MINE based on the alternative lower bound. Another MINE, based on providing a lower bound for the JS divergence \citep{nowozin2016f} between $F_{\mathbf{XY}}$ and $F_{\mathbf{X}}\otimes F_{\mathbf{Y}}$, can also be found in the literature \citep{hjelm2018learning,jones2023information}. It is defined as
\begin{small}
\begin{align*}
    \mathcal{L}_{\boldsymbol{\gamma}}^{\text{JS}}(\mathbf{X},\mathbf{Y})&=
    \mathbb{E}_{F_{\mathbf{XY}}}[-\zeta(-T_{\boldsymbol{\gamma}}(\mathbf{X},\mathbf{Y}))]- \mathbb{E}_{F_{\mathbf{X}}\otimes F_{\mathbf{Y}}}[\zeta(T_{\boldsymbol{\gamma}}(\mathbf{X},\mathbf{Y}))],
\end{align*}
\end{small}
where $\zeta(\cdot)=\ln(1+\exp{(\cdot)})$ denotes the softplus function.
However, \cite{jones2023information} remarked that JS-MINE may be more appropriate for problems involving the maximization of MI, rather than a precise approximation.

A detailed exploration and comparative assessment of different variational bounds for MI is provided in \cite{poole2019variational,tsai2021self}, with much of this work driven by the goal of optimizing MI in deep learning problems \citep{belghazi2018mutual,hjelm2018learning,barber2004algorithm,lin2022mutual,chen2016infogan}. However, recent studies have shown that existing neural MI estimators, whether JS-based (providing stable but imprecise estimates) or KL-based (offering accuracy but poor convergence), struggle to provide accurate and low-variance estimates when the true MI is high, highlighting that stability alone in JS-based estimators and accuracy alone in KL-based estimators are insufficient for effectively capturing information \citep{song2020understanding}. This limitation can impair the regularization of the objective function in deep learning models, including the encoding-decoding process in VAE-based models. Attempts to stabilize KL-based estimators through various modifications of the DV lower bound to reduce variance while preserving accuracy have been proposed \citep{mroueh2021improved,Wen2020Mutual,guo2022tight}. However, these modifications often increase computational complexity and make the training process more difficult to control.

\subsection{Interactions between generative models and mutual information}
Generative models have been extensively explored in data augmentation and synthesis \citep{mendes2023lung,menon2023ccs,toda2021synthetic,wolterink2017generative,bu20213d}. 
However, these models are expanded beyond the standard GAN loss function \citep{Goodfellow}, which could potentially introduce challenges such as mode collapse--memorizing certain modes of data distribution while overlooking other diversities--and training instability. The standard GAN is constructed based on two sets of functions: \(\{G_{\boldsymbol{\omega}}\}_{\boldsymbol{\omega}\in\boldsymbol{\Omega}}\) and $\{D_{\boldsymbol{\theta}}\}_{\boldsymbol{\theta}\in\boldsymbol{\Theta}}$, which are parameterized by neural networks and referred to as the generator and discriminator, respectively \citep{Goodfellow}. The generator engages in a game with the discriminator, aiming to deceive it into being unable to distinguish between generated samples and real samples. It is achieved by updating parameters as follows:
\begin{small}
\begin{align*}
    (\widehat{\boldsymbol{\omega}},\widehat{\boldsymbol{\theta}})&=\arg\min\limits_{\boldsymbol{\Omega}}\max\limits_{\boldsymbol{\Theta}} \mathbb{E}_{F_{\boldsymbol{\xi}}}[\ln(1-D_{\boldsymbol{\theta}}(G_{\boldsymbol{\omega}}(\boldsymbol{\xi})))]+\mathbb{E}_{F}[\ln(D_{\boldsymbol{\theta}}(\mathbf{X}))].
\end{align*}
\end{small}
Here, $\mathbf{X}\in\mathbb{R}^{d}\sim F$ represents a real dataset, and $\boldsymbol{\xi}\in\mathbb{R}^p\sim F_{\boldsymbol{\xi}}$ represents a noise vector, where $d>p$. 
There are various approaches to address the mode collapse issue, 
including modifying the loss function \citep{arjovsky2017wasserstein,nowozin2016f,Li,dellaporta2022robust,fazeli2023semi} and exploring different architectures \citep{radford2015unsupervised,zhang2019self}. 

In contrast, an effective strategy is to 
employ VAE-GAN models \citep{larsen2016autoencoding}. 
The VAE-GAN employs an encoder network in the VAE component to learn a probabilistic 
encoding of $\mathbf{X}$ into a compact 
representation $\boldsymbol{c}$. 
Subsequently, a decoder is used to generate realistic data from $\boldsymbol{c}$ \citep{kingma2013auto}. The probabilistic approach involves modeling the variational distribution $F_{E_{\boldsymbol{\eta}}}(\boldsymbol{c}|\mathbf{X})$ using the encoder network $\{E_{\boldsymbol{\eta}}\}_{\boldsymbol{\eta}\in\boldsymbol{\mathcal{H}}}$, which approximates the true posterior distribution of $\boldsymbol{c}$ given $\mathbf{X}$. The VAE aims to minimize the error in reconstructing the original input from the code space, as well as the regularization error--the difference between $F_{E_{\boldsymbol{\eta}}}(\boldsymbol{c}|\mathbf{X})$ and $F_{\mathbf{z}}$.

The traditional VAE-GAN model comprises of two interconnected networks $E_{\boldsymbol{\eta}}$, $G_{\boldsymbol{\omega}}$, and $D_{\boldsymbol{\theta}}$. The VAE is connected to the GAN through $G_{\boldsymbol{\omega}}$, which acts as a decoder for the VAE. The generator is fed with $(\boldsymbol{\xi},\boldsymbol{c})$ to generate samples that aim to fool $D_{\boldsymbol{\theta}}$ into distinguishing them from $\mathbf{X}$.  Some notable works that employ this strategy include the bi-directional GAN (BiGAN) \citep{donahue2016adversarial,dumoulin2016adversarially}, the $\alpha$-GAN \citep{rosca2017variational}, the $\alpha$-WGAN+Gradient penalty ($\alpha$-WGAN+GP) \citep{kwon2019generation}, and the $\mathrm{BNPWMMD}$-GAN \citep{fazeli2023bayesian}. 
The main differences between these models lie in their loss functions and network architectures.

Particularly, the $\alpha$-WGAN+GP serves as the primary frequentist nonparametric (FNP) counterpart to the BNPWMMD, though BNPWMMD has recently proven to be a stronger competitor. It combines $\alpha$-GAN with WGAN+GP to improve training stability in generating 3D brain MRI data, including sagittal, coronal, and axial representations.  
\cite{jafari2023improved} enhanced the $\alpha$-WGAN+GP to generate connected 3D volumes, while  \cite{ferreira2022generation} proposed another enhanced model for generating 3D rat brain MRI using convolutional data manipulation techniques including zoom, rotation, Gaussian noise, flip, translation and scaling intensity. 
However, the model may learn to expect these transformations, resulting in generated data that reflects the augmented relationships rather than true variability (correlated outputs).
Additionally, the provided results did not show a significant visual improvement. 


Alongside advancements in VAE-GAN models for generating samples, it is essential to explore the information encoded in the latent space to fully unlock the potential of these models.
Maximizing the MI between the input and output of the encoder can significantly reduce the reconstruction error in VAE-based models \citep{belghazi2018mutual, hjelm2018learning}. This approach encourages the encoder to retain important details and structure in the input data, ultimately leading to a more informative code representation for the decoder to use. \cite{belghazi2018mutual} incorporated $\max\limits_{\boldsymbol{\Gamma},\boldsymbol{\mathcal{H}}}\mathcal{L^{\text{DV}}_{\boldsymbol{\gamma}}}(\mathbf{X},E_{\boldsymbol{\eta}}(\mathbf{X}))$ into the loss function of a BiGAN model to estimate and maximize the 
MI between $\mathbf{X}$ and $E_{\boldsymbol{\eta}}(\mathbf{X})$. This approach, referred to as BiGAN+MINE model, demonstrated stronger coverage on the training dataset than the original BiGAN, displaying the ability of the MMI procedure to reduce mode collapse.

Moreover, preserving the information contained in codes during the generation process is extremely important. 
For instance,  \cite{chen2016infogan} attempted to maximize the MI between code $E_{\boldsymbol{\eta}}(\mathbf{X})$ and generated sample $G_{\boldsymbol{\omega}}(\boldsymbol{\xi},\boldsymbol{E_{\boldsymbol{\eta}}(\mathbf{X})})$ by providing another VLB for standard GANs under appropriate regularity conditions. However, no results were provided to declare the effectiveness of their method in mitigating mode collapse. 
Alternatively, \cite{belghazi2018mutual} employed
DV representation to maximize 
$MI(E_{\boldsymbol{\eta}}(\mathbf{X}),G_{\boldsymbol{\omega}}(\boldsymbol{\xi},E_{\boldsymbol{\eta}}(\mathbf{X})))$ for standard GANs
to demonstrate the effectiveness of the MINE-based method in covering the training data for this scenario. 


\section{A DP-based accurate-stable MI estimator and its application in generative model regularization}\label{sec:DPMINE}
Our proposed method of deep mutual information estimation partially relies on the DP as a technique to enhance the robustness of the training process, which we will first introduce here. The proposed approach is easy to implement and, particularly for KL-based estimators, provides a tighter lower bound, ultimately improving the stability of the estimator, which, as demonstrated in the numerical results section, leads to variance reduction.
\subsection{Dirichlet process}\label{sec:DP}
The DP is an infinite generalization of the Dirichlet distribution 
that is considered on the sample space denoted as $\mathfrak{X}$, which possesses a $\sigma$-algebra $\mathcal{A}$ comprising subsets of $\mathfrak{X}$ \citep{Ferguson}. 
$F$ follows a DP with parameters $(a, H)$ with the notation $F^{\text{Pri}}:=(F \sim DP(a, H))$, if for any measurable partition $A_{1}, \ldots, A_{k}$ of $\mathfrak{X}$ with $k \geq 2$, the joint distribution of the vector $(F(A_{1}), \ldots, F(A_{k}))$ follows a Dirichlet distribution characterized by parameters $(aH(A_{1}), \ldots, aH(A_{k}))$. Moreover, it is assumed that $H(A_{j})=0$ implies $F(A_{j})=0$ with probability one. The base measure $H$ captures the prior knowledge regarding the data distribution, while $a$ signifies the strength or intensity of this knowledge. 

As a conjugate prior, the posterior distribution of $F$ also follows a DP, denoted by $F^{\text{Pos}}:=(F|\mathbf{X}_{1:n}\sim \text{DP}(a+n,H^{\ast}))$, for $n$ independent and identically distributed (IID) draws, 
$\left(\mathbf{X}_{1:n}\in\mathbb{R}^d\right)$, from the random probability measure $F$ where $H^{\ast}=a(a+n)^{-1}H+n(a+n)^{-1}F_{\mathbf{X}_{1:n}}$, and $F_{\mathbf{X}_{1:n}}$ represents the empirical cumulative distribution function of the sample $\mathbf{X}_{1:n}$. 

Although the stick-breaking representation is a commonly employed series representation for DP inference \citep{sethuraman1994constructive}, it lacks the necessary normalization terms to convert it into a probability measure \citep{zarepour2012rapid}. Additionally, simulating from an infinite series is only feasible through using a random truncation approach to handle the terms within the series. To address these limitations, \cite{Ishwaran} introduced an approximation of the DP in the form of a finite series, which allows for convenient simulation. In the context of posterior inference, this approximation is given by
\begin{align}\label{approx of DP}
F^{\text{Pos}}_{N}:=\sum_{i=1}^{N}J^{\text{Pos}}_{i,N}\delta_{\mathbf{X}^{\text{Pos}}_{i}},
\end{align}
where $\left(J^{\text{Pos}}_{1:N,N}\right)\sim \mbox{Dirichlet}((a+n)/N,\ldots,(a+n)/N)$, $\left(\mathbf{X}^{\text{Pos}}_{1:N}\right)\overset{\text{IID}}{\sim}H^{\ast}$, and $\delta_{\mathbf{X}^{\text{Pos}}}$ is the Dirac delta measure. In this study, the variables $J^{\text{Pos}}_{i,N}$ and $\mathbf{X}^{\text{Pos}}_{i}$ represent the DP's weight and location, respectively. The sequence $(F^{\text{Pos}}_{N})_{N\geq 1}$ converges in distribution to $F^{\text{Pos}}$, where $F^{\text{Pos}}_{N}$ and $F^{\text{Pos}}$ are random values in $M_{1}(\mathbb{R}^d)$, the space of probability measures on $\mathbb{R}^d$ endowed with the topology of weak convergence \citep{Ishwaran}. 
In the subsequent sections, we 
investigate the efficacy of this approximation within a regularization method in a BNP generative model.
\begin{remark}\label{remark1}
    The prior $H$ in the DP introduces a regularizing effect by smoothing the resulting distribution. Unlike the empirical distribution, which represents the data exactly as observed, the DP with a prior creates a distribution that accounts for both observed data and the prior knowledge. This makes the model less sensitive to fluctuations and variability in the sample data, which can be particularly beneficial for optimization processes in models that rely on loss functions. It does so by anchoring the sampled distribution around $H$, especially when data is sparse or when there are regions of high variability. Choosing an effective prior $H$ for regularization depends on the specific research context and the statistician’s knowledge of the subject area. For instance, in genetics, $H$ might reflect biological baselines, while in financial modeling, a prior could incorporate historical market trends. Each choice aligns with the research goals, capturing nuances relevant to the field. A commonly effective choice for general purposes is a multivariate normal distribution with parameters reflecting the sample's mean and covariance. This prior helps anchor the distribution near expected values, but with flexibility that accounts for sample variability, thus providing a stabilizing regularization effect across diverse data points.
\end{remark}

\begin{remark}\label{remark2}
    The DP framework balances between adhering to $H$ and adapting to the data through concentration parameter $a$.
This balance prevents overfitting to the specific data sample and provides a more stable and generalized distribution, especially in small sample scenarios. The DP framework strikes a balance between adhering to \( H \) and adapting to the data through the concentration parameter \( a \). This balance mitigates overfitting to specific data samples and ensures a more stable and generalized distribution, particularly in small sample scenarios. In this paper, we follow \cite{fazeli2023bayesian} to employ a Maximum A Posteriori (MAP) estimate for selecting the optimal value of the concentration parameter \( a \). This is achieved by maximizing the log-likelihood of \( F^{\mathrm{pos}} \), fitted to the given dataset, over a range of \( a \) values.
\end{remark}

\begin{remark}\label{remark3}
    In approximating the DP in Eq.~\eqref{approx of DP}, it is crucial to determine an optimal number of terms to ensure that the truncated series remains a close representation of the true Dirichlet Process. We address this requirement by applying a random stopping rule as outlined in \cite{zarepour2012rapid}. This adaptive rule terminates the approximation when the marginal contribution of the current term falls below a specified significance level relative to the cumulative sum of previous terms. Given a predefined threshold $\epsilon \in (0,1)$, the stopping criterion is defined by:
$N = \inf\left\lbrace j :, \frac{\Gamma_{j,j}}{\sum_{i=1}^{j}\Gamma_{i,j}} < \epsilon\right\rbrace. $
This approach effectively balances computational efficiency with approximation accuracy.
\end{remark}
 
\subsection{DPMINE: The Dirichlet process mutual information neural estimator}\label{subsec:DPMINE rep}

In this section, the DP approximation \eqref{approx of DP} is leveraged to develop two lower bounds for MI, leading to the formulation of DPMINE.
\paragraph{KL-based representation:}
Let $\left(\mathbf{X}_{1:n} \right)$ be $n$ IID  random variables for $\mathbf{X}\in \mathbb{R}^d\sim F$. 
For continuous functions $f_i: \mathbb{R}^{d}\rightarrow \mathbb{R}^{d^{\prime}}$, $i=1,2$, let  $\mathbf{X}^{\prime}_{i}=f_i(\mathbf{X})$.
Then, 
for a fixed value of $a$ and a chosen probability measure $H$
used in the DP approximation \eqref{approx of DP}, 
we propose to estimate the DV lower bound of $\text{MI}(\mathbf{X}_{1}^{\prime},\mathbf{X}_{2}^{\prime})$ as:
\begin{small}
\begin{multline}\label{DPDV-lower}
    \mathcal{L}_{\boldsymbol{\gamma}}^{\text{DPDV}}(f_1(\mathbf{X}^{\text{Pos}}_{1:N}),f_2(\mathbf{X}^{\text{Pos}}_{1:N})):=\sum_{\ell=1}^{N}J_{\ell,N}^{\text{Pos}}T_{\boldsymbol{\gamma}}(f_1(\mathbf{X}^{\text{Pos}}_{\ell}),f_2(\mathbf{X}^{\text{Pos}}_{\ell}))-\ln\sum_{\ell=1}^{N}J_{\ell,N}^{\text{Pos}}e^{T_{\boldsymbol{\gamma}}(f_1(\mathbf{X}^{\text{Pos}}_{\ell}),f_2(\mathbf{X}^{\text{Pos}}_{\pi(\ell)}))}.
\end{multline}
\end{small}
Here, $\lbrace\pi(1:N)\rbrace$ denotes a random permutation of $\lbrace 1:N\rbrace$, which is a common technique used to empirically approximate $\mathbb{E}_{F_{\mathbf{X}}\otimes F_{\mathbf{Y}}}(\cdot)$ in \eqref{lower-DV} \citep{belghazi2018mutual,hjelm2018learning}.

\paragraph{JS-based representation:} The JS-based estimation is similarly proposed as:
\begin{small}
\begin{align}\label{DPJS}
    &\mathcal{L}_{\boldsymbol{\gamma}}^{\text{DPJS}}(f_1(\mathbf{X}^{\text{Pos}}_{1:N}),f_2(\mathbf{X}^{\text{Pos}}_{1:N})):=\sum_{\ell=1}^{N}J_{\ell,N}^{\text{Pos}}\Big[-\zeta(-T_{\boldsymbol{\gamma}}(f_1(\mathbf{X}^{\text{Pos}}_{\ell}),f_2(\mathbf{X}^{\text{Pos}}_{\ell})))-\zeta(T_{\boldsymbol{\gamma}}(f_1(\mathbf{X}^{\text{Pos}}_{\ell}),f_2(\mathbf{X}^{\text{Pos}}_{\pi(\ell)})))
    \Big].
\end{align}
\end{small}
 Let $\text{i}$ be a label in $\lbrace \text{DV}, \text{JS}\rbrace$ indicating the type of the VLB. The BNP MINEs are consequently considered as: 
 \begin{small}
\begin{align}\label{DP-MINE}
\text{MI}^{\text{DPi}}(f_1(\mathbf{X}^{\text{Pos}}_{1:N}),f_2(\mathbf{X}^{\text{Pos}}_{1:N}))&=\max\limits_{\boldsymbol{\gamma}\in\boldsymbol{\Gamma}}\mathcal{L}_{\boldsymbol{\boldsymbol{\gamma}}}^{\text{DPi}}(f_1(\mathbf{X}^{\text{Pos}}_{1:N}),f_2(\mathbf{X}^{\text{Pos}}_{1:N}))
\end{align}
\end{small}
The following theorem states that, at least in the case of KL-based MINE, $\mathcal{L}_{\boldsymbol{\gamma}}^{\text{DPDV}}(\mathbf{X}_{1}^{\text{Pos}^{\prime}},\mathbf{X}_{2}^{\text{Pos}^{\prime}})$ is asymptotically larger than $\mathcal{L}_{\boldsymbol{\gamma}}^{\text{DV}}(\mathbf{X}_{1}^{\prime},\mathbf{X}_{2}^{\prime})$ on average. This results in a tighter VLB on the true MI and allows for a more accurate estimation of MI by maximizing the tighter bound over a compact domain $\boldsymbol{\Gamma}$.
\begin{theorem}[Limiting expectation]\label{thm-asmp-dpdv}
    Considering DP posterior representations defined in \eqref{DPDV-lower} and \eqref{DPJS}. Given the DP posterior approximation in \eqref{approx of DP}, we have,  
    \begin{itemize}
        \item[$i.$] 
        $\lim_{n,N \to \infty}\mathbb{E}_{F_{N}^{\text{Pos}}}\left(\mathcal{L}_{\boldsymbol{\gamma}}^{\text{DPDV}}(f_1(\mathbf{X}^{\text{Pos}}_{1:N}),f_2(\mathbf{X}^{\text{Pos}}_{1:N})) \right) \geq \mathcal{L}_{\boldsymbol{\gamma}}^{\text{DV}}(\mathbf{X}_{1}^{\prime},\mathbf{X}_{2}^{\prime})$, a.s.,
        \item[$ii.$] $\mathbb{E}_{F_{N}^{\text{Pos}}}\left(\mathcal{L}_{\boldsymbol{\gamma}}^{\text{DPJS}}(f_1(\mathbf{X}^{\text{Pos}}_{1:N}),f_2(\mathbf{X}^{\text{Pos}}_{1:N})) \right)$ converges a.s. to $\mathcal{L}_{\boldsymbol{\gamma}}^{\text{JS}}(\mathbf{X}_{1}^{\prime},\mathbf{X}_{2}^{\prime})$, as $n,N\rightarrow\infty,$
    \end{itemize}
    where ``a.s.'' stands for ``almost surely'', denoting that the statements hold with probability 1.
\end{theorem}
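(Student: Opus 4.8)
The plan is to compute $\mathbb{E}_{F_N^{\text{Pos}}}(\cdot)$ of each loss directly, exploiting that the randomness behind $F_N^{\text{Pos}}$ factors into three independent pieces: the Dirichlet weights $(J^{\text{Pos}}_{1:N,N})$, the IID atoms $\mathbf{X}^{\text{Pos}}_{1:N}\overset{\text{IID}}{\sim}H^{\ast}$, and the uniform random permutation $\pi$ (which I treat as an additional independent randomization of the estimator; the deterministic-$\pi$ case differs only in that the fixed-point fraction replaces the probability $1/N$). Following the MINE literature \citep{belghazi2018mutual} I would assume $T_{\boldsymbol{\gamma}}$ is bounded, so that (with $f_1,f_2$ continuous) $(x,x')\mapsto T_{\boldsymbol{\gamma}}(f_1(x),f_2(x'))$ is bounded and continuous and every expectation below is finite with bounded/dominated convergence available.

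I would do (ii) first, since $\mathcal{L}_{\boldsymbol{\gamma}}^{\text{DPJS}}$ is linear in the weights. Taking $\mathbb{E}$ over the Dirichlet vector uses only $\mathbb{E}[J^{\text{Pos}}_{\ell,N}]=1/N$, so the diagonal term averages over the IID atoms to exactly $-\mathbb{E}_{\mathbf{X}\sim H^{\ast}}[\zeta(-T_{\boldsymbol{\gamma}}(f_1(\mathbf{X}),f_2(\mathbf{X})))]$ for every $N$; for the permuted term, $\mathbf{X}^{\text{Pos}}_\ell$ and $\mathbf{X}^{\text{Pos}}_{\pi(\ell)}$ are independent unless $\pi(\ell)=\ell$ (probability $1/N$), so it averages to $-(1-\tfrac1N)\mathbb{E}_{H^{\ast}\otimes H^{\ast}}[\zeta(T_{\boldsymbol{\gamma}}(f_1(\mathbf{X}),f_2(\mathbf{X}')))]-\tfrac1N\mathbb{E}_{\mathbf{X}\sim H^{\ast}}[\zeta(T_{\boldsymbol{\gamma}}(f_1(\mathbf{X}),f_2(\mathbf{X})))]$. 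Letting $N\to\infty$ kills the $\tfrac1N$ remainder; letting $n\to\infty$ uses $H^{\ast}=\tfrac{a}{a+n}H+\tfrac{n}{a+n}F_{\mathbf{X}_{1:n}}$, so for bounded continuous $g$ one has $\mathbb{E}_{H^{\ast}}[g]\to\mathbb{E}_F[g]$ a.s.\ (the $H$-term vanishes and $\tfrac1n\sum g(\mathbf{X}_i)\to\mathbb{E}_F[g]$ by the SLLN) and $\iint g\,dH^{\ast}dH^{\ast}\to\iint g\,dF\,dF$ a.s.\ (a $V$-statistic SLLN). Since $\mathbb{E}_{\mathbf{X}\sim F}[h(f_1(\mathbf{X}),f_2(\mathbf{X}))]=\mathbb{E}_{F_{\mathbf{X}'_1\mathbf{X}'_2}}[h]$ and $\mathbb{E}_{\mathbf{X},\mathbf{X}'\overset{\text{IID}}{\sim}F}[h(f_1(\mathbf{X}),f_2(\mathbf{X}'))]=\mathbb{E}_{F_{\mathbf{X}'_1}\otimes F_{\mathbf{X}'_2}}[h]$, this limit is exactly $\mathcal{L}_{\boldsymbol{\gamma}}^{\text{JS}}(\mathbf{X}'_1,\mathbf{X}'_2)$, proving (ii).

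For (i) the diagonal term is handled the same way and contributes $\mathbb{E}_{\mathbf{X}\sim H^{\ast}}[T_{\boldsymbol{\gamma}}(f_1(\mathbf{X}),f_2(\mathbf{X}))]\to\mathbb{E}_{F_{\mathbf{X}'_1\mathbf{X}'_2}}[T_{\boldsymbol{\gamma}}]$; the new feature is the $-\ln\sum_\ell J^{\text{Pos}}_{\ell,N}e^{T_{\boldsymbol{\gamma}}(f_1(\mathbf{X}^{\text{Pos}}_\ell),f_2(\mathbf{X}^{\text{Pos}}_{\pi(\ell)}))}$ term. Writing $W$ for the inner sum, convexity of $-\ln$ and Jensen's inequality give $\mathbb{E}_{F_N^{\text{Pos}}}[-\ln W]\ge-\ln\mathbb{E}_{F_N^{\text{Pos}}}[W]$, and the same computation as in (ii) shows $\mathbb{E}_{F_N^{\text{Pos}}}[W]=(1-\tfrac1N)\mathbb{E}_{H^{\ast}\otimes H^{\ast}}[e^{T_{\boldsymbol{\gamma}}(f_1(\mathbf{X}),f_2(\mathbf{X}'))}]+\tfrac1N\mathbb{E}_{\mathbf{X}\sim H^{\ast}}[e^{T_{\boldsymbol{\gamma}}(f_1(\mathbf{X}),f_2(\mathbf{X}))}]$. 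Hence
\[
\mathbb{E}_{F_N^{\text{Pos}}}\big(\mathcal{L}_{\boldsymbol{\gamma}}^{\text{DPDV}}\big)\ \ge\ \mathbb{E}_{\mathbf{X}\sim H^{\ast}}\big[T_{\boldsymbol{\gamma}}(f_1(\mathbf{X}),f_2(\mathbf{X}))\big]\ -\ \ln\mathbb{E}_{F_N^{\text{Pos}}}[W]\qquad\text{for all }n,N,
\]
and sending $n,N\to\infty$ the right-hand side converges a.s.\ to $\mathbb{E}_{F_{\mathbf{X}'_1\mathbf{X}'_2}}[T_{\boldsymbol{\gamma}}]-\ln\mathbb{E}_{F_{\mathbf{X}'_1}\otimes F_{\mathbf{X}'_2}}[e^{T_{\boldsymbol{\gamma}}}]=\mathcal{L}_{\boldsymbol{\gamma}}^{\text{DV}}(\mathbf{X}'_1,\mathbf{X}'_2)$. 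Taking $\liminf$ on the left proves (i); existence of the limit itself can be added via $F_N^{\text{Pos}}\Rightarrow F^{\text{Pos}}$ and bounded convergence. Conceptually, the Jensen gap is precisely the ``extra tightness'' the BNP construction contributes, so the inequality is typically strict.

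The main obstacle I anticipate is the bookkeeping around the permutation and the diagonal: I must verify that the self-pairing contributions — the event $\{\pi(\ell)=\ell\}$ and the $\ell=j$ term of the $V$-statistic — each carry total weight $O(1/N)$, so that the limiting bound is $\mathcal{L}_{\boldsymbol{\gamma}}^{\text{DV}}(\mathbf{X}'_1,\mathbf{X}'_2)$ \emph{exactly} rather than a version contaminated by a self-pairing term; once $N$ is large this is transparent, but it must be stated. A related, softer point is that to claim the limit in the theorem exists (not merely that its $\liminf$ dominates $\mathcal{L}_{\boldsymbol{\gamma}}^{\text{DV}}$) one invokes $F_N^{\text{Pos}}\Rightarrow F^{\text{Pos}}$ with bounded convergence, which again requires treating the permutation inside $W$; if in addition one wanted a.s.\ convergence of $\mathcal{L}_{\boldsymbol{\gamma}}^{\text{DPDV}}$ as a random variable, a law of large numbers for the $2$-dependent array $\{T_{\boldsymbol{\gamma}}(f_1(\mathbf{X}^{\text{Pos}}_\ell),f_2(\mathbf{X}^{\text{Pos}}_{\pi(\ell)}))\}_\ell$ would be needed. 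By contrast the double limit $n,N\to\infty$ is benign: $N$ enters only through the harmless factors $(1-\tfrac1N)$ and $\tfrac1N$, while $n$ enters only through $H^{\ast}\Rightarrow F$, so the limits decouple; boundedness of $T_{\boldsymbol{\gamma}}$ and continuity of $f_1,f_2$ should be stated explicitly to license the SLLN, $V$-statistic, and bounded-convergence steps.
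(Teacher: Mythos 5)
Your proof is correct, and its central device is the same as the paper's: apply Jensen's inequality to the convex function $-\ln$ to pull the Dirichlet-weight expectation $\mathbb{E}[J^{\text{Pos}}_{\ell,N}]=1/N$ inside the logarithm, then pass to the limit $n,N\to\infty$. Where you diverge is in how the remaining randomness is handled after Jensen. The paper only integrates out the Dirichlet weights, leaving a random quantity $I$ depending on the atoms $\mathbf{X}^{\text{Pos}}_{1:N}$ and the permutation $\pi$; it then invokes an informal coupling ``$\mathbf{X}^{\text{Pos}}_{1:N}\to\mathbf{X}_{1:N}$ a.s.\ as $H^{\ast}\to F$'' followed by a law-of-large-numbers step to send $N\to\infty$. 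That LLN is applied to the permuted sum $\tfrac1N\sum_\ell e^{T_{\boldsymbol{\gamma}}(f_1(\mathbf{X}_\ell),f_2(\mathbf{X}_{\pi(\ell)}))}$, whose summands are \emph{not} IID under a uniform random $\pi$, so this step is stated but not really justified. You instead integrate out \emph{all} the randomness (weights, atoms, $\pi$) at once, which turns the permuted term into the explicit mixture $(1-\tfrac1N)\mathbb{E}_{H^{\ast}\otimes H^{\ast}}[\,\cdot\,]+\tfrac1N\mathbb{E}_{H^{\ast}}[\,\cdot\,]$; the limits in $n$ (via $H^{\ast}\Rightarrow F$ plus the SLLN for bounded $g$) and in $N$ (killing the $1/N$ self-pairing remainder) then decouple cleanly, and no LLN for a dependent permutation array is needed. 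This buys rigor at the cost of a little bookkeeping, and your observation that the Jensen gap is exactly where the ``extra tightness'' over $\mathcal{L}^{\text{DV}}_{\boldsymbol{\gamma}}$ comes from is a nice conceptual addition not emphasized in the paper.

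Two minor remarks. First, you correctly flag that concluding ``$\lim$'' rather than ``$\liminf$'' on the left requires a separate convergence argument; the paper also does not address this, so your version is, if anything, more honest on this point. Second, your assumption that $T_{\boldsymbol{\gamma}}$ is bounded and $f_1,f_2$ are continuous, stated explicitly to license dominated convergence and the SLLN, is implicit in the paper (continuous $T_{\boldsymbol{\gamma}}$ on a compact $\boldsymbol{\Gamma}$) but worth spelling out as you do.
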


\noindent The strong consistency of the proposed estimators is also investigated through the next theorem.
\begin{theorem}[Consistency]\label{thmConsistency} 
    Considering BNP MINEs given in \eqref{DP-MINE}. Then,  for any label $\text{i}$ in $\lbrace \text{DV}, \text{JS}\rbrace$, as $n,N\rightarrow\infty$:
    \begin{itemize}
        \item[$i.$] $\text{MI}^{\text{DPi}}(f_1(\mathbf{X}^{\text{Pos}}_{1:N}),f_2(\mathbf{X}^{\text{Pos}}_{1:N}))\xrightarrow{a.s.}\text{MI}^{\text{i}}(\mathbf{X}_{1}^{\prime},\mathbf{X}_{2}^{\prime})$,
        \item[$ii.$] There exists a set of neural network $\{T_{\boldsymbol{\gamma}}\}_{\boldsymbol{\gamma}\in\boldsymbol{\Gamma}}$ on some compact domain $\boldsymbol{\Gamma}$ such that
        \begin{align*}
            \text{MI}^{\text{DPi}}(f_1(\mathbf{X}^{\text{Pos}}_{1:N}),f_2(\mathbf{X}^{\text{Pos}}_{1:N}))\xrightarrow{a.s.}\text{MI}(\mathbf{X}_{1}^{\prime},\mathbf{X}_{2}^{\prime}).
        \end{align*}
    \end{itemize}
\end{theorem}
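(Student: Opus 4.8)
\textbf{Proof proposal for Theorem \ref{thmConsistency} (Consistency).}

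The plan is to derive both parts from Theorem \ref{thm-asmp-dpdv} together with a uniform (over $\boldsymbol{\gamma}\in\boldsymbol{\Gamma}$) control of the fluctuations of the DP-based objective around its limit. For part $i.$, I would argue in two stages. First, fix $\boldsymbol{\gamma}$ and note that $\mathcal{L}_{\boldsymbol{\gamma}}^{\text{DPi}}(f_1(\mathbf{X}^{\text{Pos}}_{1:N}),f_2(\mathbf{X}^{\text{Pos}}_{1:N}))$ is a smooth (continuous, indeed $C^\infty$) function of the finite vectors $(J^{\text{Pos}}_{1:N,N})$ and $(\mathbf{X}^{\text{Pos}}_{1:N})$; the weights concentrate (each $J^{\text{Pos}}_{i,N}\to 0$ while $\sum J^{\text{Pos}}_{i,N}=1$) and, since $H^{\ast}=\frac{a}{a+n}H+\frac{n}{a+n}F_{\mathbf{X}_{1:n}}$ converges weakly a.s.\ to $F$ by Glivenko–Cantelli, the empirical measure $F^{\text{Pos}}_N=\sum_i J^{\text{Pos}}_{i,N}\delta_{\mathbf{X}^{\text{Pos}}_i}$ converges weakly a.s.\ to $F$ as $n,N\to\infty$ (this is exactly the convergence statement quoted after \eqref{approx of DP}, combined with $H^\ast\to F$). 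Pushing through the continuous maps $f_1,f_2$ and using boundedness/continuity of $T_{\boldsymbol{\gamma}}$ on the relevant compact range, the two sums in \eqref{DPDV-lower} and \eqref{DPJS} converge a.s.\ to the corresponding population expectations under $F_{\mathbf{X}'_1\mathbf{X}'_2}$ and $F_{\mathbf{X}'_1}\otimes F_{\mathbf{X}'_2}$ (the permutation $\pi$ decouples the coordinates in the limit, as in the classical MINE argument of \citet{belghazi2018mutual}). Hence $\mathcal{L}_{\boldsymbol{\gamma}}^{\text{DPi}}\xrightarrow{a.s.}\mathcal{L}_{\boldsymbol{\gamma}}^{\text{i}}(\mathbf{X}_1',\mathbf{X}_2')$ pointwise in $\boldsymbol{\gamma}$.

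Second, to pass from pointwise convergence of the objectives to convergence of the maximizers, I would upgrade this to \emph{uniform} convergence over the compact domain $\boldsymbol{\Gamma}$. Since $\boldsymbol{\gamma}\mapsto T_{\boldsymbol{\gamma}}$ is continuous and $\boldsymbol{\Gamma}$ compact, the family $\{T_{\boldsymbol{\gamma}}\}$ is uniformly bounded and equicontinuous on the compact sets where the data live; a standard stochastic-equicontinuity / bracketing argument then gives
\begin{align*}
\sup_{\boldsymbol{\gamma}\in\boldsymbol{\Gamma}}\bigl|\mathcal{L}_{\boldsymbol{\gamma}}^{\text{DPi}}(f_1(\mathbf{X}^{\text{Pos}}_{1:N}),f_2(\mathbf{X}^{\text{Pos}}_{1:N}))-\mathcal{L}_{\boldsymbol{\gamma}}^{\text{i}}(\mathbf{X}_1',\mathbf{X}_2')\bigr|\xrightarrow{a.s.}0.
\end{align*}
Uniform convergence of a sequence of functions on a compact set implies convergence of their suprema, so $\max_{\boldsymbol{\gamma}}\mathcal{L}_{\boldsymbol{\gamma}}^{\text{DPi}}\xrightarrow{a.s.}\max_{\boldsymbol{\gamma}}\mathcal{L}_{\boldsymbol{\gamma}}^{\text{i}}=\text{MI}^{\text{i}}(\mathbf{X}_1',\mathbf{X}_2')$, which is part $i.$ (For the DV case one can alternatively invoke Theorem \ref{thm-asmp-dpdv}$i.$ for the lower bound and the pointwise-limit computation for the matching upper bound, but the uniform argument handles both labels at once.)

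For part $ii.$, the idea is to combine part $i.$ with the universal-approximation property of neural networks that underlies the original MINE consistency result. By the DV representation, $\sup_{T}\bigl\{\mathbb{E}_{F_{\mathbf{X}'_1\mathbf{X}'_2}}[T]-\ln\mathbb{E}_{F_{\mathbf{X}'_1}\otimes F_{\mathbf{X}'_2}}[e^{T}]\bigr\}=\text{MI}(\mathbf{X}_1',\mathbf{X}_2')$ over all measurable $T$, and the optimum is (essentially) $T^\star=\ln\frac{dF_{\mathbf{X}'_1\mathbf{X}'_2}}{d(F_{\mathbf{X}'_1}\otimes F_{\mathbf{X}'_2})}$. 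Given $\delta>0$, choose a compact $\boldsymbol{\Gamma}$ and a network architecture so that some $T_{\boldsymbol{\gamma}_\delta}$ approximates $T^\star$ well enough in the appropriate $L^1$/exponential-integrability sense that $\mathcal{L}_{\boldsymbol{\gamma}_\delta}^{\text{i}}(\mathbf{X}_1',\mathbf{X}_2')\ge \text{MI}(\mathbf{X}_1',\mathbf{X}_2')-\delta$ (for JS one uses instead that the JS lower bound is tight at its own optimizer; the structure is identical). Then by part $i.$, $\text{MI}^{\text{DPi}}\to\max_{\boldsymbol{\gamma}}\mathcal{L}_{\boldsymbol{\gamma}}^{\text{i}}\ge\mathcal{L}_{\boldsymbol{\gamma}_\delta}^{\text{i}}\ge\text{MI}-\delta$; since $\mathcal{L}_{\boldsymbol{\gamma}}^{\text{i}}\le\text{MI}$ always, letting $\delta\downarrow 0$ along a sequence of growing architectures gives the claim. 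The statement "there exists a set of neural networks on some compact $\boldsymbol{\Gamma}$" should be read as asserting existence of such an architecture (for the JS label one additionally needs the JS bound to be tight, i.e.\ the $f$-GAN exactness at the Bayes-optimal discriminator).

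\textbf{Main obstacle.} The routine parts are the pointwise limits (they follow from weak convergence of $F^{\text{Pos}}_N$ plus continuity/boundedness). The delicate step is the \emph{uniform-in-$\boldsymbol{\gamma}$} convergence needed to interchange limit and supremum: one must verify a stochastic-equicontinuity condition for the randomized family $\{T_{\boldsymbol{\gamma}}\}_{\boldsymbol{\gamma}\in\boldsymbol{\Gamma}}$ integrated against the DP weights $J^{\text{Pos}}_{i,N}$, and control the $\ln\mathbb{E}[e^{T_{\boldsymbol{\gamma}}}]$ term uniformly (it requires a uniform lower bound on $\sum_\ell J^{\text{Pos}}_{\ell,N}e^{T_{\boldsymbol{\gamma}}(\cdots)}$ away from $0$, which is where compactness of $\boldsymbol{\Gamma}$ and boundedness of $T_{\boldsymbol{\gamma}}$ are essential). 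A secondary subtlety is making the universal-approximation step in part $ii.$ compatible with a \emph{fixed} compact $\boldsymbol{\Gamma}$: one genuinely needs a diagonal argument over an increasing sequence of compact parameter sets, which is why the theorem only claims existence of a suitable architecture rather than a fixed one.
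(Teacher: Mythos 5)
Your proposal takes essentially the same route as the paper's proof. For part (i), you establish pointwise a.s.\ convergence of $\mathcal{L}_{\boldsymbol{\gamma}}^{\text{DPi}}$ to $\mathcal{L}_{\boldsymbol{\gamma}}^{\text{i}}$ (the paper does this in two stages --- $n\to\infty$ sends $\mathbf{X}^{\text{Pos}}_\ell\to\mathbf{X}_\ell$ and $J^{\text{Pos}}_{\ell,N}\to 1/N$, then $N\to\infty$ via the LLN --- whereas you invoke the joint weak limit $F^{\text{Pos}}_N\Rightarrow F$ directly, a cosmetic difference) and then pass to the maximum over $\boldsymbol{\Gamma}$; for part (ii), you use the same triangle-inequality decomposition, combining part (i) with the universal-approximation content of Belghazi et al.'s Lemma~1, which you inline via the DV optimizer $T^\star$ rather than citing the lemma. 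The one place you go beyond the paper is the interchange of limit and $\max_{\boldsymbol{\gamma}}$: the paper disposes of it with ``since $\max(\cdot)$ is continuous, the continuous mapping theorem concludes (i)'', which is really shorthand for convergence in $C(\boldsymbol{\Gamma})$, i.e.\ uniform-in-$\boldsymbol{\gamma}$ convergence; your explicit stochastic-equicontinuity argument (compactness of $\boldsymbol{\Gamma}$, equicontinuity of $\boldsymbol{\gamma}\mapsto T_{\boldsymbol{\gamma}}$, uniform lower bound on $\sum_\ell J^{\text{Pos}}_{\ell,N}e^{T_{\boldsymbol{\gamma}}}$ to control the logarithm) is the careful version of what the paper leaves implicit. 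You also correctly observe that the approximation step in part (ii) produces a $\delta$-dependent architecture, so the theorem's ``there exists a network such that $\to$'' is, strictly speaking, the $\varepsilon$-approximate version that the paper's own $\varepsilon_1+\varepsilon_2$ argument actually delivers.
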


\subsection{Embedding DPMINE in generative models}
The BNPWMMD-GAN  \citep{fazeli2023bayesian} is a VAE-GAN model that places a DP prior on the data distribution. 
This approach helps prevent overfitting, while also leveraging the advantages of GANs and VAEs in data augmentation. In this study, we aim to refine the BNPWMMD-GAN by using the proposed BNP MINE to further improve its performance. 
\begin{figure*}[!t]
\centering
\includegraphics[width=1\textwidth,height=4cm]{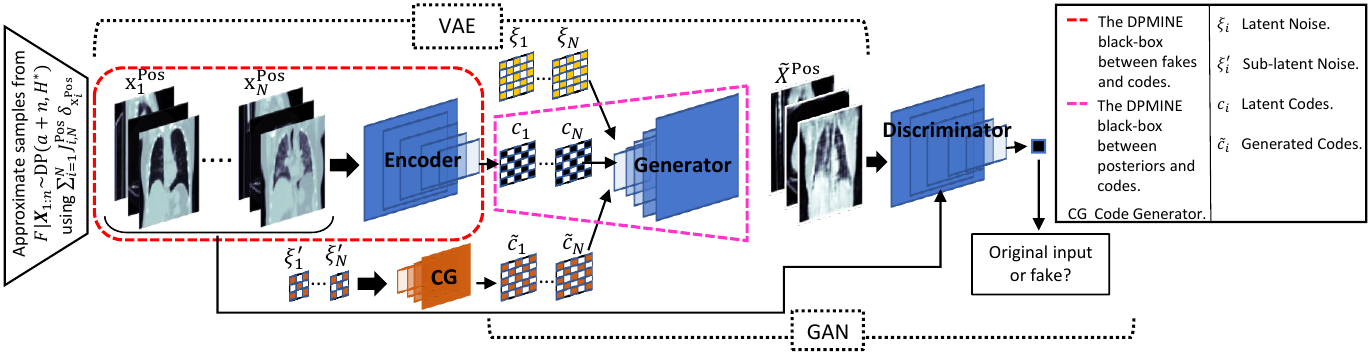}
\caption{A general diagram of the BNPWMMD model refined by DPMINE in generating 3D images.}
\label{BNPWMMD-diagram}
\end{figure*}
The BNPWMMD-GAN architecture features four networks: The encoder--$\{E_{\boldsymbol{\eta}}\}_{\boldsymbol{\eta}\in\boldsymbol{\mathcal{H}}}$, the generator--$\{G_{\boldsymbol{\omega}}\}_{\boldsymbol{\omega}\in\boldsymbol{\Omega}}$, the code generator--$\{CG_{\boldsymbol{\omega}^{\prime}}\}_{\boldsymbol{\omega}^{\prime}\in\boldsymbol{\Omega}^{\prime}}$, and the discriminator--$\{D_{\boldsymbol{\theta}}\}_{\boldsymbol{\theta}\in\boldsymbol{\Theta}}$. The addition of  
the code generator network sets the BNPWMMD-GAN apart from traditional VAE-GANs. It takes noise from a sub-latent space--the latent space of the code generator--
and generates code samples adversarially by using encoder outputs as its training dataset, as described in Eq. \eqref{BNPWMMD:CG}. The code generator is responsible for exploring uncharted regions of the code space that may have been overlooked by the encoder, thereby enhancing the overall diversity of the generated data. 

Given the codes $\left(\boldsymbol{c}_{1:N}:= E{\boldsymbol{\eta}}\left( \mathbf{X}^{\text{Pos}}_{1:N}\right)\right)$ and generated codes $\left(\widetilde{\boldsymbol{c}}_{1:N}:= CG_{\boldsymbol{\omega}^{\prime}}\left( \boldsymbol{\xi}^{\prime}_{1:N}\right)\right)$ along with the latent noise $\left(\boldsymbol{\xi}_{1:N} \in \mathbb{R}^p\right) \sim F_{\boldsymbol{\xi}}$ and sub-latent noise $\left(\boldsymbol{\xi}^{\prime}_{1:N} \in \mathbb{R}^q\right) \sim F_{\boldsymbol{\xi}^{\prime}}$, where $q < p$, the BNPWMMD-GAN is trained by updating the networks' parameters according to the following hybrid objective function\footnote{Each $F$ indexed by a sample vector in \eqref{BNPWMMD:loss} indicates the corresponding empirical distribution.}: 
\begin{subequations}\label{BNPWMMD:loss}
\begin{align}
&(\widehat{\boldsymbol{\omega}},\widehat{\boldsymbol{\eta}})\footnotemark =
\arg\min\limits_{\boldsymbol{\boldsymbol{\Omega},\boldsymbol{\mathcal{H}}}}\bigg\lbrace\clipbox{-2 0 327 0}{$\underbrace{
\begin{small}
-\frac{1}{N}\sum_{i=1}^{N}\big[ D_{\boldsymbol{\theta}}(G_{\boldsymbol{\omega}}(\boldsymbol{\xi}_{i}))
    +D_{\boldsymbol{\theta}}(G_{\boldsymbol{\omega}}(\boldsymbol{c}_{i}))
    +D_{\boldsymbol{\theta}}(G_{\boldsymbol{\omega}}(\widetilde{\boldsymbol{c}}_{i}))\big]
    +\text{MMD}(F^{\text{Pos}}_{N},F_{G_{\boldsymbol{\omega}}(\boldsymbol{\xi}_{1:N})})\hspace{11.5cm}
\end{small}     
    }$}\nonumber\\
    &~~~~~~~~~~~~~~~~~~~\clipbox{5 0 -1 0}{$\underbrace{\hspace{.16cm}+
\begin{small}    
    \text{MMD}(F^{\text{Pos}}_{N},F_{G_{\boldsymbol{\omega}}(\widetilde{\boldsymbol{c}}_{1:N})})
    +\text{MMD}(F^{\text{Pos}}_{N},F_{G_{\boldsymbol{\omega}}(\boldsymbol{c}_{1:N})})
\end{small}     
    }_{\mathcal{I}_1(\boldsymbol{\omega},\boldsymbol{\eta})}$}
    +
\underbrace{
\begin{small}
    \text{MMD}(F_{\boldsymbol{\xi}_{1:N}},F_{\boldsymbol{c}_{1:N}})
\end{small}
}_{\mathcal{I}_2(\boldsymbol{\omega},\boldsymbol{\eta})}
    \bigg\rbrace,\label{BNPWMMD:EG}\\
    &\widehat{\boldsymbol{\theta}}=\arg\min\limits_{\boldsymbol{\boldsymbol{\Theta}}}\bigg\lbrace
    \underbrace{
\begin{small}    
    \dfrac{1}{N}\sum_{i=1}^{N}\Big[D_{\boldsymbol{\theta}}(G_{\boldsymbol{\omega}}(\boldsymbol{\xi}_{i}))
    +D_{\boldsymbol{\theta}}(G_{\boldsymbol{\omega}}(\boldsymbol{c}_{i}))
\end{small}    
\begin{small}    
    +D_{\boldsymbol{\theta}}(G_{\boldsymbol{\omega}}(\widetilde{\boldsymbol{c}}_{i}))
    -3J^{\text{Pos}}_{i,N}D_{\boldsymbol{\theta}}(\mathbf{X}_{i}^{\text{Pos}})\Big]
\end{small}    
    }_{\mathcal{J}_1(\boldsymbol{\theta})}
    +
    \underbrace{\begin{small}
        \lambda L_{\text{GP-D}}
    \end{small}}_{\mathcal{J}_2(\boldsymbol{\theta})}\bigg\rbrace,\label{BNPWMMD:D}\\
&\widehat{\boldsymbol{\omega}}^{\prime}=\arg\min\limits_{\boldsymbol{\Omega}^{\prime}}\text{MMD}(F_{\boldsymbol{c}_{1:N}},F_{\widetilde{\boldsymbol{c}}_{1:N}}).\label{BNPWMMD:CG}
\end{align}
\end{subequations}
\footnotetext{Simultaneous updates of parameters $\boldsymbol{\omega}$ and $\boldsymbol{\eta}$ are facilitated by the essential role played by the generator function, which serves as a decoder during VAE training.}
Here, the generator is fed with $(\boldsymbol{\xi}_{1:N},\boldsymbol{c}_{1:N},\widetilde{\boldsymbol{c}}_{1:N})$, and terms $\mathcal{I}_1(\boldsymbol{\omega},\boldsymbol{\eta})$ and $\mathcal{J}_1(\boldsymbol{\theta})$ refer to minimizing the combined distance given by\footnote{The details for computing the Wasserstein distance (WS) and the maximum mean discrepancy (MMD) are provided in Appendix  \ref{app:distance}.}:
\begin{align}
    d_{\text{WMMD}}(F^{\text{Pos}},F_{G_{\boldsymbol{\omega}}})=\text{WS}(F^{\text{Pos}},F_{G_{\boldsymbol{\omega}}})+\text{MMD}(F^{\text{Pos}},F_{G_{\boldsymbol{\omega}}}).\label{w-mmd}
\end{align}

The inclusion of distance measurement \eqref{w-mmd} will improve the model training outcomes as it incorporates both the overall distribution comparison and the feature-matching techniques. The idea of minimizing the distance between $F^{\text{Pos}}$ and the generator distribution $F_{G_{\boldsymbol{\omega}}}$ was initially introduced independently in the BNP learning approach outlined in \cite{dellaporta2022robust,fazeli2023semi}. This approach suggests that such minimization induces a posterior distribution on the generator's parameter space, which, in turn, leads to a posterior approximation for the network parameters. 
This BNPL strategy effectively handles model misspecification by using a non-parametric prior \citep{fong2019scalable,lyddon2018nonparametric,lyddon2019general,lee2024enhancing}, which promotes stable learning and improves generalization in generative tasks. Additionally, it mitigates sensitivity to outliers and sample variability through prior regularization, leading to more stable out-of-sample performance compared to conventional learning approaches that depend solely on the empirical data distribution for loss computation \citep{bariletto2024bayesian}.
A depiction of this strategy is provided by \citet[Figure 1]{dellaporta2022robust}.

Since the MMD measure is an $L^{2}$-norm distance, the MMD-based terms in $\mathcal{I}_1(\boldsymbol{\omega},\boldsymbol{\eta})$ serves as the posterior reconstruction errors, while $\mathcal{I}_2(\boldsymbol{\omega},\boldsymbol{\eta})$ serves as the regularization error for approximating the variational distribution $F_{E_{\boldsymbol{\eta}}}(\boldsymbol{c}_{1:N}|\mathbf{X}^{\text{Pos}}_{1:N})$. Furthermore, the gradient penalty $\mathcal{J}_2(\boldsymbol{\theta})$ with a positive coefficient $\lambda$ was used to ensure training stability by forcing the $1$-Lipschitz constraint on the discriminator \citep{gulrajani2017improved}. Specifically, this penalty is given by 
\[
L_{\mathrm{GP-D}}=\frac{1}{N}\sum_{i=1}^{N}(\|\nabla_{\widehat{\mathbf{X}}^{\mathrm{Pos}}_i}D_{\boldsymbol{\theta}}(\widehat{\mathbf{X}}^{\mathrm{Pos}}_i) \|_2-1)^2,
\] 
where \( \widehat{\mathbf{X}}^{\mathrm{Pos}}_i = u\widetilde{\mathbf{X}}_{i}^{\mathrm{Pos}} + (1-u)\mathbf{X}^{\mathrm{Pos}}_i \) and \( \widetilde{\mathbf{X}}_{i}^{\mathrm{Pos}} \) represents any posterior fake sample generated by the generator.

Now, to maximize the extraction of information during the encoding process and ensure the preservation of meaningful information in the decoding process, we propose refining Eq.~\eqref{BNPWMMD:EG} using the optimization problem in Eq. \eqref{EG-refinment} along with incorporating objective functions in Eqs.~\eqref{gamma:est1} and \eqref{gamma:est2}. 
\begin{subequations}\label{gamma:est}
\begin{align}
(\widehat{\boldsymbol{\omega}},\widehat{\boldsymbol{\eta}})&=
\arg\min\limits_{\boldsymbol{\boldsymbol{\Omega},\boldsymbol{\mathcal{H}}}}\lbrace\mathcal{I}_1(\boldsymbol{\omega},\boldsymbol{\eta})+\mathcal{I}_2(\boldsymbol{\omega},\boldsymbol{\eta})-\mathcal{L}_{\boldsymbol{\gamma}_1}^{\text{DPi}}(\scalebox{1}{$\mathbf{X}^{\text{Pos}}_{1:N},\boldsymbol{c}_{1:N}$})-\mathcal{L}_{\boldsymbol{\gamma}_2}^{\text{DPi}}(\scalebox{1}{$G_{\boldsymbol{\omega}}(\boldsymbol{c}_{1:N}),\boldsymbol{c}_{1:N}$})
\nonumber\\
&~~~-\mathcal{L}_{\boldsymbol{\gamma}_2}^{\text{DPi}}(\scalebox{1}{$G_{\boldsymbol{\omega}}(\widetilde{\boldsymbol{c}}_{1:N})\footnotemark,\boldsymbol{c}_{1:N}$})-\mathcal{L}_{\boldsymbol{\gamma}_2}^{\text{DPi}}(\scalebox{1}{$G_{\boldsymbol{\omega}}(\boldsymbol{\xi}_{1:N}),\boldsymbol{c}_{1:N}$})\big\rbrace,\label{EG-refinment}\\
\widehat{\boldsymbol{\gamma}}_1&=
\arg\min\limits_{\boldsymbol{\Gamma}_1}-\mathcal{L}_{\boldsymbol{\gamma}_1}^{\text{DPi}}(\scalebox{1}{$\mathbf{X}^{\text{Pos}}_{1:N},\boldsymbol{c}_{1:N}$}),\label{gamma:est1}\\
\widehat{\boldsymbol{\gamma}}_2&=
\arg\min\limits_{\boldsymbol{\Gamma}_2}\big\lbrace
-\mathcal{L}_{\boldsymbol{\gamma}_2}^{\text{DPi}}(\scalebox{1}{$G_{\boldsymbol{\omega}}(\boldsymbol{c}_{1:N}),\boldsymbol{c}_{1:N}$})-\mathcal{L}_{\boldsymbol{\gamma}_2}^{\text{DPi}}(\scalebox{1}{$G_{\boldsymbol{\omega}}(\widetilde{\boldsymbol{c}}_{1:N}),\boldsymbol{c}_{1:N}$})-\mathcal{L}_{\boldsymbol{\gamma}_2}^{\text{DPi}}(\scalebox{1}{$G_{\boldsymbol{\omega}}(\boldsymbol{\xi}_{1:N}),\boldsymbol{c}_{1:N}$})\big\rbrace.\label{gamma:est2}
\end{align}
\end{subequations}
\footnotetext{$G_{\boldsymbol{\omega}}(\widetilde{\boldsymbol{c}}_{1:N})$ can be expressed as $f_1(\cdot)=G_{\boldsymbol{\omega}}(\widetilde{\boldsymbol{c}}_{1:N})-\mathbf{X}_{1:N}+\cdot$ in definitions \eqref{DPDV-lower} and \eqref{DPJS}. Therefore, it is well-defined.}
Note that we use the neural network $\{T_{\boldsymbol{\gamma}_{1}}\}_{\boldsymbol{\gamma}_{1}\in\boldsymbol{\Gamma}_{1}}$ to estimate $\text{MI}(\mathbf{X},E_{\boldsymbol{\eta}}(\mathbf{X}))$, while we use $\{T_{\boldsymbol{\gamma}_{2}}\}_{\boldsymbol{\gamma}_{2}\in\boldsymbol{\Gamma}_{2}}$ to estimate $\text{MI}(\widetilde{\mathbf{X}},E_{\boldsymbol{\eta}}(\mathbf{X}))$. Here, $\widetilde{\mathbf{X}}$ denotes any fake data generated by $G_{\boldsymbol{\omega}}(\cdot)$. The general diagram of the model is visually shown in Figure \ref{BNPWMMD-diagram}.
A detailed description of the model's architecture, accompanied by a flowchart illustrating its operational process, is provided in Appendix  \ref{app:imp-details}.

\section{Experimental results}\label{sec:experiment}

In this section, we evaluate the performance of the procedure using simulation and real datasets. Initially, we need to set the hyperparameters of the DP before placing it on the data distribution $F$. 
Following \cite{fazeli2023semi}, we consider the base measure to be a multivariate normal distribution, with the concentration parameter estimated using the MAP procedure described in Remark \ref{remark1} and \ref{remark2} of Section \ref{sec:DP}.
Additionally, the value of \( N \) in the DP approximation of Eq.~\eqref{approx of DP} is chosen according to the guidance provided in Remark \ref{remark3} of Section \ref{sec:DP}



We initiate this section by presenting several simulation examples to assess the performance of both KL-based and JS-based DPMINEs in estimating the MI. Subsequently, we will explore the application of the better-performing estimator in refining the training procedure of VAE-GANs.
\subsection{MINE: BNP versus FNP}\label{sec:BNP vs FNP}
\subsubsection{Estimation accuracy} In our experimental evaluation, we must evaluate the accuracy of both the KL-based and JS-based DPMINEs in estimating the MI as it directly impacts the accuracy of refining the desired BNP VAE-GAN model. Additionally, we need to compare the BNP estimators with their frequentist counterparts. To achieve this, we present estimation values of both BNP and FNP procedures through Figure \ref{MINE-kl-js} for two cases--when the two random variables $X$ and $Y$ are independent, and when they are dependent. The figure shows that the BNP estimators exhibit better convergence than the FNP estimators as well as lower variance, which stems from the improved out-of-sample performance of the BNP procedure that reduces sensitivity to sample size and outliers. However, in non-independent cases, the JS-based estimator shows a gap between its estimated value and the true value, unlike the KL-based estimator. Therefore, we decided to continue our investigation in the next experimental part using KL-based estimators.
\begin{figure*}[htbp]
\centering\begin{tabular}[h!]{|cc||cc|}
\hline
\multicolumn{2}{|c||}{$\text{MI}^{\text{True}}(X,Y)=0$}&\multicolumn{2}{c|}{$\text{MI}^{\text{True}}(X,Y)=0.69$}\\
\hline
\includegraphics[width=.22\textwidth,height=2cm]{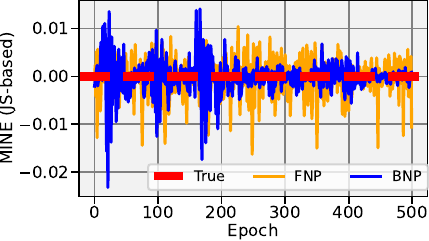}& \includegraphics[width=.22\textwidth,height=2cm]{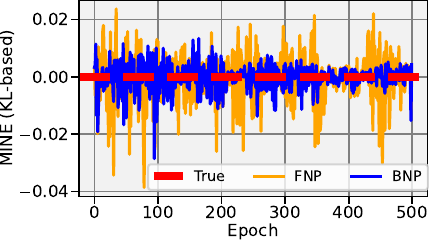}&\includegraphics[width=.22\textwidth,height=2cm]{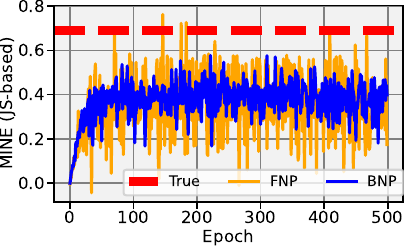}& \includegraphics[width=.22\textwidth,height=2cm]{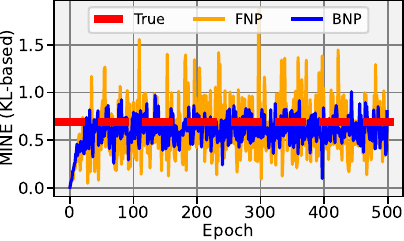}\\
\hdashline
\multicolumn{2}{|c||}{{\scriptsize $X,Y\overset{\text{IID}}{\sim}U(-1,1)$}}&\multicolumn{2}{|c|}{{\scriptsize $X=\text{sign}(Z),\,Z\sim N(0,1)$; $Y=X+\epsilon,\, \epsilon\sim N(0,0.2)$}}\\
\hline
\end{tabular}
\caption{MINE estimation of the MI between two random variables $X$ and $Y$ using both BNP and FNP frameworks, given a sample size of 16 over 500 epochs. The red dashed line represents the true value of MI. The blue line represents the BNP estimation of MI (our method), while the yellow line represents the FNP estimation. The left-hand figure in each experiment represents the JS-based estimator, and the right-hand figure represents the KL-based estimator. 
}
\label{MINE-kl-js}
\end{figure*}

\subsubsection{Robustness of the DPMINE to high dimensionality}
To assess the robustness of our estimator to high dimensionality, we report the results of two previous examples with varying dimensions of $d=2,10,100,1000$ (see Figures~\ref{robust-Independence} and \ref{robust-Dependence}). The blue color represents the BNP estimation of MI, while the yellow one represents the FNP estimation. These figures demonstrate the robustness of the DPMINE estimator to high dimensionality, indicating better convergence and reduced fluctuations in MI estimation, particularly in cases where the two variables are not independent. This highlights the estimator's ability to control variance, which is a key objective of this paper. In Figure~\ref{robust-Dependence}(e), we provide results for 1500 epochs, as higher dimensions require more epochs for convergence based on the chosen learning rate, in addition to the results already shown for 500 epochs in Figure \ref{robust-Dependence}(d).

\begin{figure}[h!]
\centering
\subfloat[$d=2$]{
  \begin{minipage}[b]{1\linewidth}
    \centering
    \includegraphics[width=.4\textwidth,height=3cm]{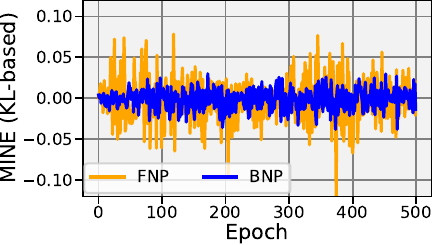}
    \includegraphics[width=.4\textwidth,height=3cm]{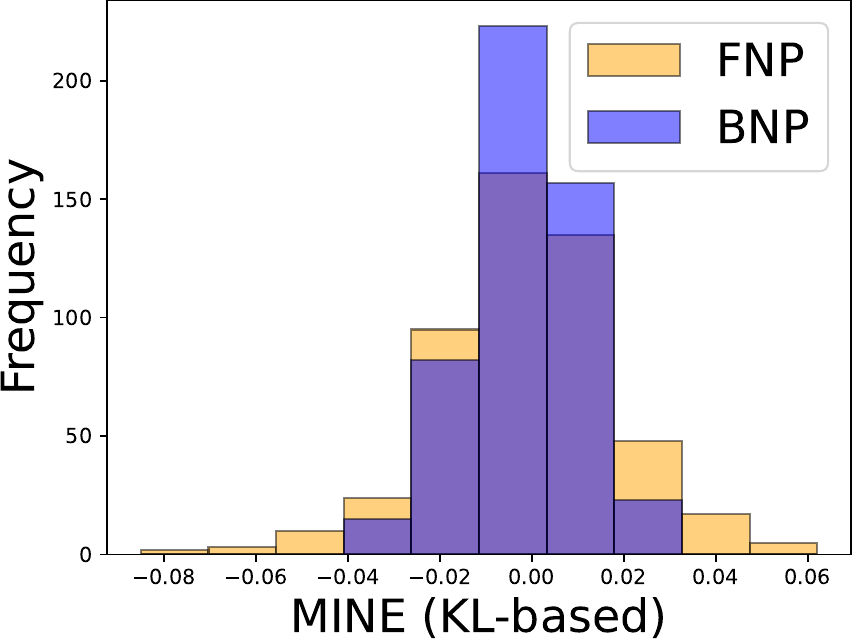}
  \end{minipage}
}
\\
\subfloat[ $d=10$]{
  \begin{minipage}[b]{1\linewidth}
    \centering
    \includegraphics[width=.4\textwidth,height=3cm]{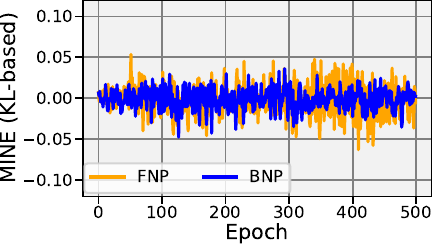}
    \includegraphics[width=.4\textwidth,height=3cm]{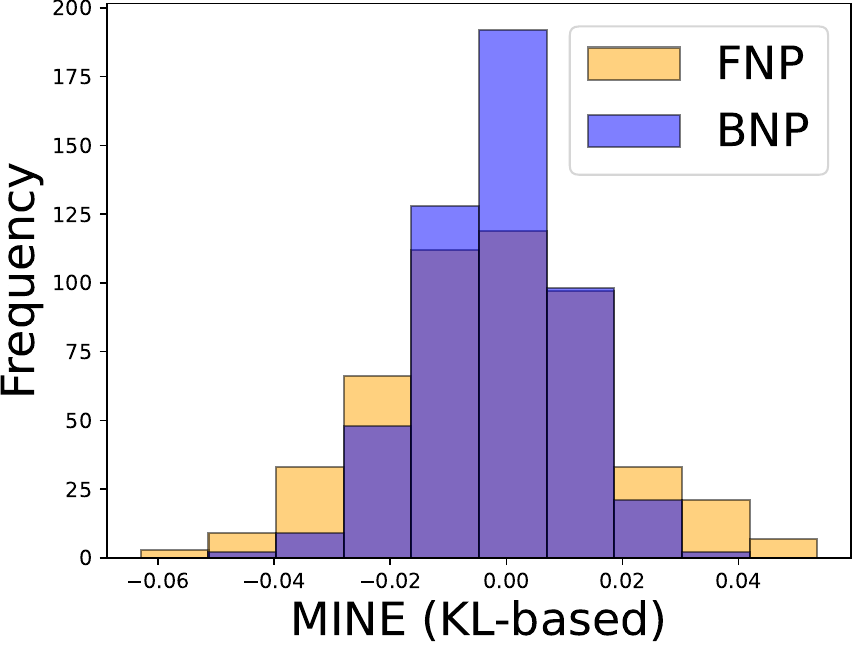}
  \end{minipage}
}
\\
\subfloat[ $d=100$]{
  \begin{minipage}[b]{1\linewidth}
    \centering
    \includegraphics[width=.4\textwidth,height=3cm]{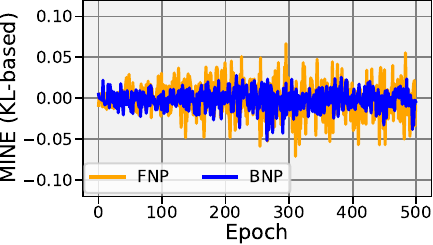}
    \includegraphics[width=.4\textwidth,height=3cm]{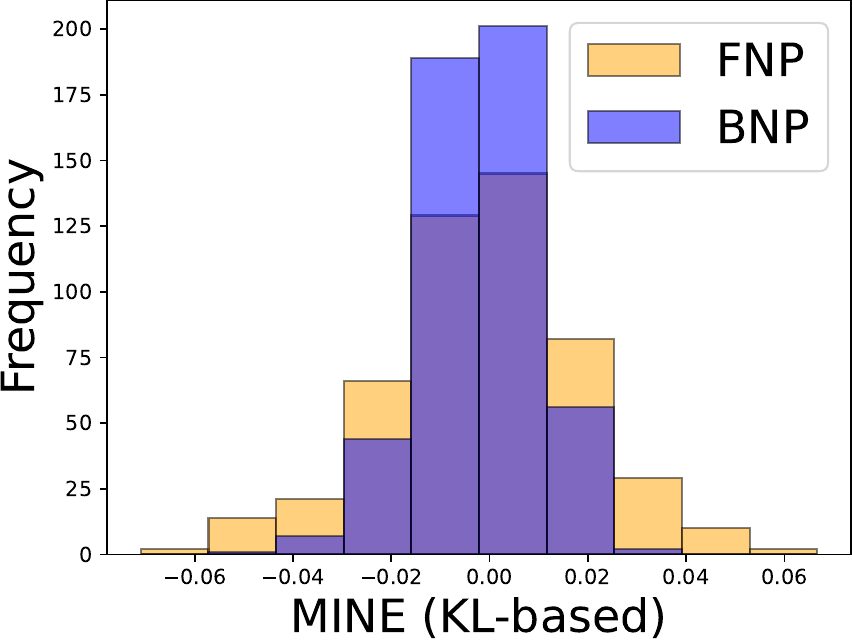}
  \end{minipage}
}
\\
\subfloat[ $d=1000$]{
  \begin{minipage}[b]{1\linewidth}
    \centering
    \includegraphics[width=.4\textwidth,height=3cm]{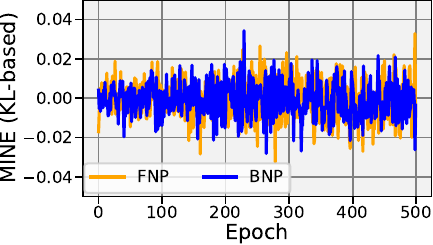}
    \includegraphics[width=.4\textwidth,height=3cm]{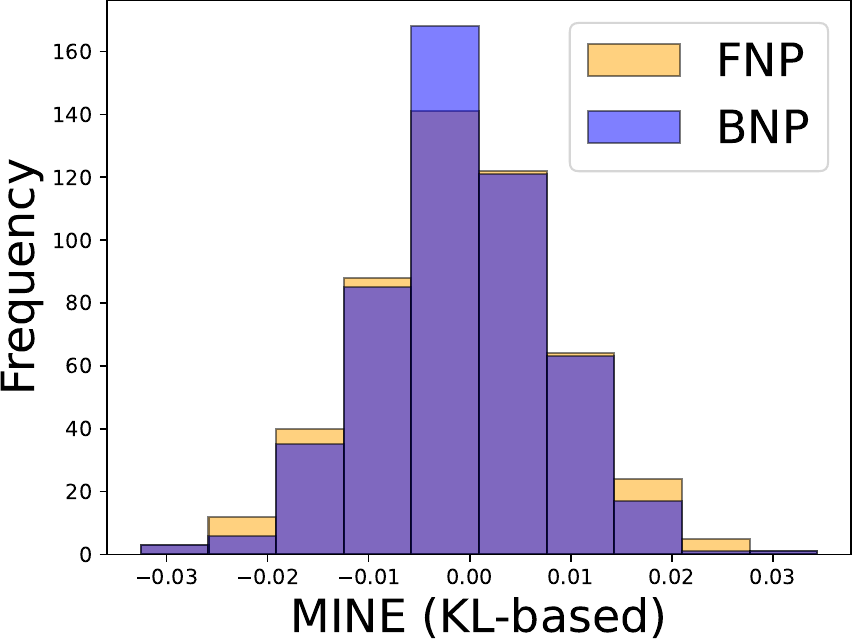}
  \end{minipage}
}
\caption{MI estimations between two random variables $\mathbf{X},\mathbf{Y}\overset{\text{IID}}{\sim}U(-\mathbf{1},\mathbf{1})$, $\mathbf{X},\mathbf{Y}\in\mathbb{R}^d$ for various dimension $d$.}
\label{robust-Independence}
\end{figure}

\begin{figure}[htbp]
\centering
\subfloat[$d=2$]{
  \begin{minipage}[b]{1\linewidth}
    \centering
    \includegraphics[width=.4\textwidth,height=3cm]{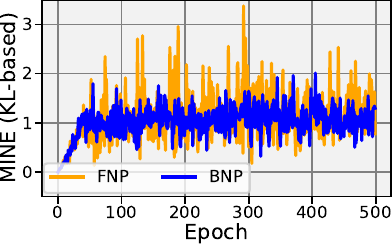}
    \includegraphics[width=.4\textwidth,height=3cm]{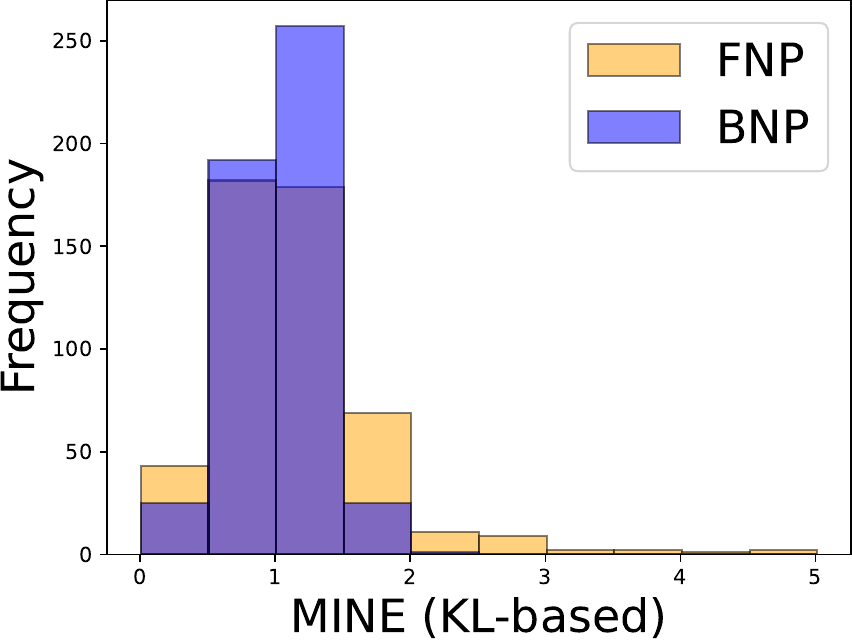}
  \end{minipage}
}
\\
\subfloat[ $d=10$]{
  \begin{minipage}[b]{1\linewidth}
    \centering
    \includegraphics[width=.4\textwidth,height=3cm]{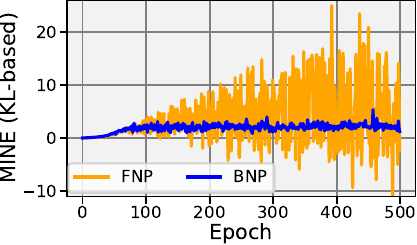}
    \includegraphics[width=.4\textwidth,height=3cm]{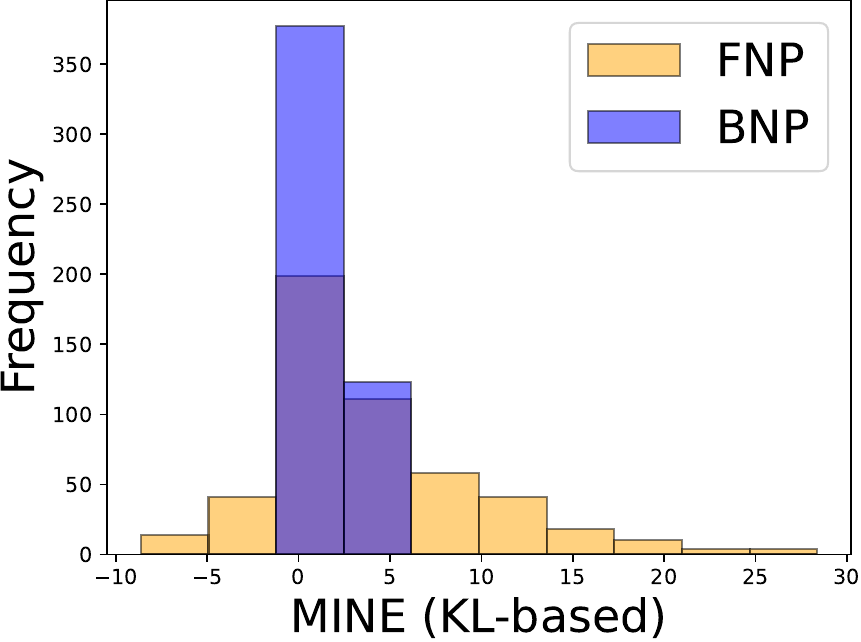}
  \end{minipage}
}
\\
\subfloat[ $d=100$]{
  \begin{minipage}[b]{1\linewidth}
    \centering
    \includegraphics[width=.4\textwidth,height=3cm]{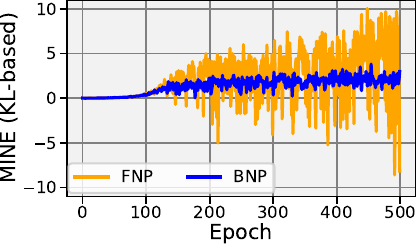}
    \includegraphics[width=.4\textwidth,height=3cm]{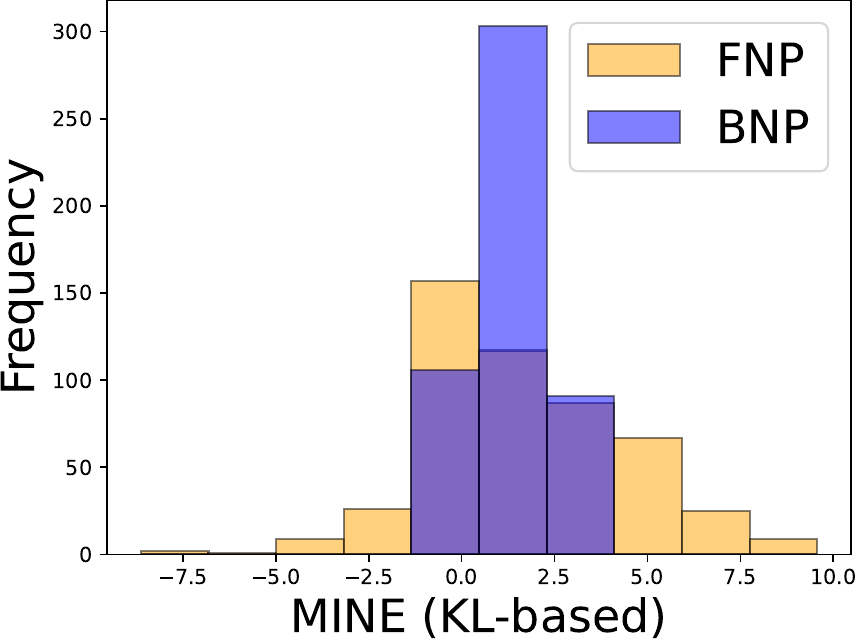}
  \end{minipage}
}
\\
\subfloat[ $d=1000$]{
  \begin{minipage}[b]{1\linewidth}
    \centering
    \includegraphics[width=.4\textwidth,height=3cm]{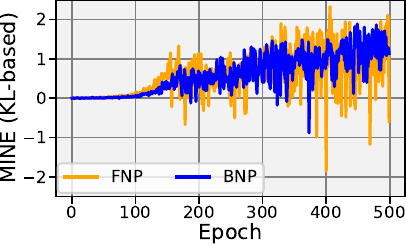}
    \includegraphics[width=.4\textwidth,height=3cm]{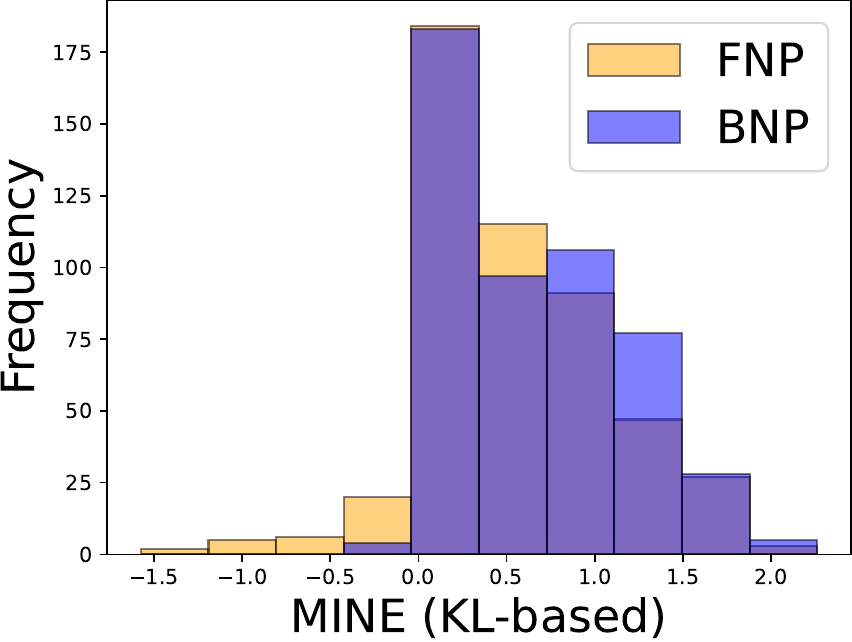}
  \end{minipage}
}
\\
\subfloat[ $d=1000$ (epoch=1500)]{
  \begin{minipage}[b]{1\linewidth}
    \centering
    \includegraphics[width=.4\textwidth,height=3cm]{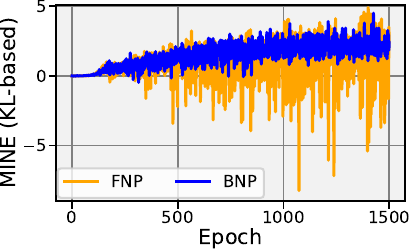}
    \includegraphics[width=.4\textwidth,height=3cm]{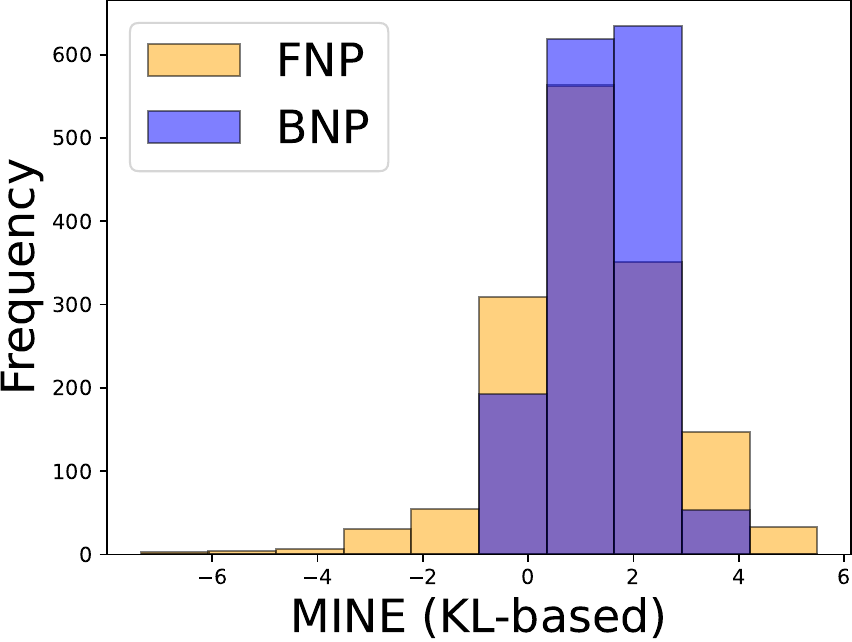}
  \end{minipage}
}
\caption{MI estimations between two random variables $\mathbf{X}=\text{sign}(\mathbf{Z}),\,(\mathbf{Z}\in\mathbb{R}^d)\sim N(\mathbf{0},I_d)$ and $\mathbf{Y}=\mathbf{X}+\boldsymbol{\epsilon},\, (\boldsymbol{\epsilon}\in\mathbb{R}^d)\sim N(\mathbf{0},0.2I_d)$ for various dimension $d$.}
\label{robust-Dependence}
\end{figure}

\begin{table}[!t]
\centering
\begin{tabular}{|c|c|c|c|c|c|c|}
\hline
\multirow{2}{*}{\textbf{Estimator}} & \multirow{2}{*}{\textbf{Epochs}} & \multicolumn{4}{c|}{\textbf{Dimensions}} \\
\cline{3-6}
 & & \textbf{2} & \textbf{10} & \textbf{100} & \textbf{1000} \\
\hline
\multirow{2}{*}{MINE} & 500 & 5 sec & 5 sec & 5 sec & 5 sec \\
& 1500 & 12 sec & 12 sec & 12 sec & 12 sec \\
\hline
\multirow{2}{*}{DPMINE} & 500 & 42 sec & 42 sec & 42 sec & 42 sec \\
& 1500 & 1 min 25 sec & 1 min 25 sec & 1 min 25 sec & 1 min 25 sec \\
\hline
\end{tabular}
\caption{Run time of the MINE and DPMINE on different dimensions of the data and epochs.}
\label{table:runtime}
\end{table}
\subsection{Refining VAE-GAN training via MINE}
To implement the BNPWMMD+DPMINE, we consider the Gaussian kernel function, defined as $k_{G_{\sigma}}(\mathbf{X},\mathbf{Y})=\exp(\frac{-||\mathbf{X}-\mathbf{Y}||^{2}}{2\sigma^2}) $ with bandwidth parameter $ \sigma $, in the MMD distance given in Appendix \ref{app:distance}. We search for  the appropriate bandwidth parameter $\sigma$ over a fixed grid of values, 
$\sigma\in\lbrace 2, 5, 10, 20, 40, 80 \rbrace$. We then compute the mixture of Gaussian kernels, denoted as $k(\cdot, \cdot) = \sum_{\sigma} k_{G_{\sigma}}(\cdot, \cdot)$. This selection of kernel function and bandwidth has been shown to yield satisfactory performance in training MMD-based GANs, as mentioned in \cite{Li,fazeli2023semi,fazeli2023bayesian}. Then, we consider next examples to investigate the proposed approach.

\subsubsection{Synthetic Example}
\paragraph{Coil Dataset:} 
We first look at a synthetic example that showcases the effectiveness of the BNPWMMD+DPMINE in mitigating mode collapse. In this example, we simulate $5000$ samples in 3D space, denoted as $(X(t), Y(t), Z(t))$, where $X(t) = 6\cos{t}$, $Y(t) = 6\sin{t}$, and $Z(t) = t$, with $t$ ranging from $-2\pi$ to $4\pi$.  We then normalize all datasets to a range between $-1$ and $1$. 
This normalization ensures compatibility with the hyperbolic tangent activation function, which is used in the generator's last layer in all compared models. We also used a latent dimension of 100 with a sub-latent dimension of 10 in this example.
Additionally, we provide the results of the BNPWMMD 
to display the basic model's performance
in covering data space in the absence of DPMINE. 

For a comprehensive comparison, we include results from a set of experiments involving three well-known VAE-GAN models commonly used in the literature: BiGAN+MINE, $\alpha$-WGAN+MINE, and $\alpha$-WGAN.\footnote{In this section, the GP notation is omitted for brevity, as GP is incorporated into the loss functions of both the BNPWMMD and $\alpha$-WGAN models.} As demonstrated by \cite{belghazi2018mutual}, adding MINE improves the performance of BiGAN, as evidenced by their comparison between BiGAN and BiGAN+MINE. Thus, we exclude the standalone BiGAN from our experiments.

Notably, BNPWMMD leverages posterior samples generated from \( F_{N}^{\text{Pos}} \), as defined in Eq.~\eqref{approx of DP}, to optimize its objective functions, rather than relying on direct samples from the empirical distribution used by its FNP counterpart. Consequently, it is essential to employ the BNP version of MINE to regularize BNPWMMD's cost function during network parameter optimization. In contrast, FNP models such as \( \alpha \)-WGAN and BiGAN use the empirical distribution to compute sample means in their cost functions, which are compatible with the calculation of MINE in Eq.~\eqref{DV-ECDF} (see Appendix \ref{app:baseline} for further details on the baselines' cost functions).
Given these structural differences, applying the MINE designed for FNP models to BNPWMMD (i.e., BNPWMMD+MINE) or using DPMINE with FNP models like \( \alpha \)-WGAN and BiGAN (i.e., \( \alpha \)-WGAN+DPMINE or BiGAN+DPMINE) is inappropriate. Such combinations are incompatible with the input structures and cost function formulations of these models.


We show 1,000 samples in Figure \ref{coil-exm}, generated from noise input (random samples) and reconstructed from encoded and decoded real input. 
This figure clearly demonstrates the impact that the DPMINE has in generative modeling, as it allows the BNPWMMD to cover the entire data space. Although applying the MINE to $\alpha$-WGAN improves the convergence of random samples compared to the absence of these refinements, it still struggles to effectively cover the data space even with these modifications. 

Moreover, the BNPWMMD+DPMINE maintains a good balance in coverage between random and reconstructed samples while the $\alpha$-WGAN model cannot provide a reasonable balance between these two cases. The BiGAN+MINE performs poorly, in comparison, after $5000$ epochs and requires a significantly larger number of epochs to converge. However, we found that even with these extended epochs, it cannot achieve better quality than the BNP counterpart provided in this paper. An additional synthetic example in Appendix \ref{app:additional-results-bunny} further demonstrates the effectiveness of our model.

\begin{figure*}[htbp]
\centering\begin{tabular}[h!]{|c|c||c|c|c|}
\hline
\multirow{2}{*}[10pt]{\rotatebox{90}{Real Dataset}}&\multirow{2}{*}[30pt]{\includegraphics[width=0.17\textwidth,height=3.5cm]{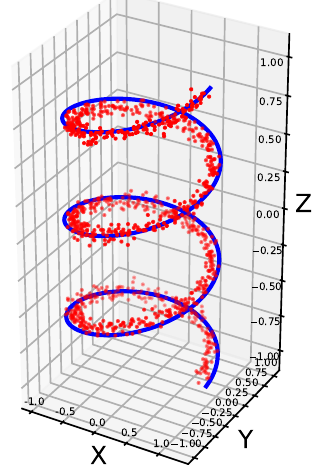}}
&\multirow{2}{*}[25pt]{\rotatebox{90}{Generated Samples}} &\multirow{1}{*}[40pt]{\rotatebox{90}{\scriptsize Random}}& \includegraphics[width=.63\textwidth,height=2.7cm]{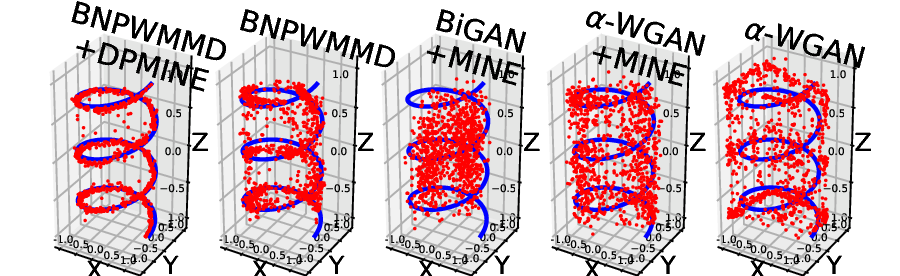}\\
\cline{4-5}
&& &\multirow{1}{*}[53.5pt]{\rotatebox{90}{\scriptsize Reconstruction}}& \includegraphics[width=.63\textwidth,height=2.7cm]{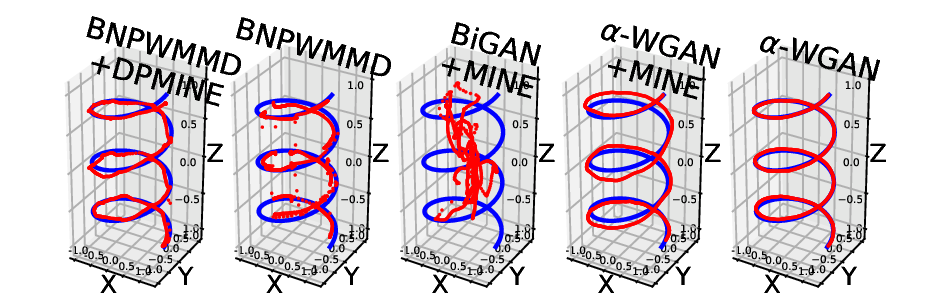}\\
\hline
\end{tabular}
\caption{1000 randomly generated and reconstructed samples for the coil example. 
}
\label{coil-exm}
\end{figure*}

\subsubsection{Real Example}
\paragraph{COVID-19 Dataset:} This dataset comprises of 3D chest CT images of 1000 subjects diagnosed with lung infections after testing positive for COVID-19\footnote{The dataset is freely available online at \url{https://doi.org/10.7910/DVN/6ACUZJ} (license: CC0 1.0).}. All images in this dataset are stored in the DICOM format and have a resolution of 16 bits per pixel, with dimensions of $512\times512$ pixels in grayscale. 
We randomly selected 200 patients, resulting in a total of 91,960 images. As part of the preprocessing step, we first stored each patient's data in a Neuroimaging Informatics Technology Initiative (NIFTI) format file. Then, we converted each NIFTI file into a 3D image with a dimension of $64\times64\times64$. Each dimension represents the axial, sagittal, and coronal views of the lungs. After normalizing the dataset, we used a mini-batch size of 16 and trained all compared models for 7500 epochs. We also used a latent dimension of 1000 as suggested in \cite{kwon2019generation} for high-dimensional cases, with a sub-latent dimension of 100 as used in \cite{fazeli2023bayesian}. 

To showcase the image generation ability of BNPWMMD+DPMINE compared to other models, we provide a 3D visualization of real and random samples of chest CT images in Figure \ref{fig-cube}. This indicates the exceptional capability of the model to generate sharp images. For a more comprehensive view of the generated samples, we included additional slices from each dimension representation in Appendix \ref{app:additional-results-covid} 
We also compared the reconstruction capability of different models by displaying slices of a reconstructed sample from each model in the Appendix. The BNPWMMD+DPMINE excels at reconstructing the high-dimensional dataset, further validating the effectiveness of our proposed model. 
\begin{figure}[ht]
\centering
\subfloat[{\tiny Real Dataset}]{\label{fig1}\includegraphics[width=.22\linewidth]{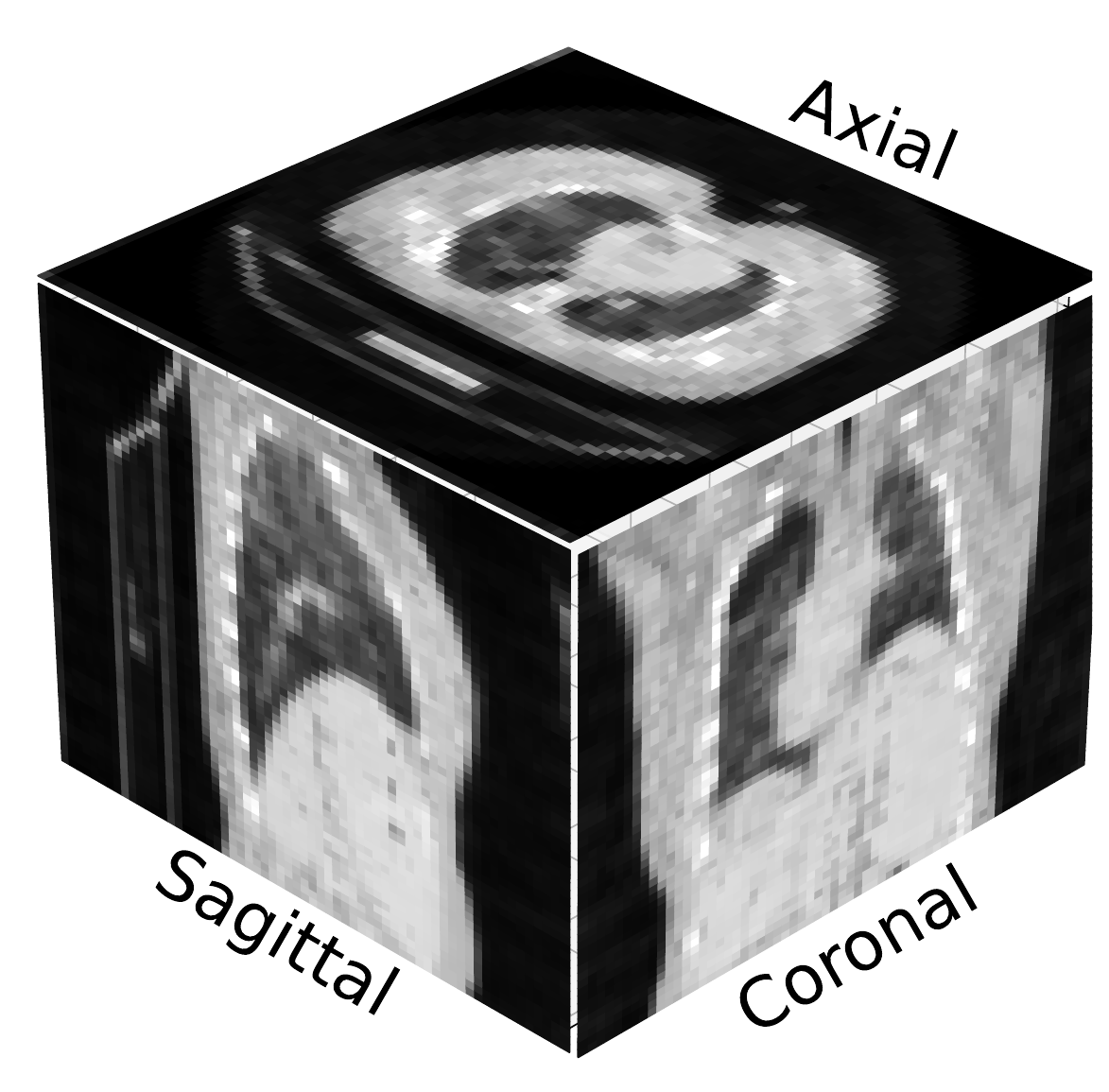}}\quad
\subfloat[{\tiny BNPWMMD+DPMINE}]{\label{fig2}\includegraphics[width=.22\linewidth]{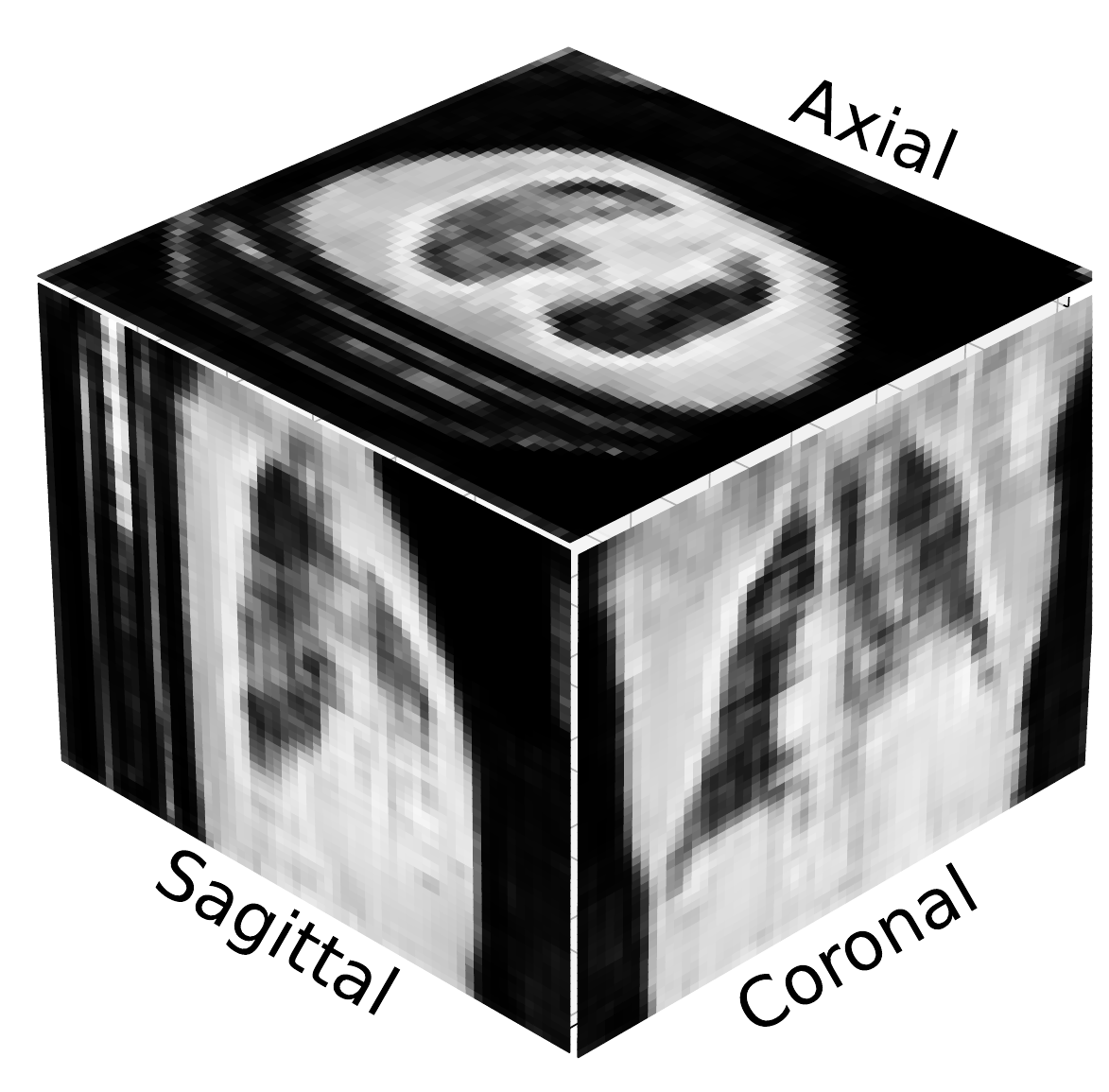}}\quad
\subfloat[{\tiny $\alpha$-WGAN+MINE}]{\label{fig3}\includegraphics[width=.22\linewidth]{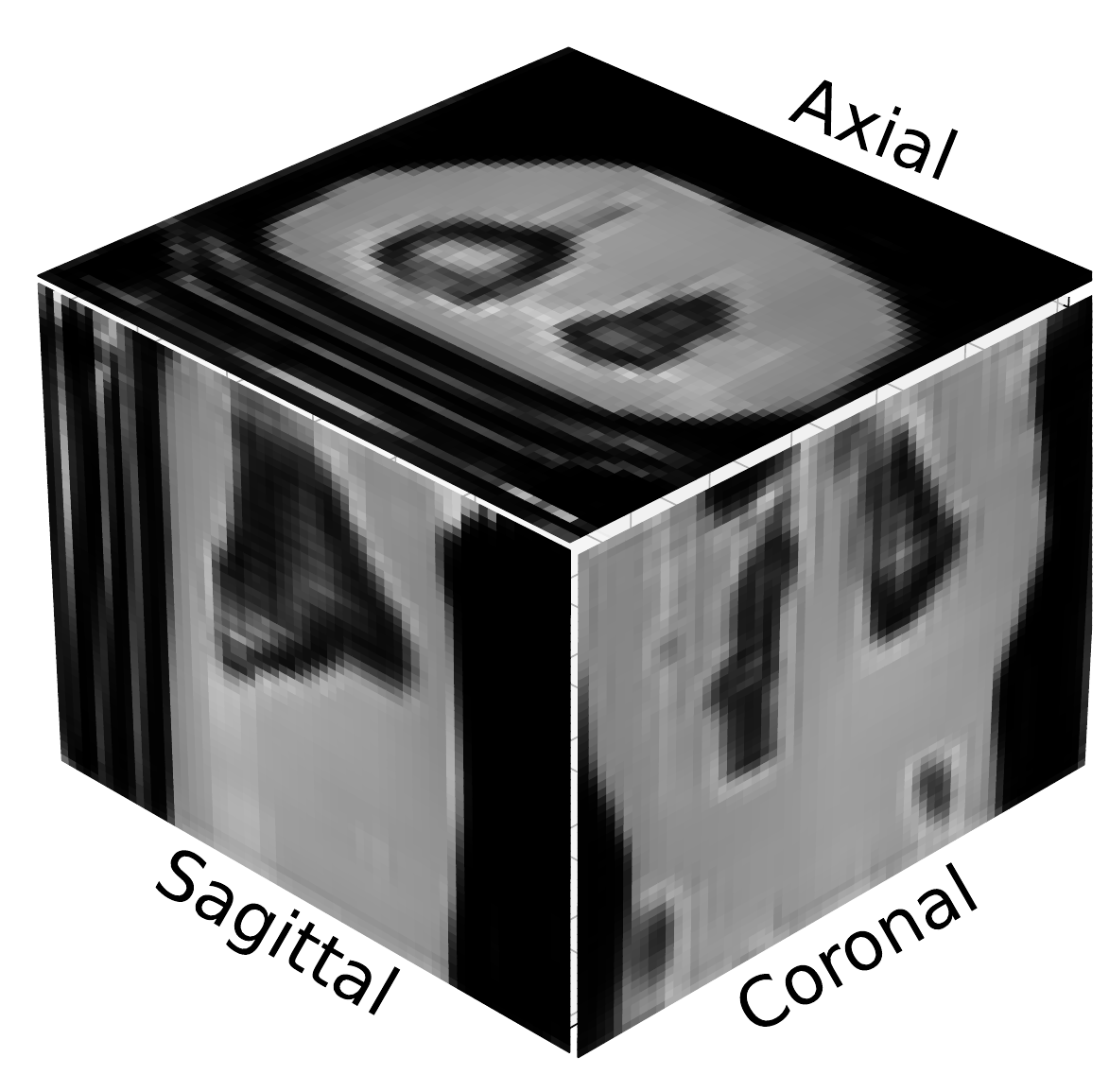}}\quad
\subfloat[{\tiny BiGAN+MINE}]{\label{fig4}\includegraphics[width=.22\linewidth]{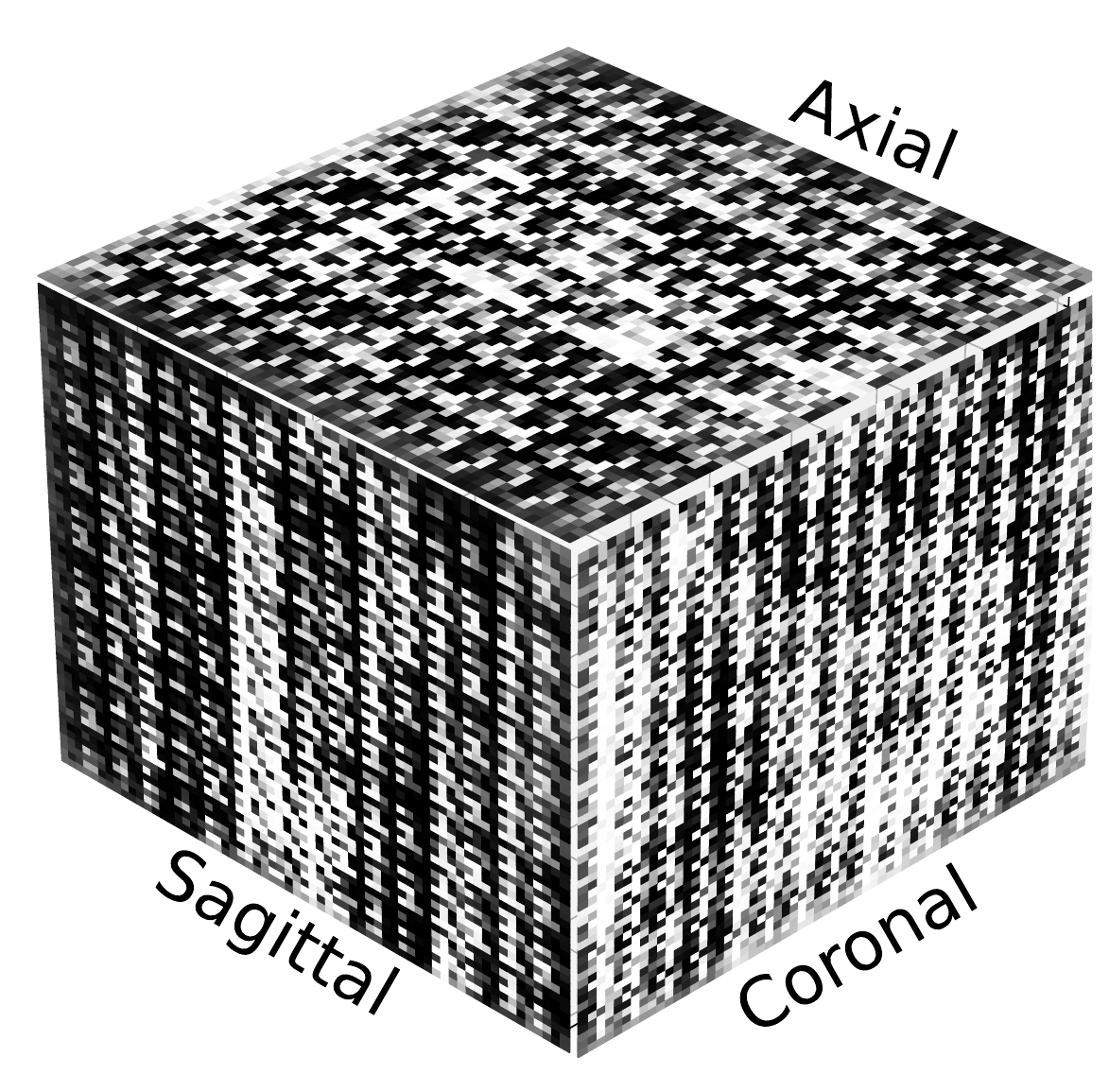}}
\caption{A 3D visualization of a real sample and a randomly generated sample in COVID-19 example.}
\label{fig-cube}
\end{figure}

However, additional quantitative tools are essential to assess the similarity between real and fake samples. 
Reducing the data dimensionality allows for better visualization to understand the similarities between data points. 
In Figure \ref{fig1}(a), we present the $t$-SNE \citep{van2008visualizing} mapping of the real dataset and 200 random samples to 2D points which clearly shows the extensive coverage of samples generated from the BNPWMMD+DPMINE on the real dataset. Conversely, the $\alpha$-WGAN+MINE exhibits mode collapse, and BiGAN+MINE demonstrates poor performance.  We also use a custom encoder, a 2-layer linear neural network, to map samples to the 2D space displayed by Figure \ref{fig1}(b), which supports a similar conclusion.  

To quantify the dissimilarity in the 2D space between real and generated features, denoted as $\boldsymbol{f}_r:=(\text{Feature1}_{1:200,r},\text{Feature2}_{1:200,r})$ and $\boldsymbol{f}_g:=(\text{Feature1}_{1:200,g},\text{Feature2}_{1:200,g})$, we calculate the average of Fr\'{e}chet inception distance (FID) and the Kernel inception distance (KID) metrics over 100 replications. 
Results for the empirical MMD metric between real and generated scans in the 2D feature space, computed with the same kernel and bandwidth parameters used in the BNPWMMD model, are also presented in Table \ref{table1}. In this table, we deliberately avoid computing MMD in the data space, as our proposed model is already optimized by minimizing MMD between real and fake scans in the data space as a part of training process. As a result, our model is expected to achieve better MMD scores in the data space compared to others, rendering such a comparison unfair. Smaller values in the reported metrics indicate lower dissimilarity and better performance, thereby confirming the significant role of the proposed model in avoiding mode collapse. Additionally, we use the Multi-Scale Structural Similarity Index (MS-SSIM) \citep{wang2003multiscale} to evaluate the perceived visual quality of images. MS-SSIM captures both structural information and visual quality by performing a multi-scale decomposition and comparing luminance, contrast, and structure across various scales between real and synthetic samples. As an instance-level similarity metric, MS-SSIM assumes a one-to-one mapping between real and generated samples. While the methods being compared are unconditional generators, using MS-SSIM is justified because the axial, sagittal, and coronal views of lung scans are generally well-aligned across patients. This alignment facilitates meaningful comparisons between synthetic and corresponding real views, enabling an evaluation that effectively captures structural fidelity.
A higher MS-SSIM score suggests better quality, while a lower score indicates lower quality.
For further details on the calculations, refer to Appendix \ref{app:imp-details}. Additionally, Appendix  \ref{app:additional-results-covid} presents another real-world example using a brain MRI dataset of patients with tumors, extending the evaluation to a broader range of datasets from different domains. This inclusion provides a more comprehensive validation of the model’s effectiveness and highlights its versatility across various types of data.

\begin{figure}[htbp]
  \centering
  \subfloat[{\small t-SNE}]{\includegraphics[width=.45\linewidth]{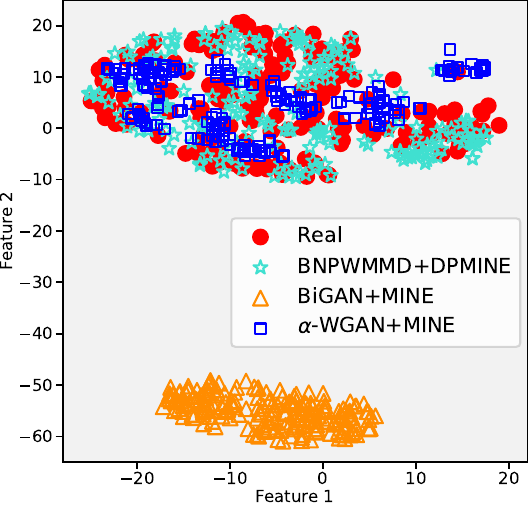}}\quad
  \subfloat[{\small Custom encoder}]{\includegraphics[width=.45\linewidth]{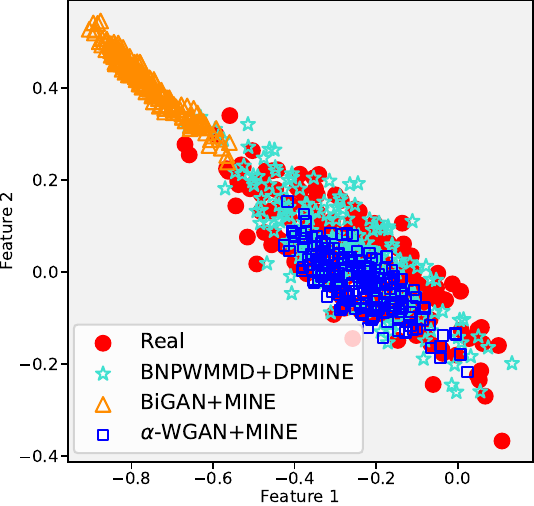}}
  \caption{Scatter plot of 2D features in COVID-19 example.}
  \label{fig1}
\end{figure}

\begin{table}[ht]
\centering
\small
\captionof{table}{Comparison of statistical scores for COVID-19 example.}
\scalebox{0.85}{
\begin{tabular}{ l l| c |c|c }
\hline
\toprule
\multicolumn{2}{l|}{\multirow{2}{*}{Evaluator}} & BNPWMMD&$\alpha$-WGAN & BiGAN \\
&&+DPMINE&+MINE&\hspace{.25 cm}+MINE\\
\hline
\multirow{1}{*}{FID} &Custom Encoder&$\mathbf{0.00063}$ &$0.02177$ &$0.25729$\\
&t-SNE&$\mathbf{9.48447}$&$13.3682$&$6298.69$\\
\hline
\multirow{1}{*}{KID}&Custom Encoder&$\mathbf{0.00069}$ &$0.02281$ &$0.46684$\\ 
&t-SNE&$\mathbf{0.99858}$&$2.31426$&$3107.53$\\
\hline
\multirow{1}{*}{MMD} &Custom Encoder&$\mathbf{0.00902}$ &$0.05131$ &$0.92849$\\
&t-SNE&$\mathbf{0.00074}$&$0.05742$&$0.34447$\\
\hline
 \multirow{1}{*}{MS-SSIM}&&$\mathbf{0.43465}$ &$0.39756$ &$0.02251$\\
\bottomrule
\end{tabular}
}
\label{table1}
\end{table}

\section{Concluding remarks}\label{sec:conclusion}
We introduced a Bayesian nonparametric (BNP) framework for mutual information estimation using a Dirichlet process-based neural estimator (DPMINE). The core theoretical contribution of this work lies in the construction of a tighter lower bound for mutual information using a finite representation of the Dirichlet process posterior, which was established specifically for KL-based estimators. This tighter bound enhances the informativeness of the MI loss function and introduces a form of regularization that reduces sensitivity to sample variability and outliers, leading to improved convergence during training. The combination of prior regularization and empirical data allows DPMINE to address the trade-off between variance and accuracy, which is a known limitation of existing KL-based estimators. Our experimental results demonstrate that DPMINE outperforms empirical distribution function-based methods (EDF-based) in terms of convergence and variance reduction, particularly in complex high-dimensional generative modeling tasks such as 3D CT image generation.

The proposed estimator effectively enhances the training of deep generative models by improving the reliability and informativeness of the learned representations. Notably, the demonstrated improvements in convergence and variance control make DPMINE highly suitable for handling challenging high-dimensional datasets where accurate mutual information estimation is critical. Although this paper focuses on the application of DPMINE in generative models, the underlying BNP framework extends beyond this setting and opens new opportunities for optimizing mutual information in other complex learning tasks, including representation learning, reinforcement learning, and Bayesian decision-making.  

An important future direction involves adapting the DPMINE framework to address emerging challenges in training large language models (LLMs). Mutual information estimation is increasingly recognized as a key tool for improving representation quality and guiding information flow in transformer-based architectures. The ability of DPMINE to provide stable and accurate estimates of information content could be leveraged to refine LLM pretraining objectives, enhance token embedding strategies, and improve generalization in downstream tasks. Furthermore, by reducing variance and stabilizing training, the BNP framework offers a promising solution for mitigating the sensitivity of LLM training to noise and sample heterogeneity, particularly in few-shot and domain-adaptive settings.  

Additionally, the regularization effect introduced by the BNP framework provides a principled approach for reducing overfitting and improving generalization in settings with limited data, such as low-resource languages and domain-specific corpora. Incorporating DPMINE into federated learning contexts could further strengthen privacy-preserving data generation and cross-device model training by enhancing the robustness of information flow across non-IID datasets. However, this will pose significant challenges since the procedure developed in this paper is based on the IID assumption, and extending it to non-IID. settings will require additional methodological adjustments.  






\newpage

\appendix

\label{app:theorem}



\section{Notations}
\begin{itemize}
    \item $F$: Real data distribution.
    \item $(\mathbf{X}_{1:n})$: A sample of $n$ independent and identically distributed random variables generated from $F$.
    \item $\text{DP}(a+n,H^{\ast})$: The Dirichlet process (DP) posterior with concentration parameter $a+n$, $a>0$, and base measure $H^{\ast}$.
    \item $H^{\ast}$: A mixture probability measure given by $\frac{a}{a+n}H+\frac{n}{a+n}F_{\mathbf{x}_{1:n}}$.
    \item $H$: The base measure of the DP prior $\text{DP}(a,H)$.
    \item $F_{\mathbf{x}_{1:n}}$: The empirical probability measure based on the sample data defined by $\frac{1}{n}\sum_{i=1}^{n}\delta_{\mathbf{X}_{i}}$.
    \item $F_{N}^{\text{Pos}}:$ The probability measure of the DP posterior approximation defined by $\sum_{i=1}^{N}J^{\text{Pos}}_{i,N}\delta_{\mathbf{X}^{\text{Pos}}_{i}}$ with
    $\left(\mathbf{X}^{\text{Pos}}_{1:N}\right)\sim H^{\ast}$ and $\left( J^{\text{Pos}}_{1:N,N}\right)\sim \mbox{Dirichlet}((a+n)/N,\ldots,(a+n)/N)$ \cite{Ishwaran}.
    \item $f_{i}(\mathbf{X})$: A continuous function of a random variable $\mathbf{X}\sim F$, denoted as $\mathbf{X}^{\prime}_{i}$, for $i=1,2$.
    \item $\{T_{\boldsymbol{\gamma}}\}_{\boldsymbol{\gamma}\in\boldsymbol{\Gamma}}$: A set of continuous functions parameterized by a neural network on a compact domain $\boldsymbol{\Gamma}$.
    \item $I_d$: Identity matrix of size $d \times d$.
    \item ``a.s.'': Standing for ``almost surely'' and indicates that the statements hold with probability 1.
\end{itemize}

\section{Definition of MMD and Wasserstein distances: DP and empirical representations}\label{app:distance}
Selecting an appropriate statistical distance is crucial for effective generative model training. Here, we focus on the DP representation of two popular distances used in BNP deep learning and we will briefly mention their frequentist counterparts.

\subsection{Maximum mean discrepancy distance (feature-matching comparison)}\label{app:MMD}
The MMD distance was initially introduced in \cite{Gretton} for frequentist two-sample comparisons. Recently, a DP-based version of this distance has been proposed for BNP hypothesis testing \cite{fazeli2023semi}. Consider a set of functions $\{G_{\boldsymbol{\omega}}\}_{\boldsymbol{\omega}\in\boldsymbol{\Omega}}$ parameterized by a neural network that can generate $n$ IID random variables $\left(\mathbf{Y}_{1:n} \right)$, where the likelihood function is intractable and not accessible. Let $k(\cdot,\cdot)$ be a continuous kernel function with a feature space corresponding to a universal reproducing kernel Hilbert space \cite{Gretton} defined on a compact sample space $\mathfrak{X}$ \cite{Gretton}. Given a sample $\left(\mathbf{X}_{1:n} \right)\overset{\text{IID}}{\sim}F$, the MMD distance between $F^{\text{Pos}}$ and $F_{\mathbf{Y}_{1:n}}$ is approximated as:
\begin{small}
\begin{multline}\label{BNP-pos-MMD}
\text{MMD}^2(F_{N}^{\text{Pos}},F_{\mathbf{Y}_{1:n}}):= \sum_{\ell,t=1}^{N} J^{\text{Pos}}_{\ell,N}J^{\text{Pos}}_{t,N}k(\mathbf{X}^{\text{Pos}}_{\ell},\mathbf{X}^{\text{Pos}}_{t}) \\
- \dfrac{2}{n}\sum_{\ell=1}^{N}\sum_{t=1}^{n} J^{\text{Pos}}_{\ell,N}k(\mathbf{X}^{\text{Pos}}_{\ell},\mathbf{Y}_{t})
+ \dfrac{1}{n^2}\sum_{\ell,t=1}^{n} k(\mathbf{Y}_{\ell},\mathbf{Y}_{t}).
\end{multline}
\end{small}
In the frequentist version of the MMD distance, as defined in \cite{Gretton}, the empirical distribution $F_{\mathbf{X}_{1:n}}$ is considered. This is denoted as $\text{MMD}^2(F_{\mathbf{X}_{1:n}},F_{\mathbf{Y}_{1:n}})$, which is obtained by replacing $N$, $J_{1:N,N}^{\text{Pos}}$, and $\mathbf{X}^{\text{Pos}}_{1:N}$ with $n$, $1/n$, and $\mathbf{X}_{1:n}$, respectively, in Eq. \eqref{BNP-pos-MMD}.

\subsection{Wasserstein distance (overall distribution comparison)}\label{app:Wasserstein}
The frequentist version of the Wasserstein distance is completely discussed in \cite[Part I6]{villani2008optimal}. 
Fazeli-Asl et al. \cite{fazeli2023bayesian} proposed a BNP version of this distance through its Kantorovich-Rubinstein dual representation. 
Let $\lbrace D_{\boldsymbol{\theta}}\rbrace_{{\boldsymbol{\theta}}\in \Theta}$ be a parametrized family of continuous functions that all are 1-Lipschitz. Then the Wasserstein distance between $F^{\text{Pos}}$ and $F_{\mathbf{Y}_{1:n}}$ is approximated as:
\begin{small}
    \begin{align}\label{W-BNP}
    \text{WS}(F_{N}^{\text{Pos}},F_{\mathbf{Y}_{1:n}}):=\max\limits_{\Theta} 
    \sum_{i=1}^{N}\left( J^{\text{Pos}}_{i,N}D_{\theta}(\mathbf{X}_{i}^{\text{Pos}})
    -\dfrac{D_{\boldsymbol{\theta}}(\mathbf{Y}_i)}{n}\right).
\end{align}
\end{small}
By utilizing modifications in \eqref{W-BNP} similar to those described for the MMD distance, the empirical representation of the Wasserstein distance can also be obtained.
In this section, we propose two novel representations for the MINE using the DP. These representations are based on the KL and JS divergences will be used in our BNP learning framework to maximize information during the training process.

\section{Baseline Models}\label{app:baseline}
In this section, we present details of the two baseline models used in our experiments. Their cost functions are defined in terms of the general expectation of random variables. However, in practice, all expectations appearing in their objective functions are approximated using empirical distributions.
\subsection{$\alpha$-WGAN+Gradient penalty}
The \(\alpha\)-WGAN+GP, introduced by \cite{kwon2019generation}, is a VAE-GAN--a hybrid generative model that combines Variational Autoencoders (VAEs) and Generative Adversarial Networks (GANs)--specifically designed for data synthesis.
 It incorporates the encoder--$\{E_{\boldsymbol{\eta}}\}_{\boldsymbol{\eta}\in\boldsymbol{\mathcal{H}}}$, the generator--$\{G_{\boldsymbol{\omega}}\}_{\boldsymbol{\omega}\in\boldsymbol{\Omega}}$, the code discriminator--$\{CD_{\boldsymbol{\theta}^{\prime}}\}_{\boldsymbol{\theta}^{\prime}\in\boldsymbol{\Theta}^{\prime}}$, the discriminator--$\{D_{\boldsymbol{\theta}}\}_{\boldsymbol{\theta}\in\boldsymbol{\Theta}}$, and the Wasserstein GAN with gradient penalty framework. For training dataset $\mathbf{X}\sim F$, the model is trained by updating the networks’ parameters according to the
following hybrid objective function:
	\begin{align}
		(\widehat{\boldsymbol{\omega}}, \widehat{\boldsymbol{\eta}})&=\arg\min\limits_{\boldsymbol{\boldsymbol{\Omega},\boldsymbol{\mathcal{H}}}}\lbrace-\mathbb{E}_{F_{\boldsymbol{c}}}[D_{\boldsymbol{\theta}}(G_{\boldsymbol{\omega}}(\boldsymbol{c}))]
		-\mathbb{E}_{F_{\boldsymbol{\xi}}}[D_{\boldsymbol{\theta}}(G_{\boldsymbol{\omega}}(\boldsymbol{\xi}))]+\lambda_1\mathbb{E}_{F_{\boldsymbol{c}}}\left \| \mathbf{X}-G_{\boldsymbol{\omega}}(\boldsymbol{c}) \right \|_{1}\rbrace,\nonumber\\
		\widehat{\boldsymbol{\theta}}&=\arg\min\limits_{\boldsymbol{\Theta}}\lbrace\mathbb{E}_{F_{\boldsymbol{c}}}[D_{\boldsymbol{\theta}}(G_{\boldsymbol{\omega}}(\boldsymbol{c}))]
		+\mathbb{E}_{F_{\boldsymbol{\xi}}}[D_{\boldsymbol{\theta}}(G_{\boldsymbol{\omega}}(\boldsymbol{\xi}))]
		-2\mathbb{E}_F[D_{\boldsymbol{\theta}}(\mathbf{x})]+\lambda_2L_{\text{GP-D}}\rbrace,\label{Dis_loss}\\
		\widehat{\boldsymbol{\theta}^{\prime}}&=\arg\min\limits_{\boldsymbol{\Theta}^{\prime}}\lbrace\mathbb{E}_{F_{\boldsymbol{c}}}[CD_{\boldsymbol{\theta}^{\prime}}(\boldsymbol{c})]
		-\mathbb{E}_{F_{\boldsymbol{\xi}}}[CD_{\boldsymbol{\theta}^{\prime}}(\boldsymbol{\xi})]
		+\lambda_2L_{\text{GP-CD}}\rbrace,\label{CDis}
	\end{align}
	where \(\boldsymbol{\xi}\) is a noise vector sampled from the distribution \(F_{\boldsymbol{\xi}}\), and \(\boldsymbol{c} := E_{\boldsymbol{\eta}}(\mathbf{X})\) represents the latent representation of \(\mathbf{X}\). The term \(\lambda_1 \mathbb{E}_{F_{\boldsymbol{c}}} \left\| \mathbf{X} - G_{\boldsymbol{\omega}}(\boldsymbol{c}) \right\|_1\) denotes the reconstruction loss, inferred by modeling the data distribution with a Laplace distribution. Additionally, the gradient penalty terms \(L_{\text{GP-D}}\) and \(L_{\text{GP-CD}}\), scaled by the coefficient \(\lambda_2\), are added to Eqs. \eqref{Dis_loss} and \eqref{CDis}, respectively, to enforce the 1-Lipschitz constraint on the discriminators.

\subsection{BiGAN}
BiGAN, independently introduced by \cite{donahue2016adversarial} and \cite{dumoulin2016adversarially}, is a variant of the VAE-GAN framework designed for data generation. In BiGAN, the discriminator distinguishes between pairs of data and latent codes—either real data paired with its encoded representation or generated data paired with its sampled latent vector. This bidirectional structure enables the model to simultaneously learn a mapping from data to the latent space and from the latent space back to the data. The training of BiGAN involves optimizing the following objectives:
\begin{align*}
(\widehat{\boldsymbol{\omega}}, \widehat{\boldsymbol{\eta}})&= \arg\min\limits_{\boldsymbol{\Omega},\boldsymbol{\mathcal{H}}}\lbrace -\mathbb{E}_{F_{\boldsymbol{\xi}}}[\ln D_{\boldsymbol{\theta}}(G_{\boldsymbol{\omega}}(\boldsymbol{\xi}), \boldsymbol{\xi})] - \mathbb{E}_{F}[\mathbb{E}_{F_{\boldsymbol{c}}}[1-\ln D_{\boldsymbol{\theta}}(\mathbf{X}, \boldsymbol{c})]]\rbrace, \\
\widehat{\boldsymbol{\theta}} &= \arg\min\limits_{\boldsymbol{\Theta}}\lbrace-\mathbb{E}_{F}[\mathbb{E}_{F_{\boldsymbol{c}}}[\ln D_{\boldsymbol{\theta}}(\mathbf{X}, \boldsymbol{c})]] - \mathbb{E}_{F_{\boldsymbol{\xi}}}[\ln \left( 1 - D_{\boldsymbol{\theta}}(G_{\boldsymbol{\omega}}(\boldsymbol{\xi}), \boldsymbol{\xi}) \right)]\rbrace.
\end{align*}

\section{Theoretical proofs}\label{app:technical proofs}
\subsection{\textbf{Proof of Theorem 1}}
\begin{theorem}[Limiting expectation]\label{thm-asmp-dpdv}
    Considering DP posterior representations of MINE presented in the main paper. Given the DP posterior approximation $F^{\text{Pos}}_N$, we have,  
    \begin{itemize}
        \item[$i.$] 
        $\lim_{n,N \to \infty}\mathbb{E}_{F_{N}^{\text{Pos}}}\left(\mathcal{L}_{\boldsymbol{\gamma}}^{\text{DPDV}}(f_1(\mathbf{X}^{\text{Pos}}_{1:N}),f_2(\mathbf{X}^{\text{Pos}}_{1:N})) \right) \geq \mathcal{L}_{\boldsymbol{\gamma}}^{\text{DV}}(\mathbf{X}_{1}^{\prime},\mathbf{X}_{2}^{\prime})$, a.s.,
        \item[$ii.$] $\mathbb{E}_{F_{N}^{\text{Pos}}}\left(\mathcal{L}_{\boldsymbol{\gamma}}^{\text{DPJS}}(f_1(\mathbf{X}^{\text{Pos}}_{1:N}),f_2(\mathbf{X}^{\text{Pos}}_{1:N})) \right)$ converges a.s. to $\mathcal{L}_{\boldsymbol{\gamma}}^{\text{JS}}(\mathbf{X}_{1}^{\prime},\mathbf{X}_{2}^{\prime})$, as $n,N\rightarrow\infty,$
    \end{itemize}
\end{theorem}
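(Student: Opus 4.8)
The plan is to analyze the two terms of each loss separately and exploit the almost-sure weak convergence $F_N^{\text{Pos}} \Rightarrow F$ together with the consistency of the posterior base measure $H^\ast \to F$ as $n\to\infty$. First I would compute $\mathbb{E}_{F_N^{\text{Pos}}}$ of each representation. The key observation is that, for fixed weights $J^{\text{Pos}}_{1:N,N}$ and locations $\mathbf{X}^{\text{Pos}}_{1:N}$, taking expectation over the random permutation $\pi$ turns the "shuffled" sum $\sum_\ell J^{\text{Pos}}_{\ell,N}\,g(\mathbf{X}^{\text{Pos}}_\ell,\mathbf{X}^{\text{Pos}}_{\pi(\ell)})$ into (approximately) a double sum that behaves like a product-measure expectation. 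More precisely, I would write the linear term as $\sum_\ell J^{\text{Pos}}_{\ell,N} T_{\boldsymbol{\gamma}}(f_1(\mathbf{X}^{\text{Pos}}_\ell),f_2(\mathbf{X}^{\text{Pos}}_\ell)) \to \mathbb{E}_{F}[T_{\boldsymbol{\gamma}}(f_1(\mathbf{X}),f_2(\mathbf{X}))]$ a.s. as $n,N\to\infty$, using that $\mathbf{X}^{\text{Pos}}_i \overset{\text{IID}}{\sim} H^\ast$, the law of large numbers applied to the Dirichlet weights (so $\max_\ell J^{\text{Pos}}_{\ell,N}\to 0$ and $\sum_\ell (J^{\text{Pos}}_{\ell,N})^2 \to 0$), and $H^\ast \to F$ weakly a.s. by the Glivenko–Cantelli theorem. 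This is exactly the kind of convergence $F_N^{\text{Pos}}\Rightarrow F^{\text{Pos}}$ and $F^{\text{Pos}}\to F$ quoted from \cite{Ishwaran}.

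Next I would handle the nonlinear terms. For part $(ii)$, the JS loss is a finite sum of bounded continuous functionals (softplus is bounded on the range of $T_{\boldsymbol{\gamma}}$ restricted to the compact domains, or at least Lipschitz with controllable growth), so both the diagonal term $\sum_\ell J^{\text{Pos}}_{\ell,N}\,[-\zeta(-T_{\boldsymbol{\gamma}})]$ and the permuted term $\sum_\ell J^{\text{Pos}}_{\ell,N}\,[-\zeta(T_{\boldsymbol{\gamma}})]$ converge, after taking $\mathbb{E}_\pi$ and then $\mathbb{E}_{F_N^{\text{Pos}}}$, to $\mathbb{E}_{F_{\mathbf{X}\mathbf{Y}}}[-\zeta(-T_{\boldsymbol{\gamma}})]$ and $-\mathbb{E}_{F_{\mathbf{X}}\otimes F_{\mathbf{Y}}}[\zeta(T_{\boldsymbol{\gamma}})]$ respectively; summing gives $\mathcal{L}_{\boldsymbol{\gamma}}^{\text{JS}}(\mathbf{X}_1',\mathbf{X}_2')$, an equality in the limit because nothing convex intervenes. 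For part $(i)$, the term $-\ln\sum_\ell J^{\text{Pos}}_{\ell,N} e^{T_{\boldsymbol{\gamma}}(\cdots)}$ is the composition of the concave function $-\ln$ with a linear functional of $F_N^{\text{Pos}}$. Here I would invoke Jensen's inequality: $\mathbb{E}_{F_N^{\text{Pos}}}\big[-\ln(\text{linear statistic})\big] \geq -\ln \mathbb{E}_{F_N^{\text{Pos}}}[\text{linear statistic}]$, and the inner expectation converges a.s. to $\mathbb{E}_{F_{\mathbf{X}}\otimes F_{\mathbf{Y}}}[e^{T_{\boldsymbol{\gamma}}}]$ by the same permutation-plus-LLN argument. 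Combined with the exact convergence of the linear term, this yields the asymptotic inequality $\lim \mathbb{E}_{F_N^{\text{Pos}}}(\mathcal{L}^{\text{DPDV}}_{\boldsymbol{\gamma}}) \geq \mathcal{L}^{\text{DV}}_{\boldsymbol{\gamma}}(\mathbf{X}_1',\mathbf{X}_2')$ a.s. — the inequality rather than equality is precisely the source of the "tighter lower bound" claim.

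The main obstacle I anticipate is making the permutation argument rigorous while simultaneously letting $n,N\to\infty$: one must show that $\mathbb{E}_\pi \sum_\ell J^{\text{Pos}}_{\ell,N} g(\mathbf{X}^{\text{Pos}}_\ell,\mathbf{X}^{\text{Pos}}_{\pi(\ell)})$ is genuinely close to $\big(\sum_\ell J^{\text{Pos}}_{\ell,N} h_1(\mathbf{X}^{\text{Pos}}_\ell)\big)\big(\sum_t J^{\text{Pos}}_{t,N} h_2(\mathbf{X}^{\text{Pos}}_t)\big)$-type product statistics, controlling the $O(\sum_\ell (J^{\text{Pos}}_{\ell,N})^2)$ "self-pairing" correction and the non-independence of a uniform permutation's coordinates. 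This requires a variance/second-moment bound on the Dirichlet weights — namely $\mathbb{E}\sum_\ell (J^{\text{Pos}}_{\ell,N})^2 = O(1/N)$ under the symmetric Dirichlet$((a+n)/N,\dots)$ law — plus boundedness of $e^{T_{\boldsymbol{\gamma}}}$ on the compact domain $\boldsymbol{\Gamma}$ and of $f_1,f_2$ on the effective support (or a uniform integrability argument against $H^\ast$). A secondary technical point is interchanging $\lim_{n,N}$ with $\mathbb{E}_{F_N^{\text{Pos}}}$, which I would justify by dominated convergence using these same uniform bounds. Once these moment estimates are in place, the rest is bookkeeping combining Jensen, Glivenko–Cantelli for $H^\ast\to F$, and the continuous mapping theorem.
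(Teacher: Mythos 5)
Your proposal follows essentially the same route as the paper's proof: apply Jensen's inequality to the $-\ln(\cdot)$ term using $\mathbb{E}[J^{\text{Pos}}_{\ell,N}]=1/N$ to obtain a lower bound with uniform weights, then invoke Glivenko--Cantelli to pass $H^{\ast}\to F$ a.s.\ as $n\to\infty$, and finish with the law of large numbers as $N\to\infty$; for the JS case, the absence of a logarithm means no Jensen step intervenes and the limit is an equality rather than an inequality, exactly as you observe. Two remarks. First, you write ``concave function $-\ln$''; it is of course convex, though the inequality direction $\mathbb{E}[-\ln X]\ge-\ln\mathbb{E}[X]$ you state is the correct (convex) Jensen direction, so this is only a slip of terminology. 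Second, the paper applies Jensen conditionally on the locations (expectation over the Dirichlet weights only) and then handles locations via the $n,N\to\infty$ argument, whereas you envision taking the full expectation $\mathbb{E}_{F_N^{\text{Pos}}}$ first; both orderings lead to the same bound. The extra machinery you flag --- expectation over the permutation $\pi$, the $\mathbb{E}\sum_\ell (J^{\text{Pos}}_{\ell,N})^2=O(1/N)$ self-pairing bound, and dominated convergence to interchange $\lim_{n,N}$ with $\mathbb{E}$ --- is not in the paper, which quietly inherits the permutation-to-product-measure approximation from the MINE literature and treats the LLN step as given. You are right that these are genuine gaps if one wants a fully rigorous argument, but they do not constitute a different proof strategy; they are technical details the paper's version elides.
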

\begin{proof}
Recall that
\begin{small}
\begin{multline}\label{DPDV-lower}
    \mathcal{L}_{\boldsymbol{\gamma}}^{\text{DPDV}}(f_1(\mathbf{X}^{\text{Pos}}_{1:N}),f_2(\mathbf{X}^{\text{Pos}}_{1:N})):=\sum_{\ell=1}^{N}J_{\ell,N}^{\text{Pos}}T_{\boldsymbol{\gamma}}(f_1(\mathbf{X}^{\text{Pos}}_{\ell}),f_2(\mathbf{X}^{\text{Pos}}_{\ell})) 
    -\ln\sum_{\ell=1}^{N}J_{\ell,N}^{\text{Pos}}e^{T_{\boldsymbol{\gamma}}(f_1(\mathbf{X}^{\text{Pos}}_{\ell}),f_2(\mathbf{X}^{\text{Pos}}_{\pi(\ell)}))}.
\end{multline}
\end{small}

    Considering the property of Dirichlet distribution, $\mathbb{E}_{F^{\text{Pos}}}(J^{\text{Pos}}_{\ell,N})=1/N$, and then applying Jensen's inequality in the above equation implies 
\begin{small}    
\begin{multline}\label{Exp-0dpdv}
\mathbb{E}_{F^{\text{Pos}}}\left(\mathcal{L}_{\boldsymbol{\gamma}}^{\text{DPDV}}(f_1(\mathbf{X}^{\text{Pos}}_{1:N}),f_2(\mathbf{X}^{\text{Pos}}_{1:N})) \right)
\geq\sum_{\ell=1}^{N}\dfrac{1}{N}T_{\boldsymbol{\gamma}}(f_1(\mathbf{X}^{\text{Pos}}_{\ell}),f_2(\mathbf{X}^{\text{Pos}}_{\ell}))-\ln\sum_{\ell=1}^{N}\dfrac{1}{N}e^{T_{\boldsymbol{\gamma}}(f_1(\mathbf{X}^{\text{Pos}}_{\ell}),f_2(\mathbf{X}^{\text{Pos}}_{\pi(\ell)}))}\\
=I.
\end{multline}
\end{small}
As $n$ approaches infinity, the Glivenko-Cantelli theorem implies that $F_{\mathbf{x}_{1:n}}\xrightarrow{a.s.}F$, and subsequently, $H^{\ast}\xrightarrow{a.s.}F$. This implies that $\left(\mathbf{X}^{\text{Pos}}_{1:N}\right)$ converges to $\left(\mathbf{X}_{1:N}\right)$, a sample of $N$ random variables generated from $F$. Therefore, using the continuous mapping theorem as $n\rightarrow\infty$, we have the convergence:
\begin{small}
\begin{multline*}
I\xrightarrow{a.s.}\sum_{\ell=1}^{N}\dfrac{1}{N}T_{\boldsymbol{\gamma}}(f_1(\mathbf{X}_{\ell}),f_2(\mathbf{X}_{\ell}))
-\ln\sum_{\ell=1}^{N}\dfrac{1}{N}e^{T_{\boldsymbol{\gamma}}(f_1(\mathbf{X}_{\ell}),f_2(\mathbf{X}_{\pi(\ell)}))}
= \mathcal{L}_{\boldsymbol{\gamma}}^{\text{DV}}(f_1(\mathbf{X}_{1:N}),f_2(\mathbf{X}_{1:N})).
\end{multline*}
\end{small}
Here, we use $\mathcal{L}_{\boldsymbol{\gamma}}^{DV}(f_1(\mathbf{X}_{1:N}),f_2(\mathbf{X}_{1:N}))$ to indicate the empirical representation of the DV lower bound based on the $N$ samples $\left(\mathbf{X}_{1:N}\right)$. On the other hand, the law of the large number implies
\begin{align}\label{LLN}
    \mathcal{L}_{\boldsymbol{\gamma}}^{\text{DV}}(f_1(\mathbf{X}_{1:N}),f_2(\mathbf{X}_{1:N}))\xrightarrow{a.s.} \mathcal{L}_{\boldsymbol{\gamma}}^{\text{DV}}(\mathbf{X}_{1}^{\prime},\mathbf{X}_{2}^{\prime}),
\end{align}
as $N\rightarrow\infty$.

Now, subtract $\mathcal{L}{\boldsymbol{\gamma}}^{\text{DV}}(\mathbf{X}_{1}^{\prime},\mathbf{X}_{2}^{\prime})$ from both sides of \eqref{Exp-0dpdv} and take the limit as $n$ and $N$ approach infinity:
\begin{small}
\begin{align*}
\lim_{n, N \to \infty} \left(\mathbb{E}_{F^{\text{Pos}}}\left(\mathcal{L}{\boldsymbol{\gamma}}^{\text{DPDV}}(f_1(\mathbf{X}^{\text{Pos}}_{1:N}),f_2(\mathbf{X}^{\text{Pos}}_{1:N})) \right) - \mathcal{L}{\boldsymbol{\gamma}}^{\text{DV}}(\mathbf{X}_{1}^{\prime},\mathbf{X}_{2}^{\prime})\right)
\geq 0,
\end{align*}
\end{small}

which completes the proof of (i). A similar method is used to prove (ii) and it is then omitted.
\end{proof}

\subsection{\textbf{Proof of Theorem 2}}
\begin{theorem}[Consistency] 
    Considering BNP MINEs given in the main paper. Then,  for any label $\text{i}$ in $\lbrace \text{DV}, \text{JS}\rbrace$, as $n,N\rightarrow\infty$:
    \begin{itemize}
        \item[$i.$] $\text{MI}^{\text{DPi}}(f_1(\mathbf{X}^{\text{Pos}}_{1:N}),f_2(\mathbf{X}^{\text{Pos}}_{1:N}))\xrightarrow{a.s.}\text{MI}^{\text{i}}(\mathbf{X}_{1}^{\prime},\mathbf{X}_{2}^{\prime})$,
        \item[$ii.$] There exists a set of neural network $\{T_{\boldsymbol{\gamma}}\}_{\boldsymbol{\gamma}\in\boldsymbol{\Gamma}}$ on some compact domain $\boldsymbol{\Gamma}$ such that
        \begin{align}
            \text{MI}^{\text{DPi}}(f_1(\mathbf{X}^{\text{Pos}}_{1:N}),f_2(\mathbf{X}^{\text{Pos}}_{1:N}))\xrightarrow{a.s.}\text{MI}(\mathbf{X}_{1}^{\prime},\mathbf{X}_{2}^{\prime}).
        \end{align}
    \end{itemize}
\end{theorem}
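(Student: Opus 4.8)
The plan is to build on Theorem 1 and the standard argument used for the original MINE consistency in \cite{belghazi2018mutual}, splitting into an \emph{estimation} part (the DP-based empirical objective converges to the population objective) and an \emph{approximation} part (the family $\{T_{\boldsymbol{\gamma}}\}$ is rich enough to make the population bound tight). For part $(i)$, first I would strengthen Theorem 1: that result only gives convergence of the \emph{expectation} $\mathbb{E}_{F_N^{\text{Pos}}}(\mathcal{L}_{\boldsymbol{\gamma}}^{\text{DPi}})$, but for consistency of the maximizer we need convergence of $\mathcal{L}_{\boldsymbol{\gamma}}^{\text{DPi}}(f_1(\mathbf{X}^{\text{Pos}}_{1:N}),f_2(\mathbf{X}^{\text{Pos}}_{1:N}))$ itself, a.s., uniformly in $\boldsymbol{\gamma}$. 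I would obtain pointwise a.s. convergence by the same chain as in the proof of Theorem 1 — $H^{\ast}\xrightarrow{a.s.}F$ by Glivenko--Cantelli so $\mathbf{X}^{\text{Pos}}_{1:N}\to\mathbf{X}_{1:N}\overset{\text{IID}}{\sim}F$ via the continuous mapping theorem, then the SLLN on the $N$ IID draws — together with the fact that the Dirichlet weights $J^{\text{Pos}}_{\ell,N}$ concentrate: $\max_\ell J^{\text{Pos}}_{\ell,N}\to 0$ and $\sum_\ell (J^{\text{Pos}}_{\ell,N})^2\to 0$ a.s., so the weighted sums $\sum_\ell J^{\text{Pos}}_{\ell,N}(\cdot)$ have the same a.s. limit as the equal-weight sums $\frac1N\sum_\ell(\cdot)$. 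This uses that $T_{\boldsymbol{\gamma}}$ is continuous on a compact domain, hence bounded, so $e^{T_{\boldsymbol{\gamma}}}$ and $\zeta(\pm T_{\boldsymbol{\gamma}})$ are bounded and the $\ln(\cdot)$ in the DV bound is applied to a quantity bounded away from $0$.

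Next I would upgrade pointwise to uniform convergence over $\boldsymbol{\Gamma}$. Since $\boldsymbol{\Gamma}$ is compact and $\boldsymbol{\gamma}\mapsto T_{\boldsymbol{\gamma}}$ is continuous (neural networks with continuous activations are jointly continuous in parameters and inputs), the maps $\boldsymbol{\gamma}\mapsto\mathcal{L}_{\boldsymbol{\gamma}}^{\text{DPi}}$ and $\boldsymbol{\gamma}\mapsto\mathcal{L}_{\boldsymbol{\gamma}}^{\text{i}}$ are equicontinuous; combined with pointwise a.s. convergence this gives $\sup_{\boldsymbol{\gamma}\in\boldsymbol{\Gamma}}\big|\mathcal{L}_{\boldsymbol{\gamma}}^{\text{DPi}}-\mathcal{L}_{\boldsymbol{\gamma}}^{\text{i}}\big|\xrightarrow{a.s.}0$. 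A standard lemma then yields
\[
\Big|\sup_{\boldsymbol{\gamma}}\mathcal{L}_{\boldsymbol{\gamma}}^{\text{DPi}} - \sup_{\boldsymbol{\gamma}}\mathcal{L}_{\boldsymbol{\gamma}}^{\text{i}}\Big|
\le \sup_{\boldsymbol{\gamma}}\big|\mathcal{L}_{\boldsymbol{\gamma}}^{\text{DPi}}-\mathcal{L}_{\boldsymbol{\gamma}}^{\text{i}}\big|\xrightarrow{a.s.}0,
\]
which is exactly $\text{MI}^{\text{DPi}}(f_1(\mathbf{X}^{\text{Pos}}_{1:N}),f_2(\mathbf{X}^{\text{Pos}}_{1:N}))\xrightarrow{a.s.}\text{MI}^{\text{i}}(\mathbf{X}_1^{\prime},\mathbf{X}_2^{\prime})$, proving $(i)$.

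For part $(ii)$ I would invoke the approximation step exactly as in \cite{belghazi2018mutual}: by the universal approximation theorem, for any $\varepsilon>0$ there is a network size and a compact parameter set $\boldsymbol{\Gamma}$ such that the population variational bound over that family is within $\varepsilon$ of the true MI — for the DV representation because the optimal critic $T^{\ast}=\ln\frac{dF_{\mathbf{X}_1'\mathbf{X}_2'}}{d(F_{\mathbf{X}_1'}\otimes F_{\mathbf{X}_2'})}$ (assumed to lie in a suitable approximable class under the paper's regularity/boundedness assumptions) can be approximated in the relevant $L^1(F)$-type sense, and the DV functional is continuous with respect to that approximation; for the JS representation analogously. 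Hence $\text{MI}^{\text{i}}(\mathbf{X}_1',\mathbf{X}_2')$ can be made arbitrarily close to $\text{MI}(\mathbf{X}_1',\mathbf{X}_2')$ by choice of $\boldsymbol{\Gamma}$, and combining with $(i)$ gives a family for which $\text{MI}^{\text{DPi}}\xrightarrow{a.s.}\text{MI}(\mathbf{X}_1',\mathbf{X}_2')$.

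The main obstacle I anticipate is the uniform-in-$\boldsymbol{\gamma}$ control of the random Dirichlet weights: one must argue carefully that $\sup_{\boldsymbol{\gamma}}\big|\sum_\ell (J^{\text{Pos}}_{\ell,N}-\tfrac1N)\,g_{\boldsymbol{\gamma}}(\mathbf{X}^{\text{Pos}}_\ell)\big|\to 0$ a.s.\ for the relevant bounded continuous $g_{\boldsymbol{\gamma}}$, simultaneously over the compact parameter set, and that the double limit in $n$ and $N$ can be taken jointly (e.g.\ by a diagonal subsequence or by coupling $N=N(n)\to\infty$) so that the Glivenko--Cantelli convergence of $H^{\ast}$ and the SLLN over the $N$ draws are compatible on a single probability-one event. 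The boundedness of $T_{\boldsymbol{\gamma}}$ on the compact domain $\boldsymbol{\Gamma}$ and the concentration properties of the Dirichlet distribution established in \cite{Ishwaran} are the key facts that make this manageable.
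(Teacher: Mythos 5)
Your proposal follows essentially the same two-stage structure as the paper's own proof: an estimation step ($\mathcal{L}_{\boldsymbol{\gamma}}^{\text{DPi}} \to \mathcal{L}_{\boldsymbol{\gamma}}^{\text{i}}$ via Glivenko--Cantelli, continuous mapping, and Dirichlet-weight concentration) for part (i), and a triangle-inequality decomposition combined with the universal-approximation result of \cite{belghazi2018mutual} for part (ii); part (ii) is essentially identical in both.

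Where you differ is in part (i), and the difference is worth noting. The paper establishes pointwise-in-$\boldsymbol{\gamma}$ a.s.\ convergence of the lower bound, then asserts that ``since $\max(\cdot)$ is a continuous function, the continuous mapping theorem concludes the proof.'' This passage is terse: pointwise a.s.\ convergence of $\mathcal{L}_{\boldsymbol{\gamma}}^{\text{DPi}}$ to $\mathcal{L}_{\boldsymbol{\gamma}}^{\text{i}}$ does \emph{not} by itself imply convergence of the suprema over a compact but infinite $\boldsymbol{\Gamma}$ --- one needs uniform (over $\boldsymbol{\Gamma}$) convergence, which in turn requires an equicontinuity or uniform-SLLN argument. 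You flag this explicitly and supply exactly the missing ingredient: boundedness of $T_{\boldsymbol{\gamma}}$ on the compact $\boldsymbol{\Gamma}$, joint continuity in $(\boldsymbol{\gamma}, x)$, equicontinuity, and the standard inequality $|\sup_{\boldsymbol{\gamma}} f - \sup_{\boldsymbol{\gamma}} g| \le \sup_{\boldsymbol{\gamma}}|f-g|$. You also correctly raise the issue of the joint $n,N\to\infty$ limit, which the paper takes sequentially without comment. On the Dirichlet-weight side, the paper cites \citet[Theorem 1]{fazeli2023bayesian} for $J^{\text{Pos}}_{\ell,N}\xrightarrow{a.s.} 1/N$ as $n\to\infty$ at fixed $N$, whereas you invoke the $N$-large concentration facts $\max_\ell J^{\text{Pos}}_{\ell,N}\to 0$ and $\sum_\ell (J^{\text{Pos}}_{\ell,N})^2\to 0$; these attack the same issue from complementary directions and both can be made to work, but you should be aware that the cited pointwise limit is the lemma actually relied upon in the published argument. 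In short: your proposal is not a different route, it is a more careful version of the same route, and the extra care you add in the uniform-convergence step is genuinely needed to make part (i) rigorous.
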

\begin{proof}
We will only provide the proof for the DPDV estimator of the MI. The proof for the DPJS estimator is similar and therefore omitted.

To prove (i), following the proof of Theorem 1, we have $\left(\mathbf{X}^{\text{Pos}}_{1:N}\right)\xrightarrow{a.s.}\left(\mathbf{X}_{1:N}\right)$ as $n\rightarrow\infty$. Then, for any $\ell\in\lbrace 1,\ldots,N\rbrace$, the continuous mapping theorem implies:
\begin{align}
    T_{\boldsymbol{\gamma}}(f_1(\mathbf{X}^{\text{Pos}}_{\ell}),f_2(\mathbf{X}^{\text{Pos}}_{\ell}))&\xrightarrow{a.s.} T_{\boldsymbol{\gamma}}(f_1(\mathbf{X}_{\ell}),f_2(\mathbf{X}_{\ell}))\label{thm2:T},\\
     T_{\boldsymbol{\gamma}}(f_1(\mathbf{X}^{\text{Pos}}_{\ell}),f_2(\mathbf{X}^{\text{Pos}}_{\pi(\ell)}))&\xrightarrow{a.s.} T_{\boldsymbol{\gamma}}(f_1(\mathbf{X}_{\ell}),f_2(\mathbf{X}_{\pi(\ell)}))\label{thm2:T2},
\end{align}
as $n\rightarrow\infty$. 
On the other hand, following Fazeli-Asl et al. \cite[Theorem 1]{fazeli2023bayesian}, as $n\rightarrow\infty$, we have:
\begin{align}\label{thm2:weight}
J^{\text{Pos}}_{\ell,N}\xrightarrow{a.s.}\dfrac{1}{N}.
\end{align}
Now, considering results \eqref{thm2:T}, \eqref{thm2:T2}, and \eqref{thm2:weight} in the DP representation of the DV lower bound given in \eqref{DPDV-lower} when $n\rightarrow\infty$, we can imply:
\begin{align}\label{Thm2:dpdvConv}
\mathcal{L}_{\boldsymbol{\gamma}}^{\text{DPDV}}(f_1(\mathbf{X}^{\text{Pos}}_{1:N}),f_2(\mathbf{X}^{\text{Pos}}_{1:N}))\xrightarrow{a.s.} \mathcal{L}_{\boldsymbol{\gamma}}^{\text{DV}}(f_1(\mathbf{X}_{1:N}),f_2(\mathbf{X}_{1:N})).
\end{align}
Finally, considering the convergence in \eqref{LLN} on the right-hand side of Eq. \eqref{Thm2:dpdvConv} when $N\rightarrow\infty$, since $\max(\cdot)$ is a continuous function, the continuous mapping theorem concludes the proof of (i).

To prove (ii), the triangular inequality implies:
\begin{multline}\label{thm2:eps12}
    \left|\text{MI}^{\text{DPDV}}(f_1(\mathbf{X}^{\text{Pos}}_{1:N}),f_2(\mathbf{X}^{\text{Pos}}_{1:N}))-\text{MI}(\mathbf{X}_{1}^{\prime},\mathbf{X}_{2}^{\prime})\right|
    \\
    \leq
    \left| \text{MI}^{\text{DPDV}}(f_1(\mathbf{X}^{\text{Pos}}_{1:N}),f_2(\mathbf{X}^{\text{Pos}}_{1:N}))-\text{MI}^{\text{DV}}(\mathbf{X}_{1}^{\prime},\mathbf{X}_{2}^{\prime})\right|\\+
    \left| \text{MI}^{\text{DV}}(\mathbf{X}_{1}^{\prime},\mathbf{X}_{2}^{\prime})-\text{MI}(\mathbf{X}_{1}^{\prime},\mathbf{X}_{2}^{\prime})\right|.
\end{multline}
Regarding almost surely convergence in part (i), given $\epsilon_1>0$, there exists $N_0\in\mathbb{N}$ such that for all $n,N>N_0$,
\begin{align}\label{Thm2:eps1}
    \left| \text{MI}^{\text{DPDV}}(f_1(\mathbf{X}^{\text{Pos}}_{1:N}),f_2(\mathbf{X}^{\text{Pos}}_{1:N}))-\text{MI}^{\text{DV}}(\mathbf{X}_{1}^{\prime},\mathbf{X}_{2}^{\prime})\right|<\epsilon_1,~\text{a.s.}
\end{align}
On the other hand, following Belghazi et al. \cite[Lemma 1]{belghazi2018mutual} for given $\epsilon_2>0$, there exists a set of neural network $\{T_{\boldsymbol{\gamma}}\}_{\boldsymbol{\gamma}\in\boldsymbol{\Gamma}}$ on some compact domain $\boldsymbol{\Gamma}$ such that
\begin{align}\label{Thm2:eps2}
     \left| \text{MI}^{\text{DV}}(\mathbf{X}_{1}^{\prime},\mathbf{X}_{2}^{\prime})-\text{MI}(\mathbf{X}_{1}^{\prime},\mathbf{X}_{2}^{\prime})\right|<\epsilon_2,~\text{a.s.}
\end{align}
Now using \eqref{Thm2:eps1} and \eqref{Thm2:eps2} in \eqref{thm2:eps12} with choosing $\epsilon_1=\epsilon_2=\epsilon/2$ completes the proof. 
\end{proof}
\section{Additional results and implementing details}
\subsection{synthetic example}
\paragraph{Stanford Bunny Dataset:}\label{app:additional-results-bunny}
We use the Stanford Bunny dataset, available at \url{https://graphics.stanford.edu/data/3Dscanrep/}, to investigate the impact of DPMINE on an additional synthetic example. The dataset consists of point clouds representing the Stanford Bunny, a renowned 3D model provided by the Stanford University Computer Graphics Laboratory. These point clouds, captured with the Cyberware 3030 MS scanner and stored in PLY files (Polygon File Format) developed at Stanford, represent spatial locations on the object's surface. Point clouds provide a 3D spatial representation of the object, enabling detailed visualization and analysis. For our study, we use three point clouds from this dataset and apply filtering with the \emph{pyoints} Python library to extract points in 3D space. 

We randomly sampled 5000 points from the available 43,188 points as the training dataset and implemented our model on them by feeding the model with noise inputs to generate 2500 random samples and with encoded real inputs to obtain reconstruction samples. Figure \ref{bunny-DecGen} illustrates the significant impact of DPMINE on the performance of the BNP VAE-GAN in data generation.


\subsection{Real examples}
\paragraph{Covid-19 dataset:}\label{app:additional-results-covid}

\begin{figure*}[!t]
\centering\begin{tabular}[h!]{|ccc|}
\hline
\multicolumn{2}{|l|}{\rotatebox{90}{BNPWMMD}}&\multicolumn{1}{r|}{\includegraphics[width=.6\textwidth,height=4cm]{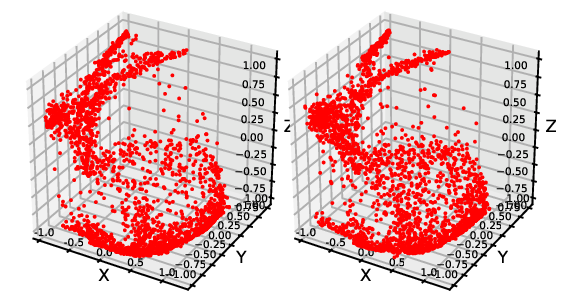}}\\
\hline
\multicolumn{1}{|c|}{\rotatebox{90}{BNPWMMD+DPMINE}}&\multicolumn{2}{c|}{\includegraphics[width=.9\textwidth,height=4cm]{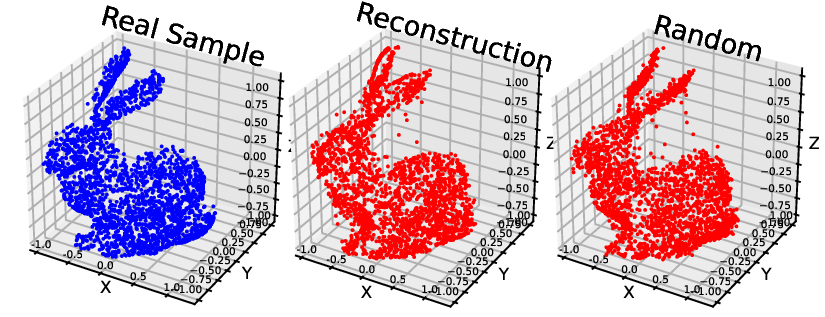}}\\
\hline
\multicolumn{2}{|l|}{\rotatebox{90}{$\alpha$-WGAN+MINE}}&\multicolumn{1}{r|}{\includegraphics[width=.6\textwidth,height=4cm]{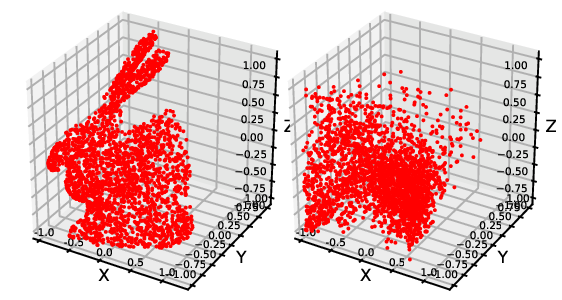}}\\
\hline
\multicolumn{2}{|l|}{\rotatebox{90}{BiGAN+MINE}}&\multicolumn{1}{r|}{\includegraphics[width=.6\textwidth,height=4cm]{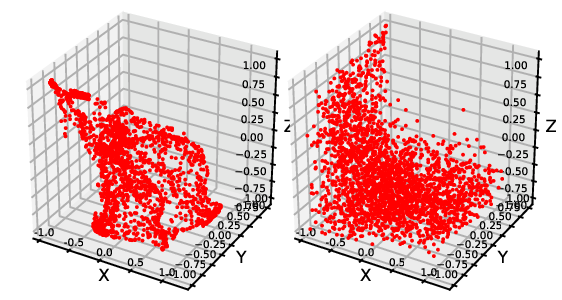}}\\
\hline
\end{tabular}
\caption{2500 samples for the Stanford Bunny using BNPWMMD+DPMINE, $\alpha$-WGAN+MINE, and BiGAN+MINE after 5000 epochs.}
\label{bunny-DecGen}
\end{figure*}

The red border in Figure \ref{random-slices} indicates the corresponding slices depicted in Figure 6 of the experimental findings discussed in the main paper. This clearly demonstrates the excellent performance of BNPWMMD+DPMINE in displaying sharp and diverse slices in a 3D randomly generated sample, surpassing the performance of other methods. Additionally, Figure \ref{slice-reconstruct} includes reconstructed samples that exhibit the highest similarity to the training dataset for the BNPWMMD+DPMINE method.

\begin{figure*}[htbp]
\centering\begin{tabular}[h!]{|c||c|c|c|}
\hline
\multicolumn{2}{|c|}{Axial}&Sagittal &Coronal\\
\hline
\multirow{1}{*}[40pt]{\rotatebox{90}{{\scriptsize Real Dataset}}}&\includegraphics[width=.29\textwidth,height=2.1cm]{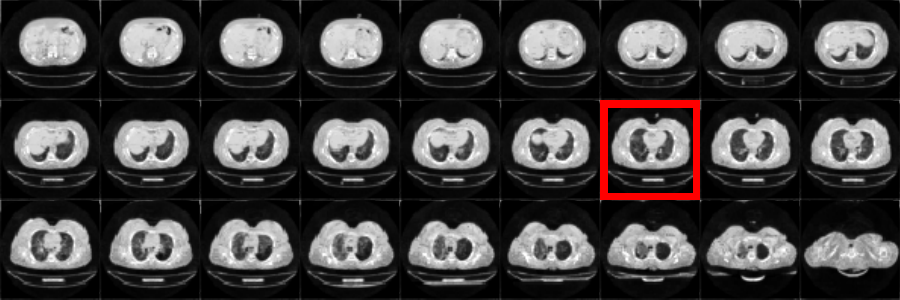}&\includegraphics[width=.29\textwidth,height=2.1cm]{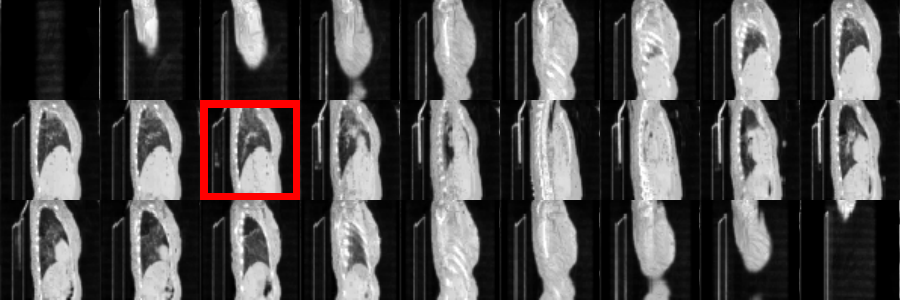}&\includegraphics[width=.29\textwidth,height=2.1cm]{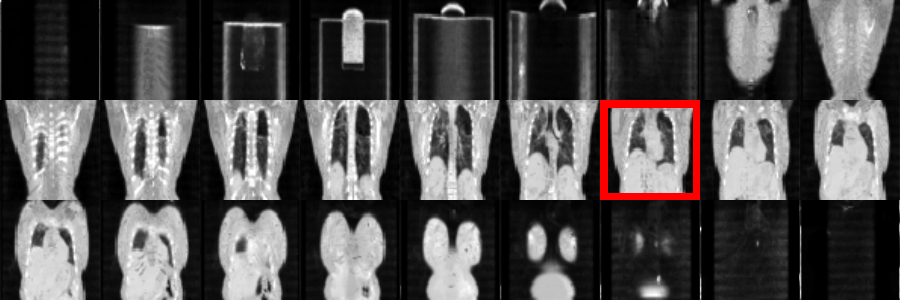}\\
\hline
\hline
\rotatebox{90}{{\scriptsize BNPWMMD+DPMINE}}&\includegraphics[width=.29\textwidth,height=2.1cm]{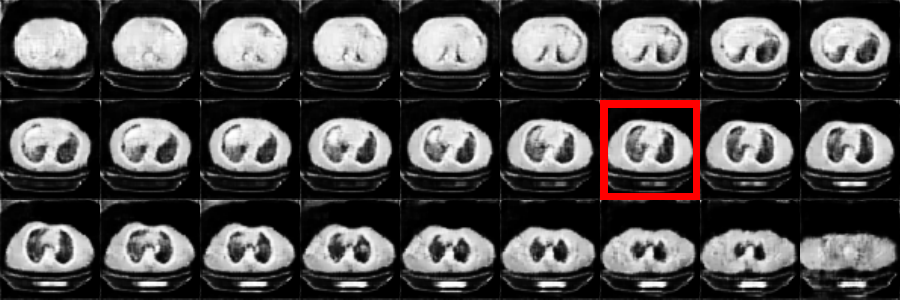}&\includegraphics[width=.29\textwidth,height=2.1cm]{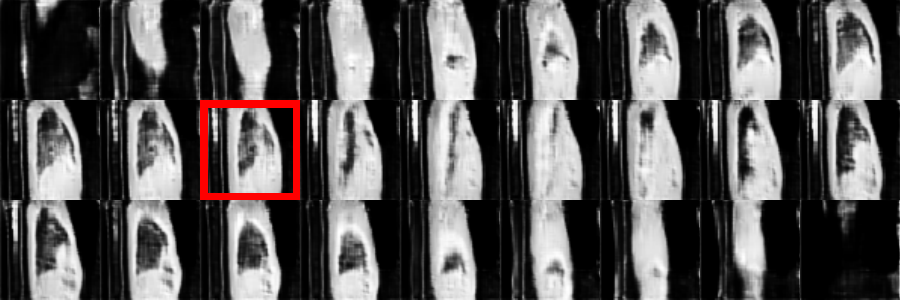}&\includegraphics[width=.29\textwidth,height=2.1cm]{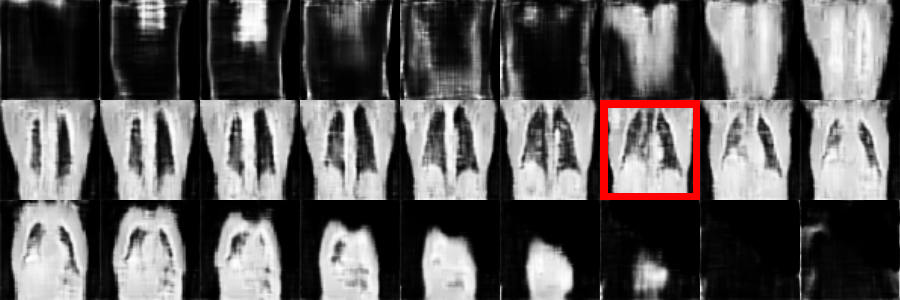}\\
\hline
\rotatebox{90}{{\scriptsize $\alpha$-WGAN+MINE}}&\includegraphics[width=.29\textwidth,height=2.1cm]{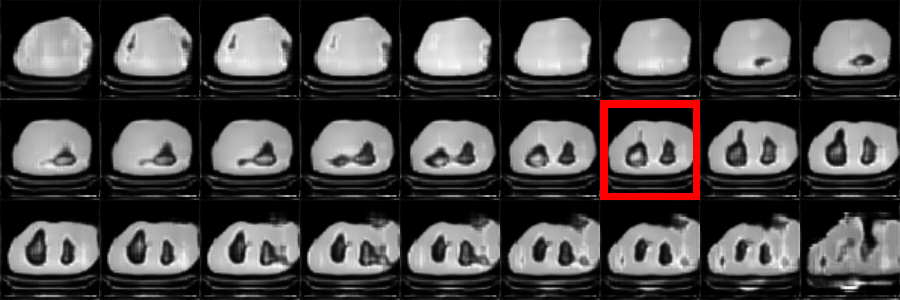}&\includegraphics[width=.29\textwidth,height=2.1cm]{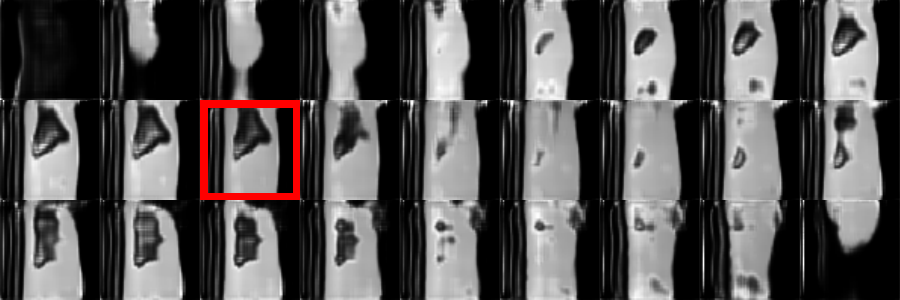}&\includegraphics[width=.29\textwidth,height=2.1cm]{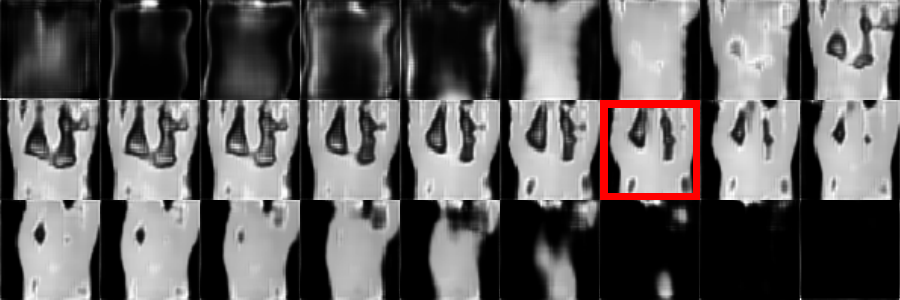}\\
\hline
\rotatebox{90}{{\scriptsize BiGAN+MINE}}&\includegraphics[width=.29\textwidth,height=2.1cm]{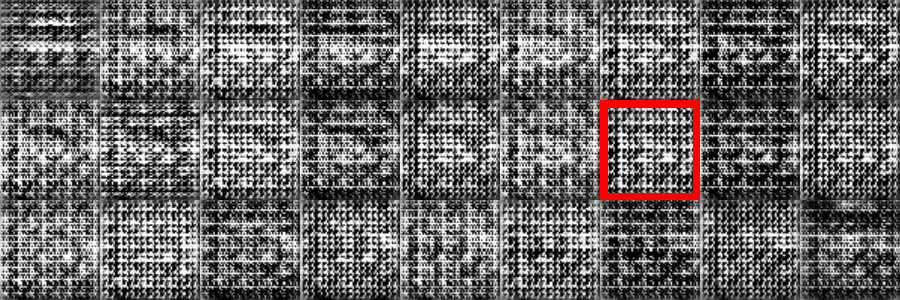}&\includegraphics[width=.29\textwidth,height=2.1cm]{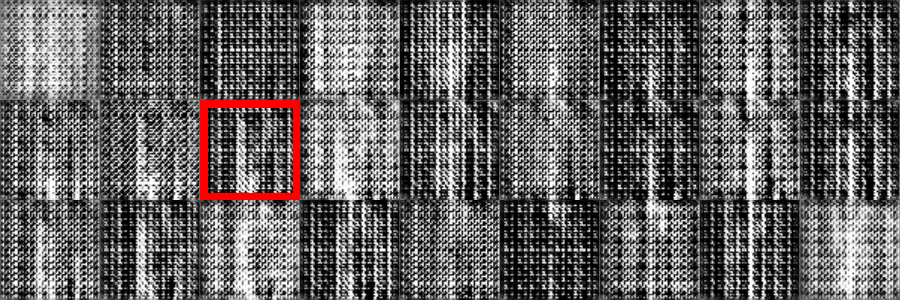}&\includegraphics[width=.29\textwidth,height=2.1cm]{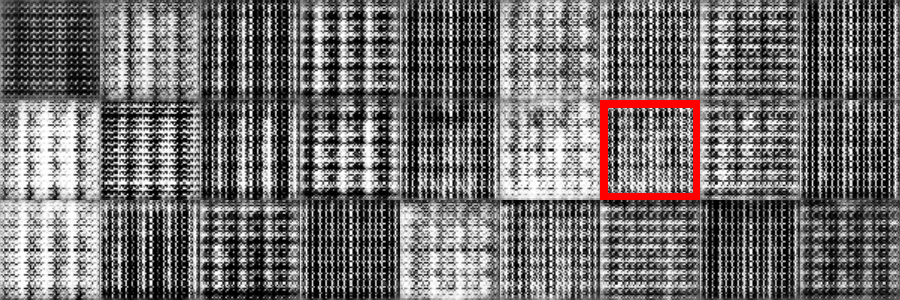}\\
\hline
\multicolumn{4}{|c|}{{\scriptsize Random Samples}}\\
\hline
\end{tabular}
\caption{27 slices of each side of a 3D sample randomly generated from BNPWMMD+DPMINE, BiGAN+MINE, and $\alpha$-WGAN+MINE after 7500 epochs for COVID-19 example.}
\label{random-slices}
\end{figure*}
\begin{figure*}[h!]
\centering\begin{tabular}[h!]{|c||c|c|c|}
\hline
\multicolumn{2}{|c|}{Axial}&Sagittal &Coronal\\
\hline
\rotatebox{90}{{\scriptsize BNPWMMD+DPMINE}}&\includegraphics[width=.29\textwidth,height=2.1cm]{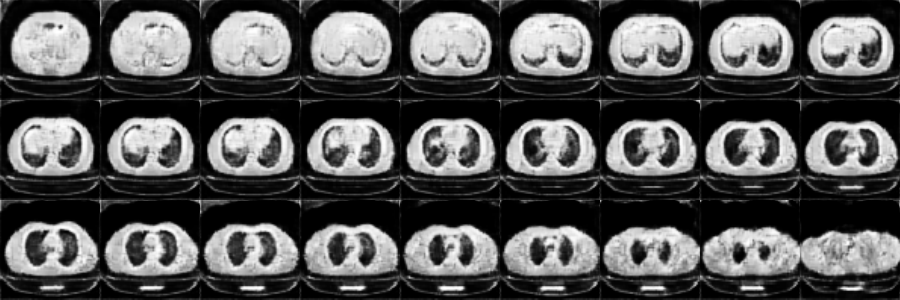}&\includegraphics[width=.29\textwidth,height=2.1cm]{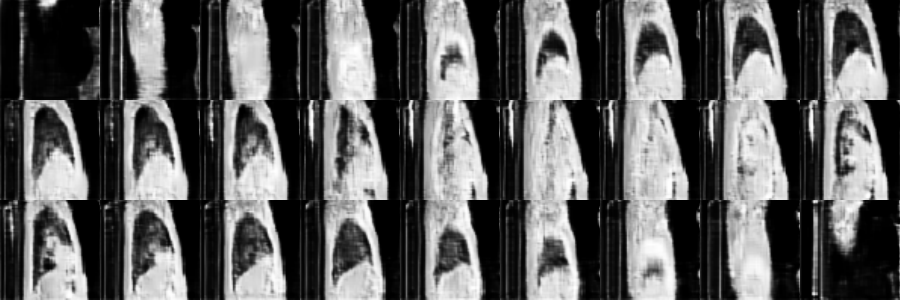}&\includegraphics[width=.29\textwidth,height=2.1cm]{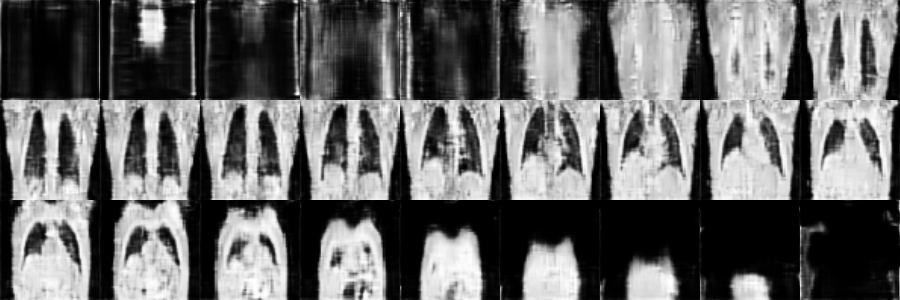}\\
\hline
\rotatebox{90}{{\scriptsize $\alpha$-WGAN+MINE}}&\includegraphics[width=.29\textwidth,height=2.1cm]{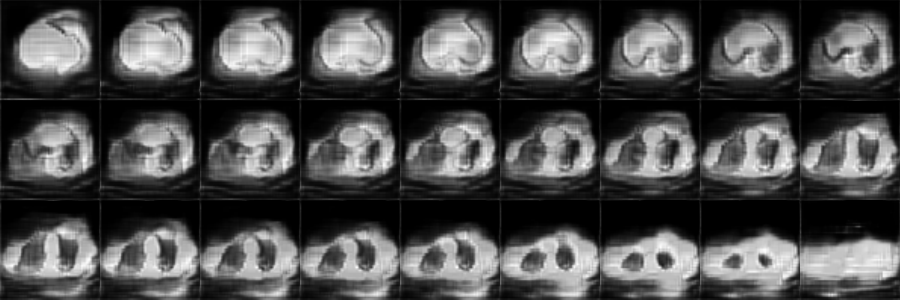}&\includegraphics[width=.29\textwidth,height=2.1cm]{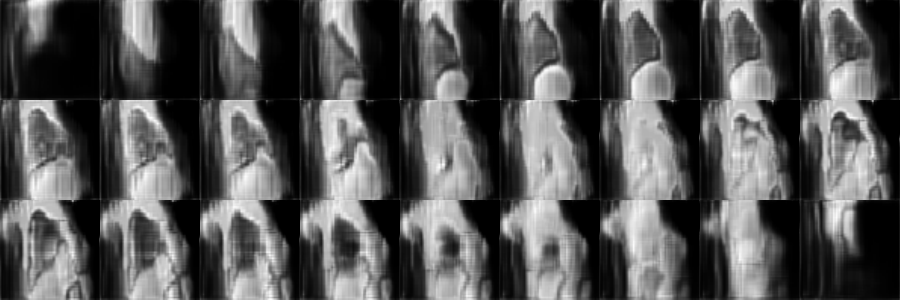}&\includegraphics[width=.29\textwidth,height=2.1cm]{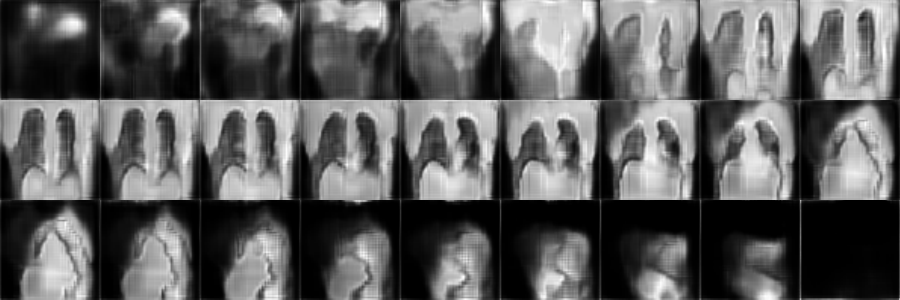}\\
\hline
\rotatebox{90}{{\scriptsize BiGAN+MINE}}&\includegraphics[width=.29\textwidth,height=2.1cm]{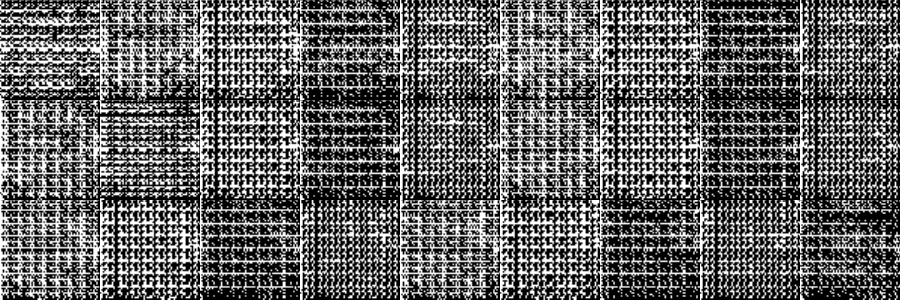}&\includegraphics[width=.29\textwidth,height=2.1cm]{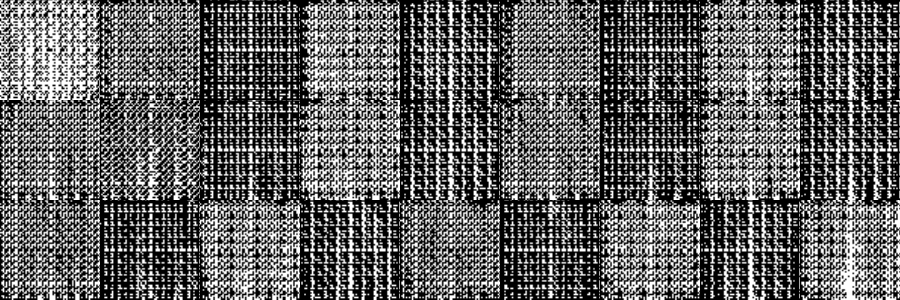}&\includegraphics[width=.29\textwidth,height=2.1cm]{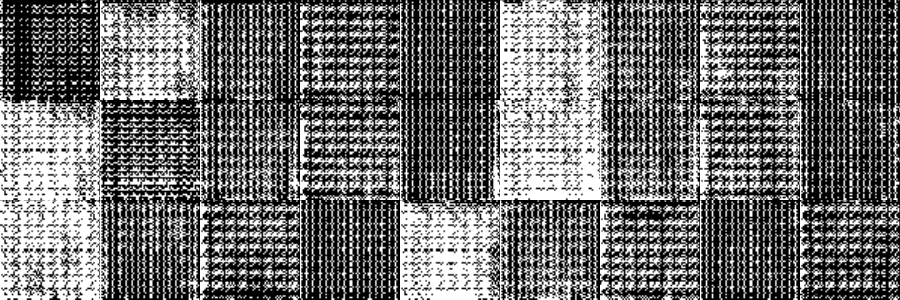}\\
\hline
\multicolumn{4}{|c|}{{\scriptsize Reconstructed Samples}}\\
\hline
\end{tabular}
\caption{27 slices of each side of a 3D sample reconstructed by BNPWMMD+DPMINE, BiGAN+MINE, and $\alpha$-WGAN+MINE after 7500 epochs for COVID-19 example.}
\label{slice-reconstruct}
\end{figure*}

\paragraph{BraTS 2018 Dataset:}
The BRATS 2018 dataset, a benchmark resource in medical imaging available at \url{https://www.med.upenn.edu/sbia/brats2018/data.html}, is employed for training models in the generation of brain tumor MRI scans. For the experiments, data from 210 subjects labeled as ``HGG" (High-Grade Glioma) are used, focusing on patients with aggressive brain tumors. Each subject's MRI data includes four distinct imaging modalities: T1-weighted (T1), T1-weighted with contrast enhancement (T1ce), T2-weighted (T2), and Fluid Attenuated Inversion Recovery (FLAIR). Notably, the FLAIR modality, which is particularly effective for highlighting edema and tumor boundaries, is used for the experiments, providing essential insights for synthetic MRI generation.

The results presented in Figures \ref{fig-cube-brat}-\ref{slice-reconstruct-brat} and Table \ref{table1Brat} reinforce our previous findings on the lung dataset, demonstrating similarly strong performance when applying the methodology to the BraTS dataset. This cross-domain evaluation further validates the model's robustness and adaptability to diverse data types.

\begin{figure}[ht]
\centering
\subfloat[{\tiny Real Dataset}]{\label{fig1}\includegraphics[width=.22\linewidth]{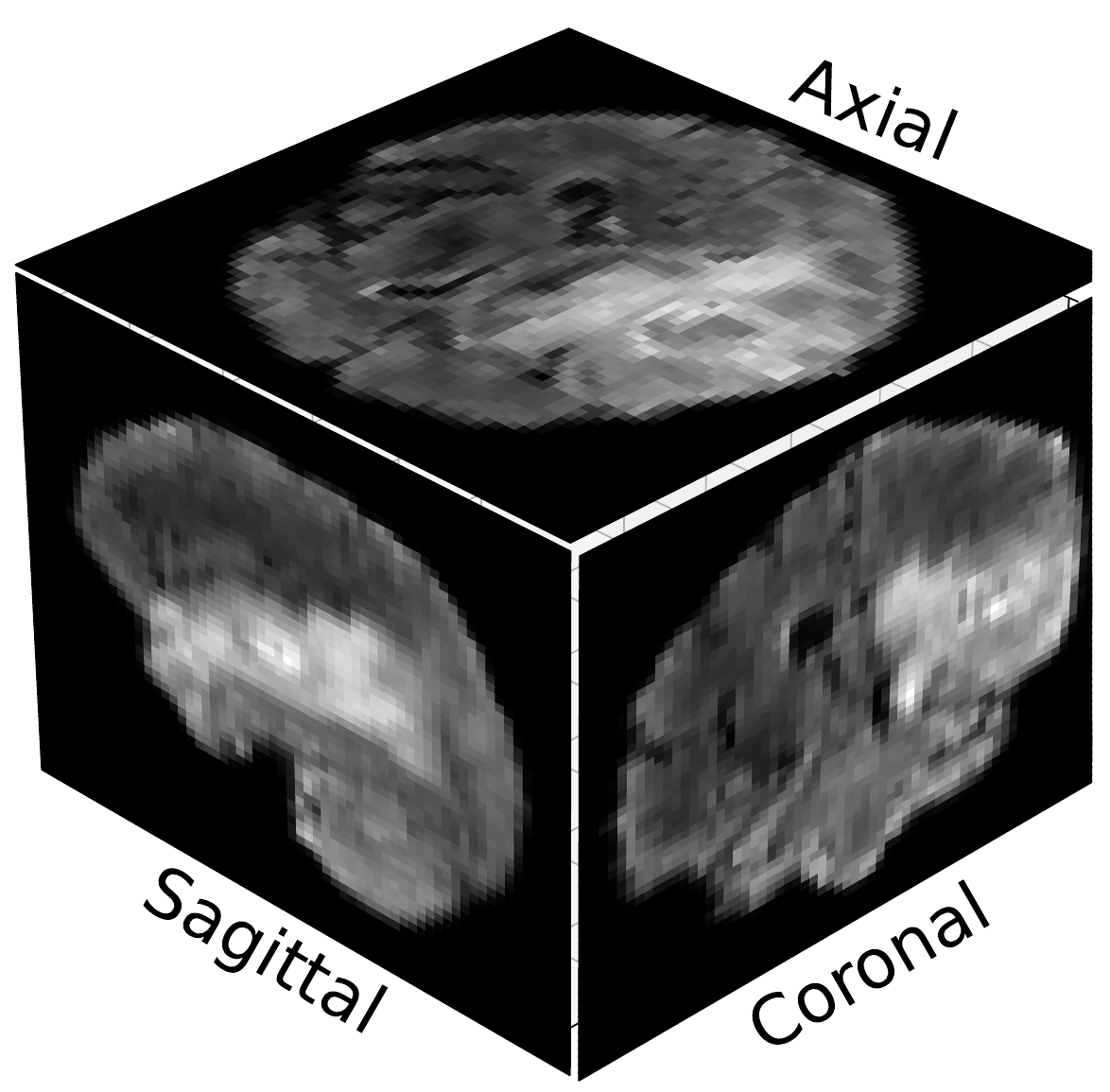}}\quad
\subfloat[{\tiny BNPWMMD+DPMINE}]{\label{fig2}\includegraphics[width=.22\linewidth]{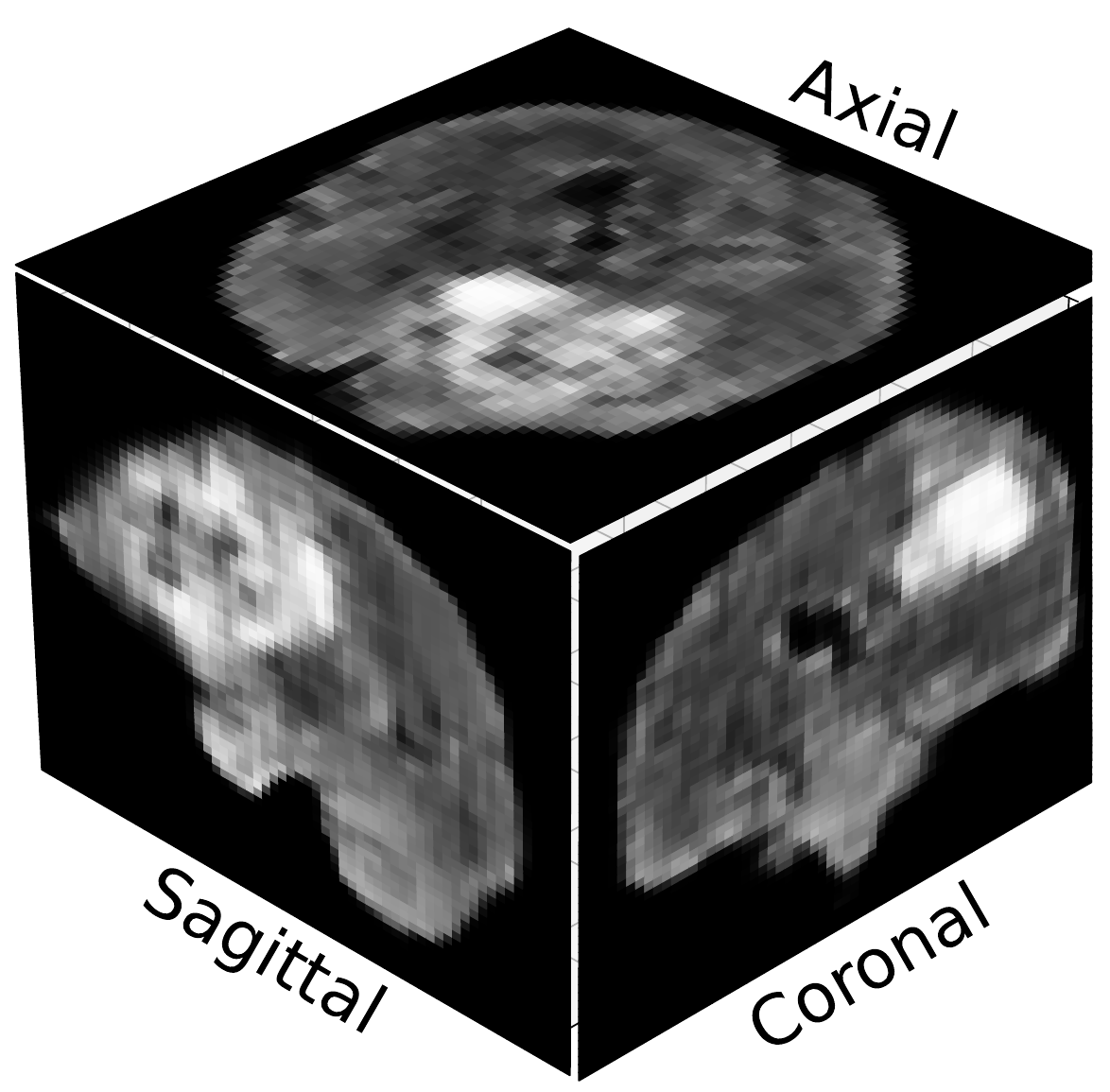}}\quad
\subfloat[{\tiny $\alpha$-WGAN+MINE}]{\label{fig3}\includegraphics[width=.22\linewidth]{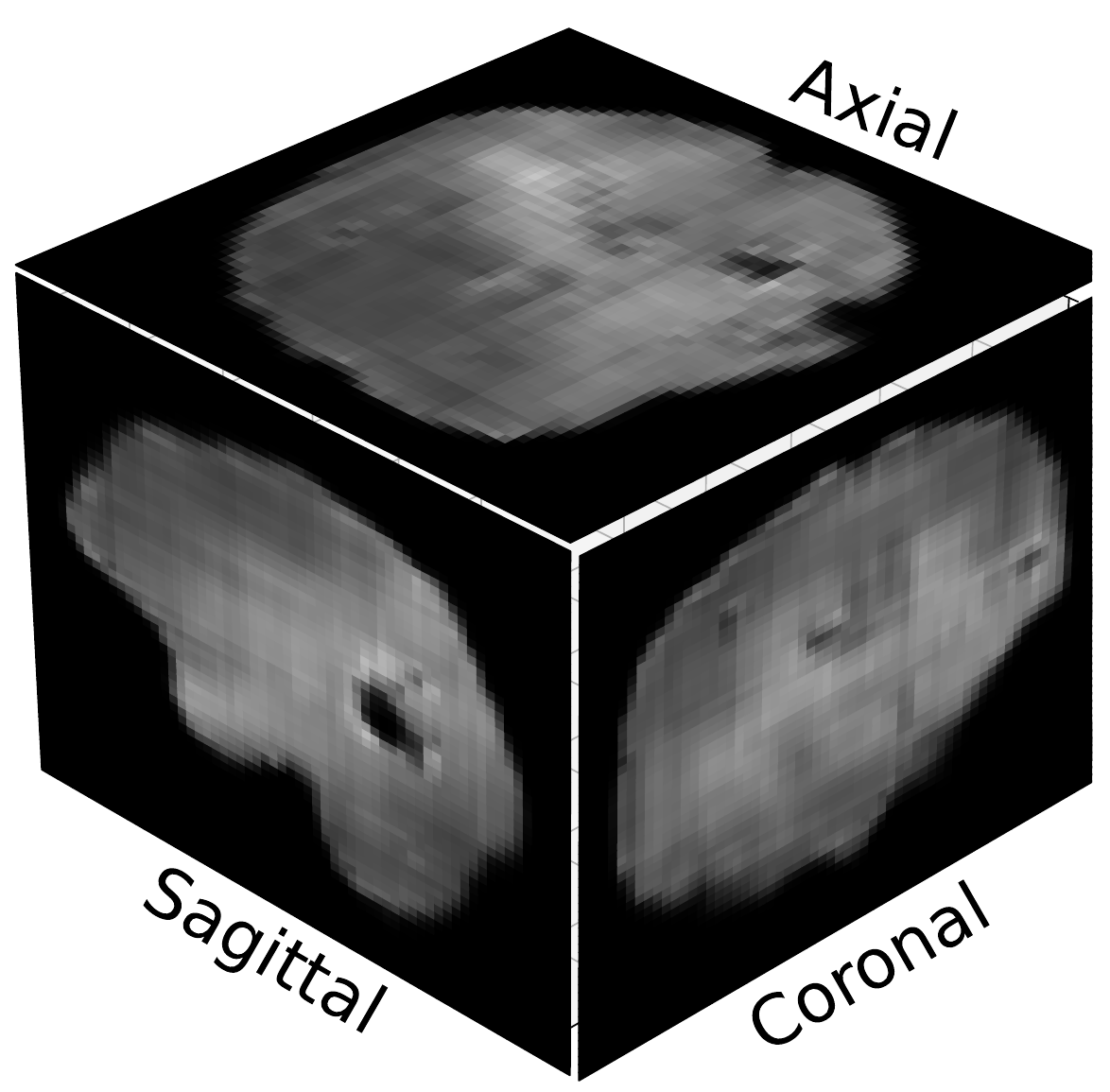}}\quad
\subfloat[{\tiny BiGAN+MINE}]{\label{fig4}\includegraphics[width=.22\linewidth]{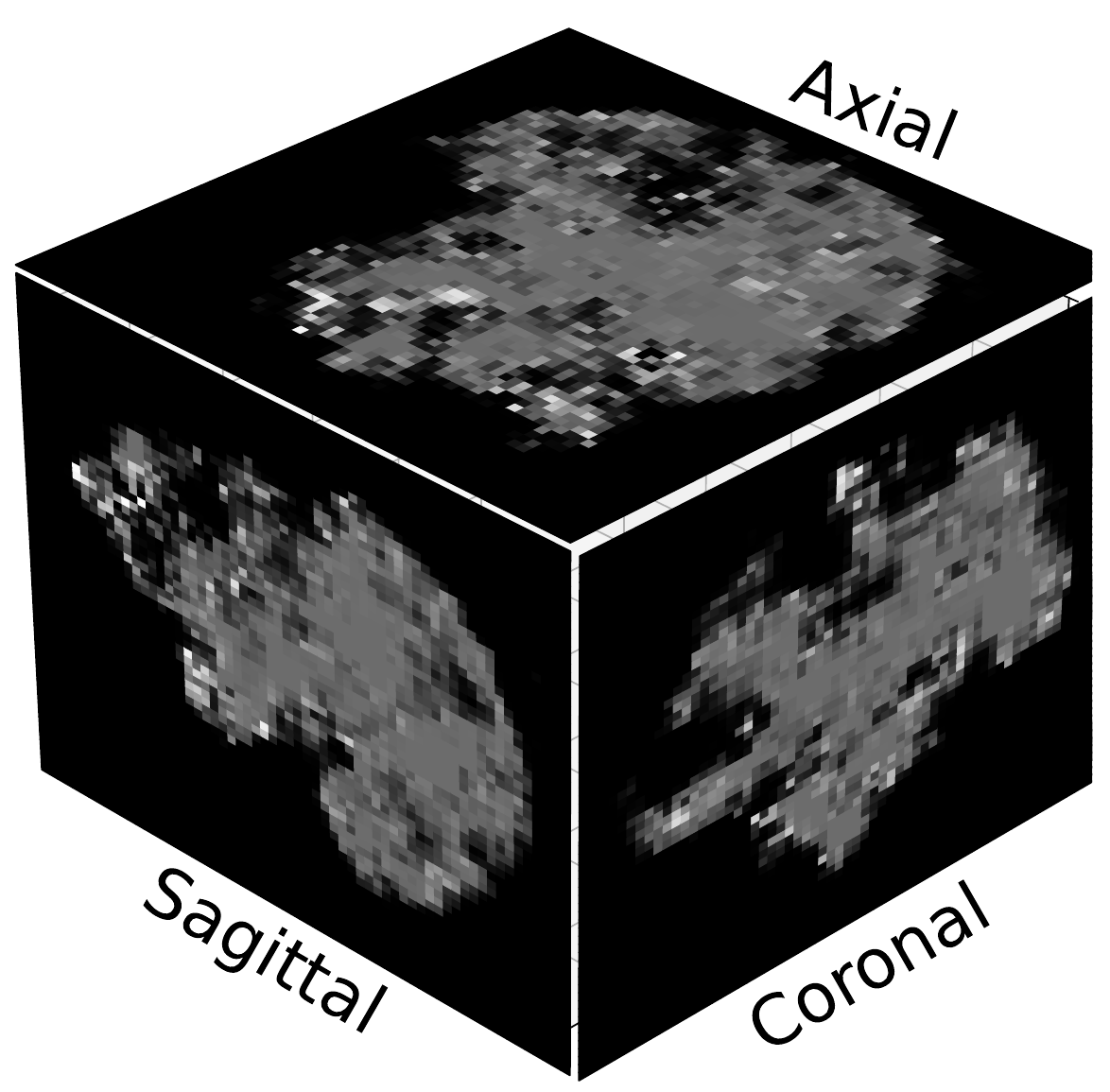}}
\caption{A 3D visualization of a real sample and a randomly generated sample in BraTS18 example.}
\label{fig-cube-brat}
\end{figure}

\begin{figure}[htbp]
  \centering
  \subfloat[{\small t-SNE}]{\includegraphics[width=.45\linewidth]{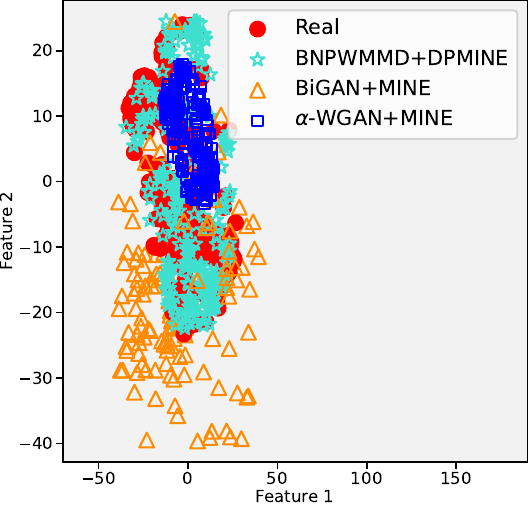}}\quad
  \subfloat[{\small Custom encoder}]{\includegraphics[width=.45\linewidth]{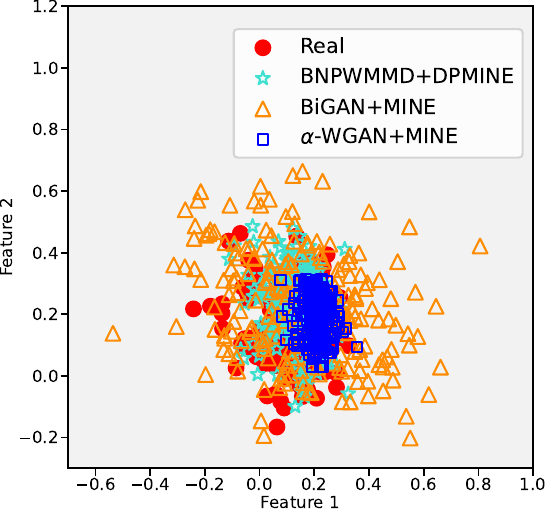}}
  \caption{Scatter plot of 2D features in BraTS18 example.}
  \label{fig1-tsne-brat}
\end{figure}

\begin{figure*}[htbp]
\centering\begin{tabular}[h!]{|c||c|c|c|}
\hline
\multicolumn{2}{|c|}{Axial}&Sagittal &Coronal\\
\hline
\multirow{1}{*}[40pt]{\rotatebox{90}{{\scriptsize Real Dataset}}}&\includegraphics[width=.29\textwidth,height=2.1cm]{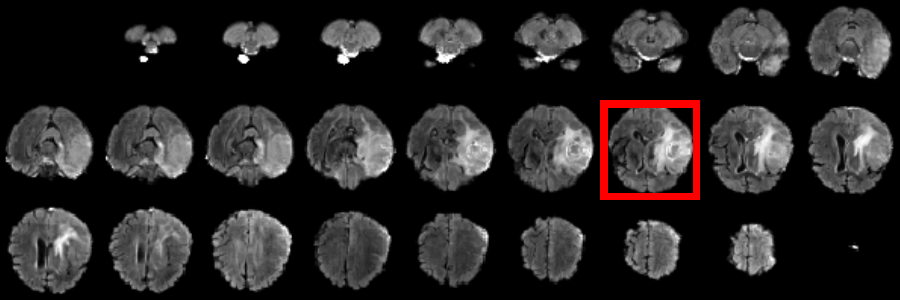}&\includegraphics[width=.29\textwidth,height=2.1cm]{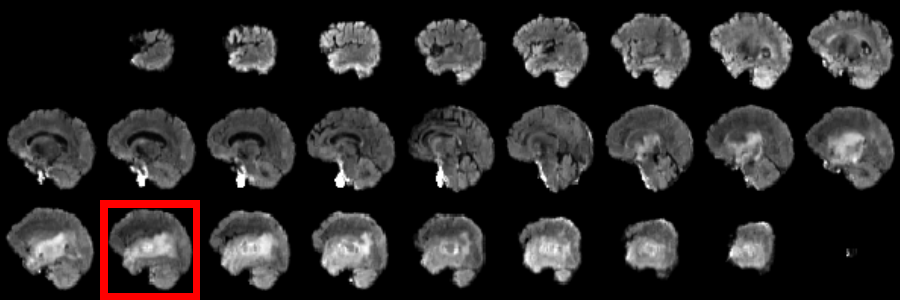}&\includegraphics[width=.29\textwidth,height=2.1cm]{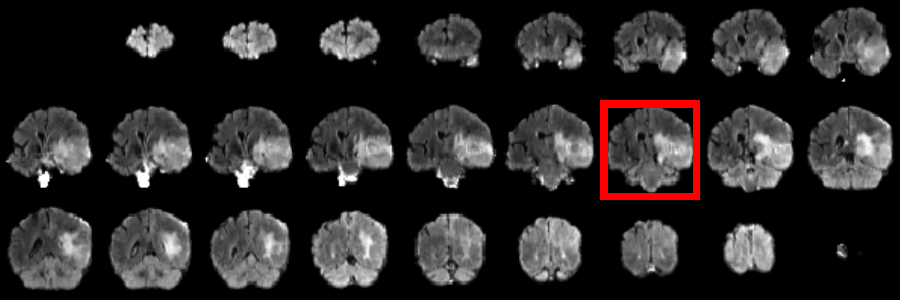}\\
\hline
\hline
\rotatebox{90}{{\scriptsize BNPWMMD+DPMINE}}&\includegraphics[width=.29\textwidth,height=2.1cm]{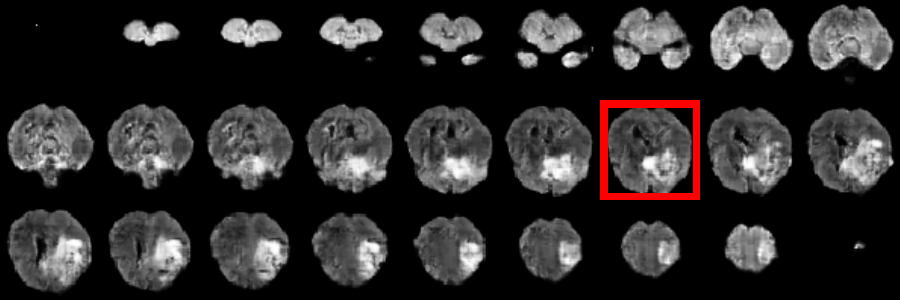}&\includegraphics[width=.29\textwidth,height=2.1cm]{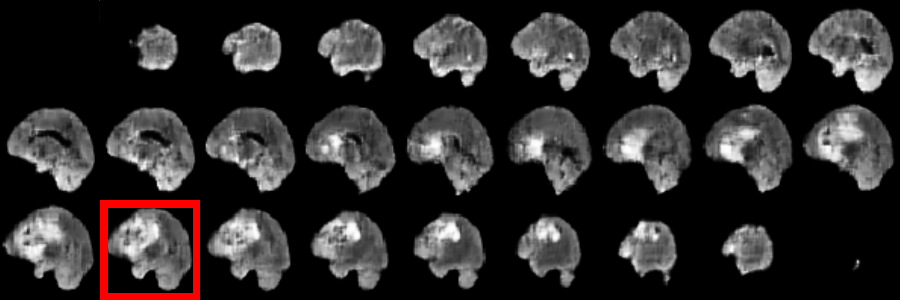}&\includegraphics[width=.29\textwidth,height=2.1cm]{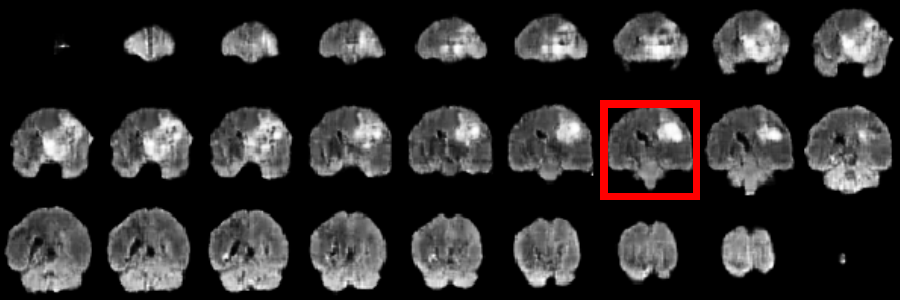}\\
\hline
\rotatebox{90}{{\scriptsize $\alpha$-WGAN+MINE}}&\includegraphics[width=.29\textwidth,height=2.1cm]{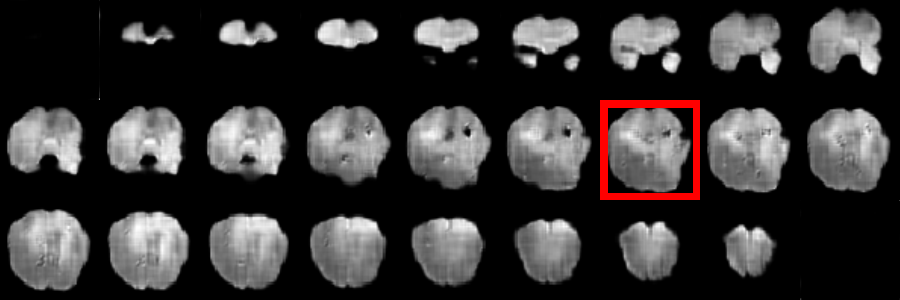}&\includegraphics[width=.29\textwidth,height=2.1cm]{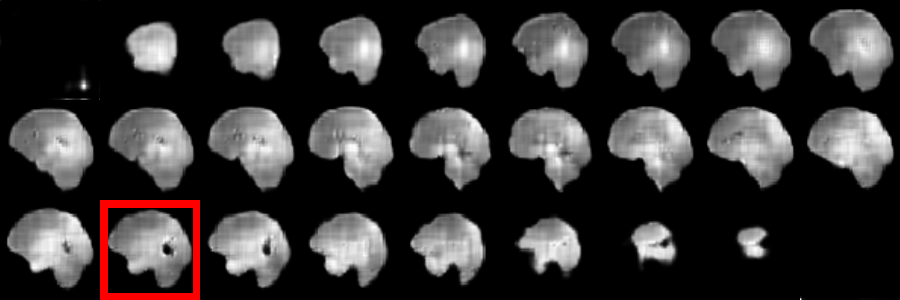}&\includegraphics[width=.29\textwidth,height=2.1cm]{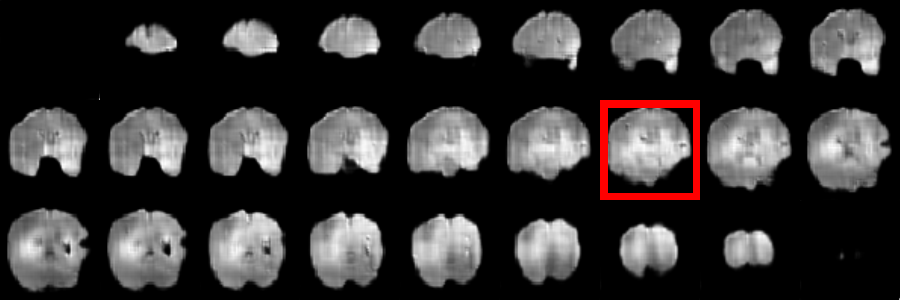}\\
\hline
\rotatebox{90}{{\scriptsize BiGAN+MINE}}&\includegraphics[width=.29\textwidth,height=2.1cm]{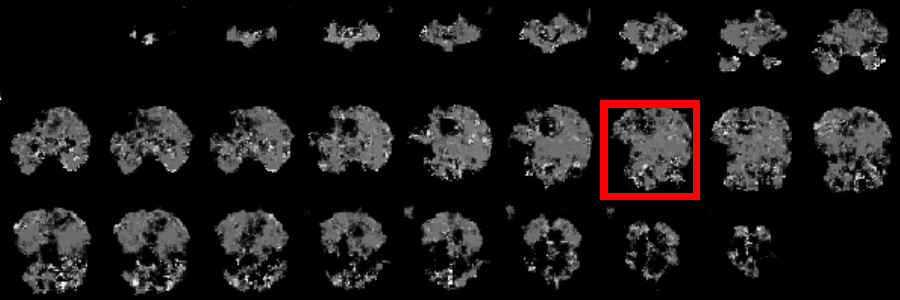}&\includegraphics[width=.29\textwidth,height=2.1cm]{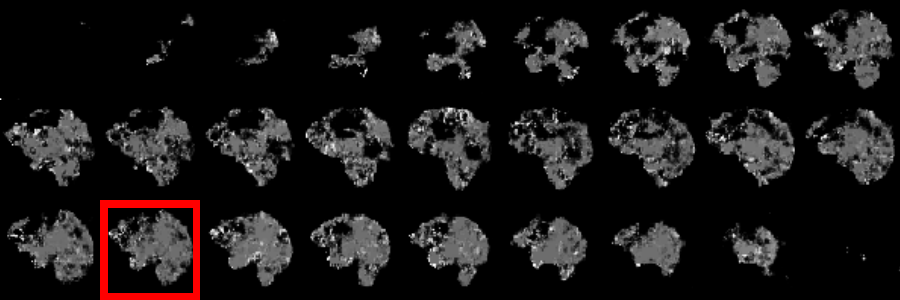}&\includegraphics[width=.29\textwidth,height=2.1cm]{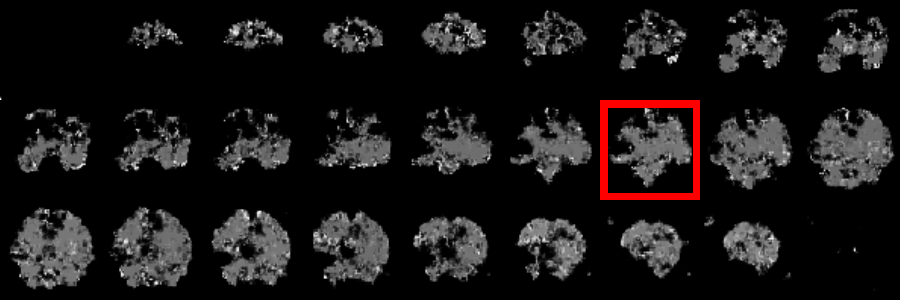}\\
\hline
\multicolumn{4}{|c|}{{\scriptsize Random Samples}}\\
\hline
\end{tabular}
\caption{27 slices of each side of a 3D sample randomly generated from BNPWMMD+DPMINE, BiGAN+MINE, and $\alpha$-WGAN+MINE after 7500 epochs for BraTS18 example.}
\label{random-slices-brat}
\end{figure*}

\begin{figure*}[h!]
\centering\begin{tabular}[h!]{|c||c|c|c|}
\hline
\multicolumn{2}{|c|}{Axial}&Sagittal &Coronal\\
\hline
\rotatebox{90}{{\scriptsize BNPWMMD+DPMINE}}&\includegraphics[width=.29\textwidth,height=2.1cm]{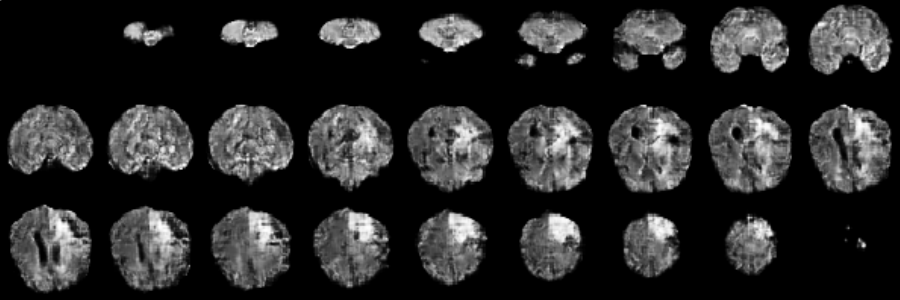}&\includegraphics[width=.29\textwidth,height=2.1cm]{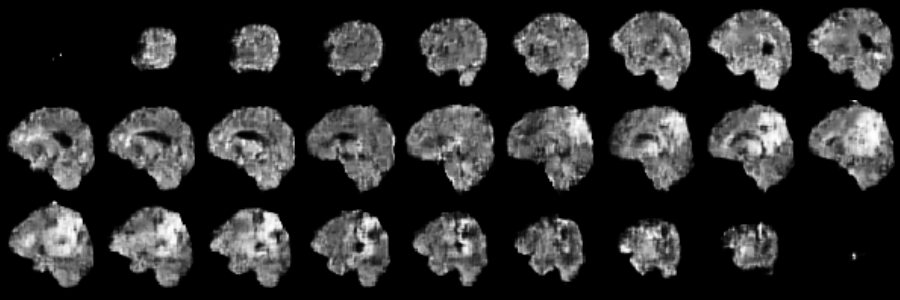}&\includegraphics[width=.29\textwidth,height=2.1cm]{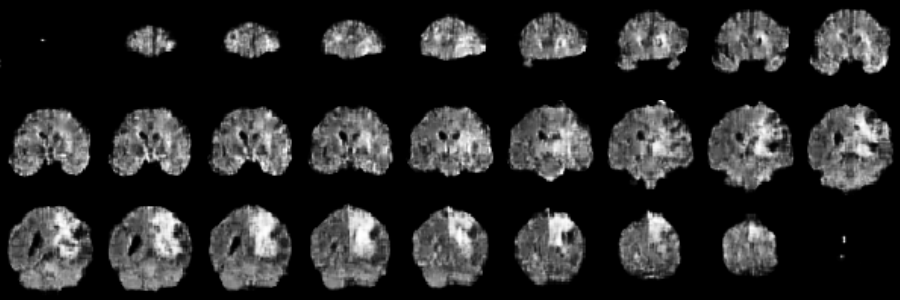}\\
\hline
\rotatebox{90}{{\scriptsize $\alpha$-WGAN+MINE}}&\includegraphics[width=.29\textwidth,height=2.1cm]{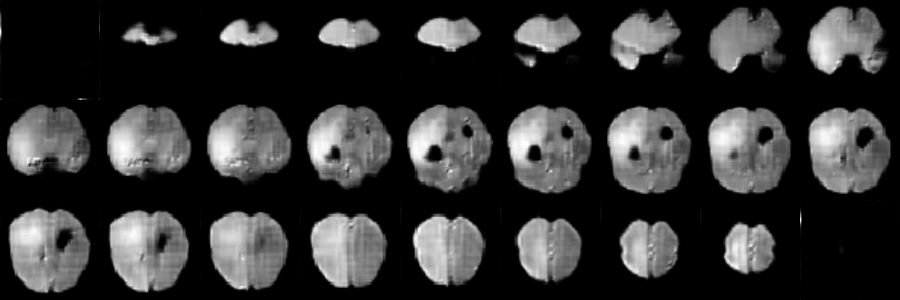}&\includegraphics[width=.29\textwidth,height=2.1cm]{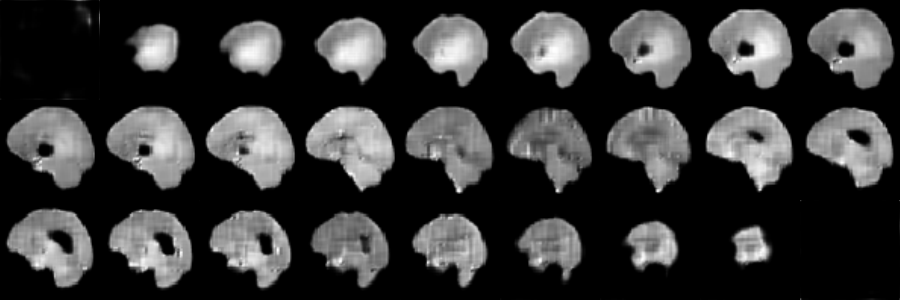}&\includegraphics[width=.29\textwidth,height=2.1cm]{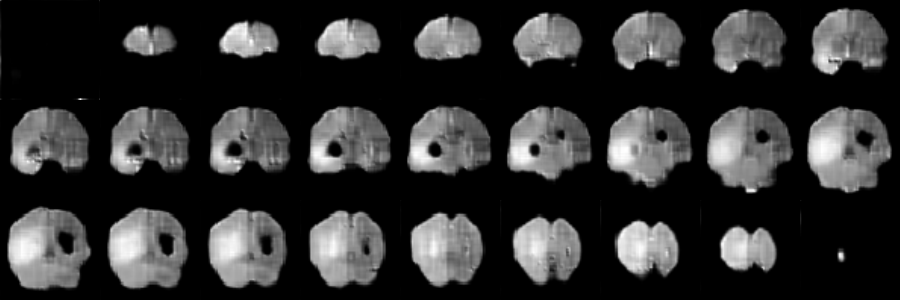}\\
\hline
\rotatebox{90}{{\scriptsize BiGAN+MINE}}&\includegraphics[width=.29\textwidth,height=2.1cm]{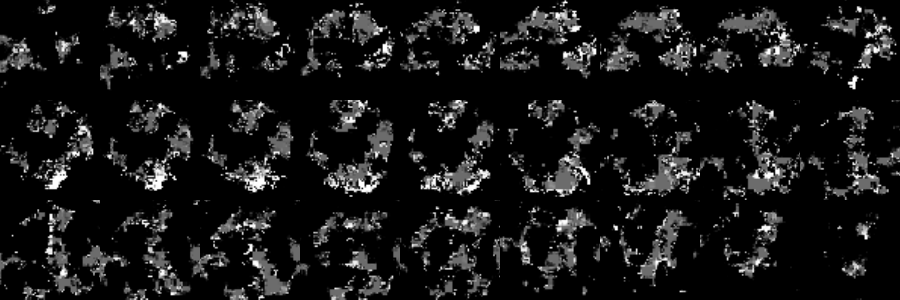}&\includegraphics[width=.29\textwidth,height=2.1cm]{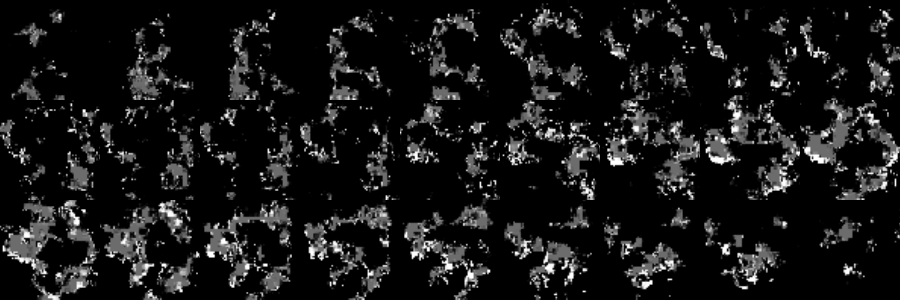}&\includegraphics[width=.29\textwidth,height=2.1cm]{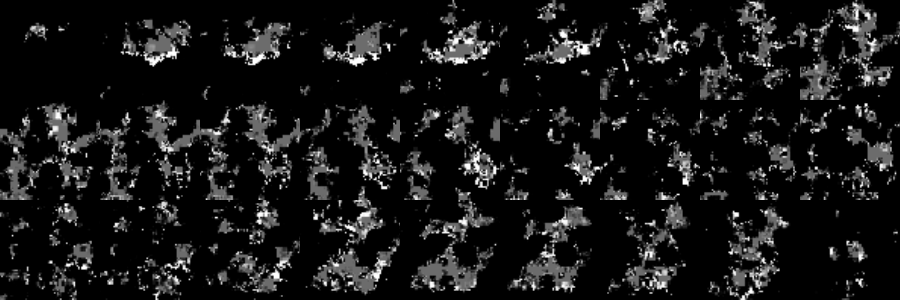}\\
\hline
\multicolumn{4}{|c|}{{\scriptsize Reconstructed Samples}}\\
\hline
\end{tabular}
\caption{27 slices of each side of a 3D sample reconstructed by BNPWMMD+DPMINE, BiGAN+MINE, and $\alpha$-WGAN+MINE after 7500 epochs for BraTS18 example.}
\label{slice-reconstruct-brat}
\end{figure*}

\begin{table}[ht]
\centering
\small
\captionof{table}{Comparison of statistical scores for BraTS18 example.}
\scalebox{0.85}{
\begin{tabular}{ l l| c |c|c }
\hline
\toprule
\multicolumn{2}{l|}{\multirow{2}{*}{Evaluator}} & BNPWMMD&$\alpha$-WGAN & BiGAN \\
&&+DPMINE&+MINE&\hspace{.25 cm}+MINE\\
\hline
\multirow{1}{*}{FID} &Custom Encoder&$\mathbf{0.00197}$ &$0.00692$ &$0.04000$\\
&t-SNE&$\mathbf{25.9317}$&$59.5596$&$5572.95$\\
\hline
\multirow{1}{*}{KID}&Custom Encoder&$\mathbf{0.00038}$ &$0.00229$ &$0.1589$\\ 
&t-SNE&$\mathbf{8.8100}$&$15.1563$&$2723.97$\\
\hline
\multirow{1}{*}{MMD} &Custom Encoder&$\mathbf{0.00075}$ &$0.00410$ &$0.02799$\\
&t-SNE&$\mathbf{0.1988}$&$0.8150$&$2.4023$\\
\hline
 \multirow{1}{*}{MS-SSIM}&&$\mathbf{0.75007}$ &$0.74799$ &$0.20349$\\
\bottomrule
\end{tabular}
}
\label{table1Brat}
\end{table}

\subsection{Implementing details}\label{app:imp-details}
\subsubsection{Calculation of evaluation scores}\label{app:scores}
Considering the real and generated features, $\boldsymbol{f}_r:=(\text{Feature1}_{1:200,r},\text{Feature2}_{1:200,r})$ and $\boldsymbol{f}_g:=(\text{Feature1}_{1:200,g},\text{Feature2}_{1:200,g})$, the FID and KID metrics are calculated using the following Eqs., respectively \cite{binkowski2018demystifying}:
\begin{small}
\begin{align*}
\text{FID}(\boldsymbol{f}_{r},\boldsymbol{f}_{g}) &= ||\boldsymbol{\mu}_{\boldsymbol{f}_{r}}-\boldsymbol{\mu}_{\boldsymbol{f}_{g}}||^2+\text{Tr}\Big(\Sigma_{\boldsymbol{f}_{r}}+\Sigma_{\boldsymbol{f}_{g}}-2\sqrt{\Sigma_{\boldsymbol{f}_{r}}\Sigma_{\boldsymbol{f}_{g}}}\Big),\\
\text{KID}(\boldsymbol{f}_{r},\boldsymbol{f}_{g}) &=\text{MMD}^2(\boldsymbol{f}_{r},\boldsymbol{f}_{g}),
\end{align*}
\end{small}
where $\boldsymbol{\mu}_{\boldsymbol{f}_r}$ and $\boldsymbol{\mu}_{\boldsymbol{f}_g}$, and $\Sigma_{\boldsymbol{f}_r}$ and $\Sigma_{\boldsymbol{g}_r}$ are the mean vector and covariance matrix of $\boldsymbol{f}_r$ and $\boldsymbol{f}_g$, respectively. The empirical MMD metric in \cite{Gretton} is used to compute $\text{MMD}^2(\cdot,\cdot)$, with a polynomial kernel $k(\boldsymbol{f}_{r},\boldsymbol{f}_{g})=(0.5\boldsymbol{f}_{r}^{\text{T}}\boldsymbol{f}_{g}+1)^{\nu}$ and degree $\nu$. The ``$\text{Tr}$'' denotes matrix trace, ``$\text{T}$'' denotes matrix transpose, and ``$||\cdot||$'' denotes Euclidean norm. We use the provided code at \url{https://torchmetrics.readthedocs.io/en/v0.8.2/image/kernel_inception_distance.html} to compute KID with $\nu=3$, and also use the provided code at \url{https://pytorch.org/ignite/generated/ignite.metrics.FID.html} to calculate FID.

Additionally, we calculate MS-SSIM using the available codes at \url{https://torchmetrics.readthedocs.io/en/v0.8.2/image/multi_scale_structural_similarity.html}.

\subsubsection{Network setting}\label{app:network}
In this section, we provide comprehensive information regarding the architectures of all networks used in the experimental results. The encoder network, generator network, and discriminator network architectures, as presented in Tables \ref{tab:encoder_network}, \ref{tab:generator_network}, and \ref{tab:discriminator_network} respectively, were implemented based on \cite{kwon2019generation} for the $\alpha$-WGAN model. The relevant codes\footnote{It is licensed under the
MIT License.} are available at \url{https://github.com/cyclomon/3dbraingen.git}. Moreover, the BNPWMMD model proposed in \cite{fazeli2023bayesian} incorporates these architectures, alongside an additional code generator outlined in Table \ref{tab:CG_network}. 
Figure \ref{DPMINE-BNPWMMD1-diagram} provides a flowchart illustrating the operation process of BNPWMMD+DPMINE.

On the other hand, we have attempted to adapt the generator and discriminator architectures given in \cite{belghazi2018mutual} for generating 3D chest CT images using BiGAN. The generator architecture can be found in Table \ref{tab:dcgan_generator_network}, and the discriminator architecture can be found in Table \ref{tab:dc_bigan_discriminator_network}. We have also considered the encoder architecture provided in Table \ref{tab:encoder_network} for BiGAN. Additionally, Table \ref{tab:T_network_architecture} presents a simple architecture used for updating the parameters $\{T_{\boldsymbol{\gamma}}\}_{\boldsymbol{\gamma}\in\boldsymbol{\Gamma}}$ in the DPMINE and MINE calculations\cite{belghazi2018mutual}. The basic codes\footnote{It is licensed under the
MIT License.} of BiGAN+MINE can be found in \url{https://github.com/gtegner/mine-pytorch.git}.

All architectures have been implemented in PyTorch using the Adam optimizer and a learning rate of $0.0002$ on an NVIDIA Tesla V100-SXM2 with 4 GPUs, each with 32GB of RAM. Our code requires 48-72 hours to provide results for the COVID-19 example. For the coil experiments, our code takes about 4-6 hours.

\begin{figure*}[!t]
\centering
\includegraphics[width=.8\textwidth,height=6.5cm]{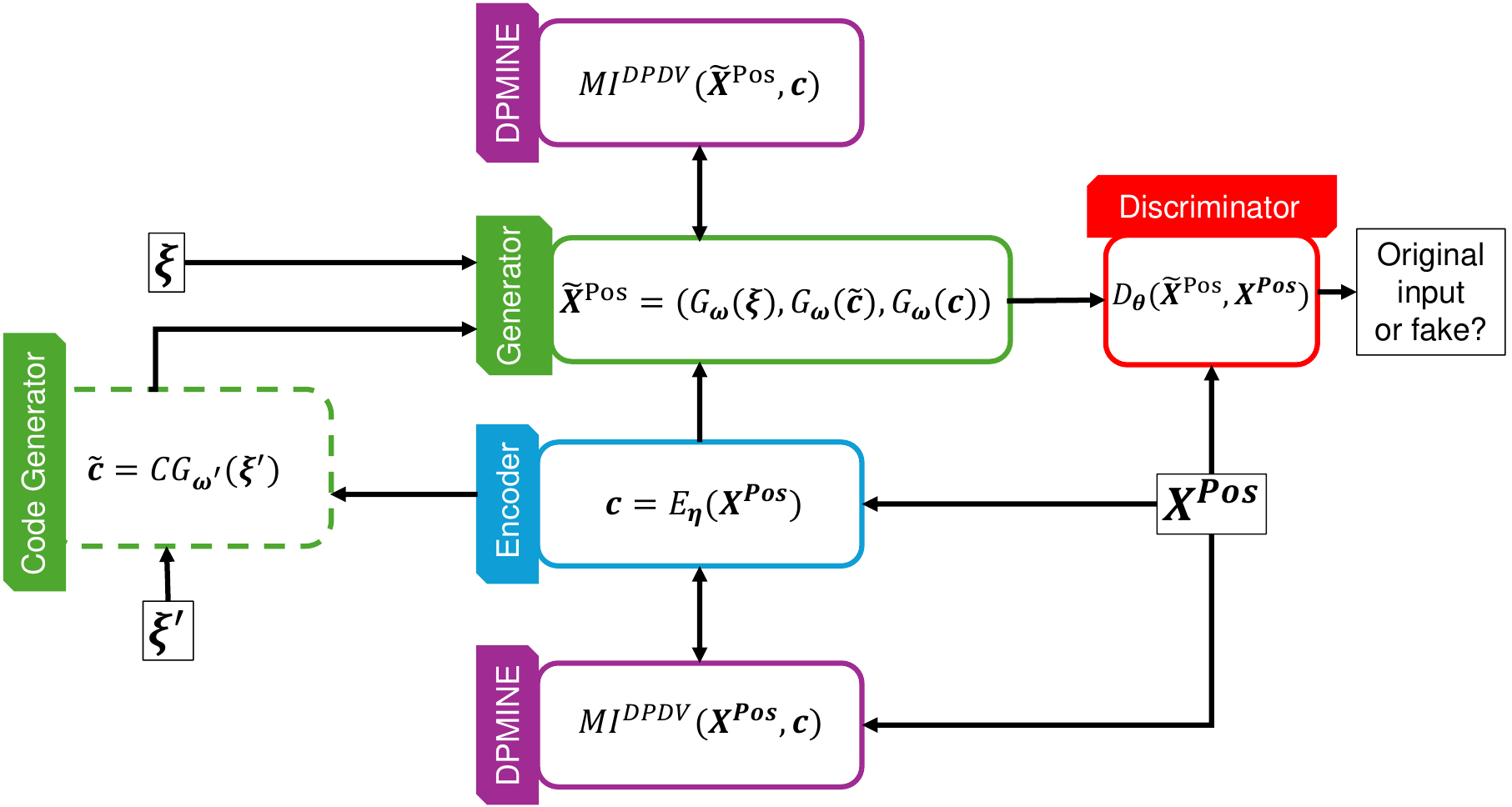}
\caption{A general flowchart to illustrate the actual operation process of BNPWMMD+DPMINE.}
\label{DPMINE-BNPWMMD1-diagram}
\end{figure*}
\begin{table*}[ht]
\centering
\caption{Encoder: A 3D Convolutional Network Architecture for the COVID-19 Dataset used in BNPWMMD-GAN and $\alpha$-WGAN.}
\label{tab:encoder_network}
\small
\setlength{\tabcolsep}{.5pt}
\begin{tabular}{|l|c|c|c|c|c|c|}
\hline
\multirow{2}{*}{\textbf{Layer}} & \textbf{Input} & \textbf{Output} & \textbf{Kernel} & \multirow{2}{*}{\textbf{Stride}} & \multirow{2}{*}{\textbf{Padding}} & \textbf{Activation }\\
&\textbf{dimension}&\textbf{dimension}&\textbf{Size}&&&\textbf{Function}
\\
\hline
\multirow{2}{*}{Convolution} & $1\times64\times64\times64$ & \multirow{2}{*}{$64\times32\times32\times32$} &\multirow{2}{*}{$ 4\times4\times4$} & \multirow{2}{*}{$2$} & \multirow{2}{*}{1} & \multirow{2}{*}{Leaky ReLU} \\
&(Data dimension)&&&&&(negative slope=0.2)\\
\hline
\multirow{2}{*}{Convolution} & \multirow{2}{*}{$64\times32\times32\times32$} & \multirow{2}{*}{$128\times16\times16\times16$} & \multirow{2}{*}{$4\times4\times4$} & \multirow{2}{*}{$2$} & \multirow{2}{*}{1} & Leaky ReLU  \\
&&&&&&(negative slope=0.2)\\
\hline
BatchNorm & $128\times16\times16\times16$ & $128\times16\times16\times16$ & - & - & - & - \\
\hline
\multirow{2}{*}{Convolution} & \multirow{2}{*}{$128\times16\times16\times16$} & \multirow{2}{*}{$256\times8\times8\times8$} & \multirow{2}{*}{$4\times4\times4$} & \multirow{2}{*}{$2$} & \multirow{2}{*}{1} & Leaky ReLU  \\
&&&&&&(negative slope=0.2)\\
\hline
BatchNorm & $256\times8\times8\times8$ & $256\times8\times8\times8$ & - & - & - & - \\
\hline
\multirow{2}{*}{Convolution} & \multirow{2}{*}{$256\times8\times8\times8$} & \multirow{2}{*}{$512\times4\times4\times4$} & \multirow{2}{*}{$4\times4\times4$} & \multirow{2}{*}{$2$} & \multirow{2}{*}{1} & Leaky ReLU  \\
&&&&&&(negative slope=0.2)\\
\hline
BatchNorm & $512\times4\times4\times4$ & $512\times4\times4\times4$ & - & - & - & - \\
\hline
Convolution & $512\times4\times4\times4$ & $1000\times1\times1\times1$ & $4\times4\times4$ & $2$ & 1 & - \\
\hline
\end{tabular}
\end{table*}

\begin{table*}[ht]
\centering
\caption{Code-Generator: A 3D Convolutional Network Architecture for the COVID-19 Dataset used in BNPWMMD-GAN and $\alpha$-WGAN.}
\label{tab:CG_network}
\small
\setlength{\tabcolsep}{.5pt}
\begin{tabular}{|l|c|c|c|c|c|c|}
\hline
\multirow{2}{*}{\textbf{Layer}} & \textbf{Input} & \textbf{Output} & \textbf{Kernel} & \multirow{2}{*}{\textbf{Stride}} & \multirow{2}{*}{\textbf{Padding}} & \textbf{Activation }\\
&\textbf{dimension}&\textbf{dimension}&\textbf{Size}&&&\textbf{Function}
\\
\hline
\multirow{3}{*}{Convolution} & $100$ & \multirow{3}{*}{$16\times2\times5\times10$} &\multirow{3}{*}{$ 3\times3\times3$} & \multirow{3}{*}{$1$} & \multirow{3}{*}{1} & \multirow{3}{*}{ReLU} \\
&(Sub-latent&&&&&\\
& dimension)&&&&&\\
\hline
BatchNorm2 & $16\times2\times5\times10$ & $16\times2\times5\times10$ & - & - & - & - \\
\hline
Max-pooling & $16\times2\times5\times10$ & $16\times2\times5\times5$ & $1\times1\times2$ & $1\times1\times2$& -  & - \\
\hline
Convolution & $16\times2\times5\times5$ & $32\times2\times5\times5$ & $3\times3\times3$ & $1$ & 1 & ReLU \\
\hline
BatchNorm2 & $32\times2\times5\times5$ & $32\times2\times5\times5$ & - & - & - & - \\
\hline
Max-pooling & $32\times2\times5\times5$ & $32\times2\times5\times2$  &$1\times1\times2$ & $1\times1\times2$& -  & - \\
\hline
Convolution & $32\times2\times5\times2$ & $64\times2\times5\times2$ & $3\times3\times3$ & $1$ & 1 & ReLU \\
\hline
BatchNorm3 & $64\times2\times5\times2$ & $64\times2\times5\times2$ & - & - & - & - \\
\hline
Max-pooling & $64\times2\times5\times2$ & $64\times2\times5\times1$  & $1\times1\times2$ & $1\times1\times2$& - & - \\
\hline
Flatten&$64\times2\times5\times1$&$640$&-&-&-&-\\
\hline
Fully-connected & $640$ & $1000$ & - & - & - & - \\
\hline
\end{tabular}
\end{table*}
\begin{table*}[ht]
\centering
\caption{Generator: A 3D Convolutional Network Architecture for the COVID-19 Dataset used in BNPWMMD-GAN and $\alpha$-WGAN.}
\label{tab:generator_network}
\small
\setlength{\tabcolsep}{1pt}
\begin{tabular}{|l|c|c|c|c|c|c|}
\hline
\multirow{2}{*}{\textbf{Layer}} & \textbf{Input} & \textbf{Output} & \textbf{Kernel} & \multirow{2}{*}{\textbf{Stride}} & \multirow{2}{*}{\textbf{Padding}} & \textbf{Activation }\\
&\textbf{dimension}&\textbf{dimension}&\textbf{Size}&&&\textbf{Function}
\\
\hline
\multirow{2}{*}{Transposed} & 1000 & \multirow{2}{*}{$512\times4\times4\times4$} & \multirow{2}{*}{$4\times4\times4$} & \multirow{2}{*}{1} & \multirow{2}{*}{0} & \multirow{2}{*}{ReLU} \\
convolution&(Latent dimension)&&&&&\\
\hline
BatchNorm & $512\times4\times4\times4$ & $512\times4\times4\times4$ & - & - & - & - \\
\hline
Upscale&$512\times4\times4\times4$&$512\times4\times4\times4$&-&-&-&-\\
\hline
Convolution & $512\times4\times4\times4$ & $256\times8\times8\times8$ & $3\times3\times3$ & 1 & 1 & ReLU \\
\hline
BatchNorm & $256\times8\times8\times8$ & $256\times8\times8\times8$ & - & - & - & - \\
\hline
Upscale&$256\times8\times8\times8$&$256\times8\times8\times8$&-&-&-&-\\
\hline
Convolution & $256\times8\times8\times8$ & $128\times16\times16\times16$ & $3\times3\times3$ & 1 & 1 & ReLU \\
\hline
BatchNorm & $128\times16\times16\times16$ & $128\times16\times16\times16$ & - & - & - & - \\
\hline
Upscale&$128\times16\times16\times16$&$128\times16\times16\times16$&-&-&-&-\\
\hline
Convolution & $128\times16\times16\times16$ & $64\times32\times32\times32$ & $3\times3\times3$ & 1 & 1 & ReLU \\
\hline
BatchNorm & $64\times32\times32\times32$ & $64\times32\times32\times32$ & - & - & - & - \\
\hline
Upscale&$64\times32\times32\times32$&$64\times32\times32\times32$&-&-&-&-\\
\hline
Convolution & $64\times32\times32\times32$ & $1\times64\times64\times64$ & $3\times3\times3$ & 1 & 1 & Tanh \\
\hline
\end{tabular}
\end{table*}

\begin{table*}[ht]
\centering
\caption{Discriminator: A 3D Convolutional Network Architecture for the COVID-19 Dataset used in BNPWMMD-GAN and $\alpha$-WGAN.}
\label{tab:discriminator_network}
\small
\setlength{\tabcolsep}{.5pt}
\begin{tabular}{|l|c|c|c|c|c|c|}
\hline
\multirow{2}{*}{\textbf{Layer}} & \textbf{Input} & \textbf{Output} & \textbf{Kernel} & \multirow{2}{*}{\textbf{Stride}} & \multirow{2}{*}{\textbf{Padding}} & \textbf{Activation }\\
&\textbf{dimension}&\textbf{dimension}&\textbf{Size}&&&\textbf{Function}
\\
\hline
\multirow{2}{*}{Convolution} & $1\times64\times64\times64$ & \multirow{2}{*}{$64\times32\times32\times32$} &\multirow{2}{*}{$ 4\times4\times4$} & \multirow{2}{*}{$2$} & \multirow{2}{*}{1} & \multirow{2}{*}{Leaky ReLU } \\
&(Data dimension)&&&&&(negative slope=0.2)\\
\hline
\multirow{2}{*}{Convolution} & \multirow{2}{*}{$64\times32\times32\times32$} & \multirow{2}{*}{$128\times16\times16\times16$} & \multirow{2}{*}{$4\times4\times4$} & \multirow{2}{*}{$2$} & \multirow{2}{*}{1} & Leaky ReLU \\
&&&&&&(negative slope=0.2)\\
\hline
BatchNorm & $128\times16\times16\times16$ & $128\times16\times16\times16$ & - & - & - & - \\
\hline
\multirow{2}{*}{Convolution} & \multirow{2}{*}{$128\times16\times16\times16$} & \multirow{2}{*}{$256\times8\times8\times8$} & \multirow{2}{*}{$4\times4\times4$} & \multirow{2}{*}{$2$} & \multirow{2}{*}{1} & Leaky ReLU \\
&&&&&&(negative slope=0.2)\\
\hline
BatchNorm & $256\times8\times8\times8$ & $256\times8\times8\times8$ & - & - & - & - \\
\hline
\multirow{2}{*}{Convolution} & \multirow{2}{*}{$256\times8\times8\times8$} & \multirow{2}{*}{$512\times4\times4\times4$} & \multirow{2}{*}{$4\times4\times4$} & \multirow{2}{*}{$2$} & \multirow{2}{*}{1} & Leaky ReLU \\
&&&&&&(negative slope=0.2)\\
\hline
BatchNorm & $512\times4\times4\times4$ & $512\times4\times4\times4$ & - & - & - & - \\
\hline
Convolution & $512\times4\times4\times4$ & $1\times1\times1\times1$ & $4\times4\times4$ & $2$ & 1 & Sigmoid \\
\hline
\end{tabular}
\end{table*}
\begin{table*}[ht]
\centering
\caption{Generator: A 3D Convolutional Network Architecture for the COVID-19 Dataset used in BiGAN.}
\label{tab:dcgan_generator_network}
\small
\setlength{\tabcolsep}{1pt}
\begin{tabular}{|l|c|c|c|c|c|c|}
\hline
\multirow{2}{*}{\textbf{Layer}} & \textbf{Input} & \textbf{Output} & \textbf{Kernel} & \multirow{2}{*}{\textbf{Stride}} & \multirow{2}{*}{\textbf{Padding}} & \textbf{Activation }\\
&\textbf{dimension}&\textbf{dimension}&\textbf{Size}&&&\textbf{Function}
\\
\hline
\multirow{2}{*}{Fully-connected} & 1000 & \multirow{2}{*}{$4\times4\times4\times512$} & \multirow{2}{*}{-} & \multirow{2}{*}{-} & \multirow{2}{*}{-} & \multirow{2}{*}{-} \\
&(latent dimension)&&&&&\\
\hline
Transposed  & \multirow{2}{*}{$512\times4\times4\times4$} & \multirow{2}{*}{$256\times8\times8\times8$} & \multirow{2}{*}{$4\times4\times4$} & \multirow{2}{*}{2} & \multirow{2}{*}{1} & \multirow{2}{*}{ReLU} \\
convolution&&&&&&\\
\hline
Transposed  & \multirow{2}{*}{$256\times8\times8\times8$} & \multirow{2}{*}{$128\times16\times16\times16$} & \multirow{2}{*}{$4\times4\times4$} & \multirow{2}{*}{2} & \multirow{2}{*}{1} & \multirow{2}{*}{ReLU} \\
convolution&&&&&&\\
\hline
Transposed  & \multirow{2}{*}{$128\times16\times16\times16$} & \multirow{2}{*}{$64\times32\times32\times32$} & \multirow{2}{*}{$4\times4\times4$} & \multirow{2}{*}{2} & \multirow{2}{*}{1} & \multirow{2}{*}{ReLU} \\
convolution&&&&&&\\
\hline
Transposed  & \multirow{2}{*}{$64\times32\times32\times32$} & \multirow{2}{*}{$1\times64\times64\times64$} & \multirow{2}{*}{$4\times4\times4$} & \multirow{2}{*}{2} & \multirow{2}{*}{1} & \multirow{2}{*}{Tanh} \\
convolution&&&&&&\\
\hline
\end{tabular}
\end{table*}

\begin{table*}[ht]
\centering
\caption{Discriminator: A 3D Convolutional Network Architecture for the COVID-19 Dataset used in BiGAN.}
\label{tab:dc_bigan_discriminator_network}
\small
\setlength{\tabcolsep}{1pt}
\begin{tabular}{|l|c|c|c|c|c|c|}
\hline
\multirow{2}{*}{\textbf{Layer}} & \textbf{Input} & \textbf{Output} & \textbf{Kernel} & \multirow{2}{*}{\textbf{Stride}} & \multirow{2}{*}{\textbf{Padding}} & \textbf{Activation }\\
&\textbf{dimension}&\textbf{dimension}&\textbf{Size}&&&\textbf{Function}
\\
\hline
\multirow{2}{*}{Convolution} & $1\times64\times64\times64$ & \multirow{2}{*}{$64\times32\times32\times32$} & \multirow{2}{*}{$5\times5\times5$} & \multirow{2}{*}{2} & \multirow{2}{*}{2} & \multirow{2}{*}{LeakyReLU} \\
&(Data dimension)&&&&&\\
\hline
Convolution & $64\times32\times32\times32$ & $128\times16\times16\times16$ & $5\times5\times5$ & 2 & 2 & LeakyReLU \\
\hline
Convolution & $128\times16\times16\times16$ & $256\times8\times8\times8$ & $5\times5\times5$ & 2 & 2 & LeakyReLU \\
\hline
Flatten & $256\times8\times8\times8$ & $131072$ & - & - & - & - \\
\hline
Concatenate  & $1000+131072$ & $132072$ & - & - & - & - \\
latent-variable&&&&&&\\
\hline
Fully-connected & $132072$ & $1024$ & - & - & - & - \\
\hline
Fully-connected & $1024$ & $1$ & - & - & - & Sigmoid \\
\hline
\end{tabular}
\end{table*}


\begin{table*}[ht]
\centering
\caption{$T_{\boldsymbol{\gamma}}$-network Architecture used in DPMINE and MINE computation.}
\label{tab:T_network_architecture}
\small
\setlength{\tabcolsep}{1pt}
\begin{tabular}{|l|c|c|c|}
\hline
\multirow{2}{*}{\textbf{Layer}} & \multirow{2}{*}{\textbf{Input Dimension}} & \multirow{2}{*}{\textbf{Output Dimension}} & \textbf{Activation} \\
&&&\textbf{ Function}
\\
\hline
\multirow{2}{*}{concatenation}
 & $1000+64^3$ & \multirow{2}{*}{$263144$} & \multirow{2}{*}{-} \\
 &(Latent dimension + Data dimension)&&\\
\hline
Linear & $263144$ & $400$ & ReLU \\
\hline
Linear & $400$ & $400$ & ReLU \\
\hline
Linear & $400$ & $400$ & ReLU \\
\hline
Linear & $400$ & $1$ & ReLU \\
\hline
\end{tabular}
\end{table*}

\section{Limitations}\label{app:limitation}
The BNP generative model proposed in this paper assumes that the training data follows the IID assumption. However, this assumption restricts the model's applicability to certain privacy-preserving techniques, such as federated learning, which often rely on non-IID assumptions at the local device level. While this limitation is acknowledged, it is beyond the scope of the current work to address it. In future research, we plan to extend the proposed BNP model to incorporate non-IID assumptions and apply it to federated learning. This extension aims to mitigate issues such as slow convergence and unfair predictions that may arise when applying a global model to unseen data. 

\section{Broad impacts and safeguards}\label{app:Broad-Safe}
One potential positive impact of this paper is the advancement of artificial intelligence (AI) techniques in healthcare. The proposed model offers a solution to the challenge of accessing high-quality, diverse medical datasets for clinical decision-making. This can lead to improved accuracy and effectiveness in diagnosing and treating diseases, ultimately benefiting patients and healthcare providers. Additionally, the incorporation of a BNP procedure helps uncover underlying structures in the data and reduce overfitting, enhancing the reliability of the generated samples.

However, the use of AI in healthcare also raises concerns about data privacy and security. It is important to prioritize patient privacy and data protection by implementing strict regulations and protocols to ensure that patient data is anonymized and protected. Additionally, there is a risk of bias in the generated samples, which could lead to disparities in healthcare outcomes if the models are not properly validated and tested on diverse populations. Therefore, transparency and accountability in the development and deployment of these AI models are necessary. Rigorous testing and validation on diverse datasets should be conducted to mitigate bias and ensure fairness. Collaboration between healthcare professionals, AI researchers, and policymakers is essential to establish guidelines and regulations that prioritize patient privacy, data protection, and equitable healthcare outcomes.

{\noindent \em Remainder omitted in this sample. See http://www.jmlr.org/papers/ for full paper.}

\vskip 0.2in
\bibliography{Bibliography-MM-MC}

\end{document}